\tikzset{annot/.style={draw=black,fill=white,text=black}}
\newcommand{\bbeta}{\mbox{\boldmath  $\beta$}}
\newcommand{\bSigma}{\mbox{\boldmath $\Sigma$}}
\newcommand{\bmu}{\mbox{\boldmath   $\mu$}}
\newcommand{\Ymat}{{\bf Y}}
\newcommand{\Zmat}{{\bf Z}}
\newcommand{\Bmat}{{\bf B}}
\newcommand{\Mmat}{{\bf M}}
\newcommand{\Cmat}{{\bf C}}
\newcommand{\Dmat}{{\bf D}}
\newcommand{\Amat}{{\bf A}}
\newcommand{\Nmat}{{\bf N}}
\newcommand{\Hmat}{{\bf H}}
\newcommand{\Rmat}{{\bf R}}
\newcommand{\Emat}{{\bf E}}
\newcommand{\Vmat}{{\bf V}}
\newcommand{\Imat}{{\bf I}}
\newcommand{\Xmat}{{\bf X}}
\newcommand{\Wmat}{{\bf W}}
\newcommand{\Pmat}{{\bf P}}
\newcommand{\Qmat}{{\bf Q}}
\newcommand{\bx}{{\bf x}}
\newcommand{\bv}{{\bf v}}
\newcommand{\bn}{{\bf n}}
\newcommand{\bP}{{\bf P}}
\newcommand{\bz}{{\bf z}}
\newcommand{\by}{{\bf y}}
\newcommand{\bu}{{\bf u}}
\newcommand{\bF}{{\bf F}}
\newcommand{\bU}{{\bf U}}
\DeclareMathOperator*{\argmin}{argmin}
\newtheorem{theorem}{Theorem}
\newtheorem{lemma}{Lemma}
\newtheorem{corollary}{Corollary}
\begin{document}

	%
	% paper title
	% Titles are generally capitalized except for words such as a, an, and, as,
	% at, but, by, for, in, nor, of, on, or, the, to and up, which are usually
	% not capitalized unless they are the first or last word of the title.
	% Linebreaks \\ can be used within to get better formatting as desired.
	% Do not put math or special symbols in the title.
	\title{Scene-Adapted Plug-and-Play Algorithm with Guaranteed Convergence: Applications to Data Fusion in Imaging}
	%
	%
	% author names and IEEE memberships
	% note positions of commas and nonbreaking spaces ( ~ ) LaTeX will not break
	% a structure at a ~ so this keeps an author's name from being broken across
	% two lines.
	% use \thanks{} to gain access to the first footnote area
	% a separate \thanks must be used for each paragraph as LaTeX2e's \thanks
	% was not built to handle multiple paragraphs
	%
	
	\author{Afonso~M.~Teodoro,~\IEEEmembership{Student Member,~IEEE,}
		Jos\'{e}~M.~Bioucas-Dias,~\IEEEmembership{Fellow,~IEEE,}
		and~M\'{a}rio~A.~T.~Figueiredo,~\IEEEmembership{Fellow,~IEEE}% <-this % stops a space
		\thanks{Afonso~M.~Teodoro, Jos\'{e}~M.~Bioucas-Dias, and~M\'{a}rio~A.~T.~Figueiredo are with Instituto de Telecomunica\c{c}\~{o}es and Instituto Superior T\'{e}cnico, 1049-001, Lisboa, Portugal e-mail: \{afonso.teodoro, jose.bioucas, mario.figueiredo\}@tecnico.ulisboa.pt.}% <-this % stops a space
		\thanks{This work was partially supported by the {\it Funda\c{c}\~ao para a Ci\^encia e Tecnologia} (FCT), grants UID/EEA/5008/2013 and BD/102715/2014.}}% <-this % stops a space
	\maketitle
	\begin{abstract}
		The recently proposed \textit{plug-and-play} (PnP) framework allows leveraging recent developments in image denoising to tackle other, more involved, imaging inverse problems. In a PnP method,  a black-box denoiser is  \textit{plugged} into an iterative algorithm, taking the place of a formal denoising step that corresponds to the proximity operator of some convex regularizer.  While this approach offers flexibility and excellent performance, convergence of the resulting algorithm may be hard to analyze, as most state-of-the-art denoisers lack an explicit underlying objective function.  In this paper, we propose a PnP approach where a scene-adapted prior (\textit{i.e.}, where the denoiser is  targeted to the specific scene being imaged) is plugged into ADMM (alternating direction method of multipliers), and prove convergence  of the resulting algorithm. Finally, we apply the proposed framework in two different imaging inverse problems: hyperspectral sharpening/fusion and image deblurring from blurred/noisy image pairs.
	\end{abstract}
	\begin{IEEEkeywords}
		Plug-and-play, scene-adapted prior, Gaussian mixture, hyperspectral sharpening, deblurring image pairs.
	\end{IEEEkeywords}
	
	\section{Introduction}
	\label{sec:intro}
	
	%A different story has been told in the image denoising field, where some have argued that we may have reached the maximum possible performance, using general purpose methods \cite{Chatterjee}. 
	Image denoising is one of the most studied problems at the core of image processing. It has been questioned if it is a solved problem, given the excellent performance of several state-of-the-art methods, which seem to be approaching some sort of theoretical limit \cite{Chatterjee}. The answer to this question is negative: denoising is still a very active area of research, not only in itself, but also as a building block for many other tasks. In fact, most algorithms for imaging inverse problems (\textit{e.g.,} deblurring, reconstruction, or super-resolution) include a step that is formally equivalent to a denoising operation. 
		
	A possible approach to leverage state-of-the-art denoising methods in other inverse problems is to generalize/extend their underlying rationale to these other tasks \cite{danielyan,Mignotte,Papyan2016}. The main drawbacks of this approach are that, most often, it is strongly problem-dependent (\textit{i.e.}, different problems require different generalizations) and it may lead to computationally hard problems. The recently proposed \textit{plug-and-play} (PnP) approach aims at sidestepping  these drawbacks and still be able to capitalize on state-of-the-art denoisers. PnP  methods work by plugging a black-box denoiser into an iterative algorithm that address the inverse problem in hand. These algorithms typically include the \textit{proximity operator} \cite{Bauschke}  of a convex regularizer, which is formally equivalent to a denoising operation (under Gaussian white noise), with the convexity of the regularizer playing a central role in the convergence properties of the algorithm. The PnP approach replaces the proximity operator with some black-box denoiser \cite{Venkatakrishnan,Sreehari}, the key advantage being that the same denoiser can be used for different problems, since its use is fully decoupled from the particularities of the inverse problem being addressed.
	
	Replacing the \textit{proximity operator} with some denoiser makes the convergence  of the resulting algorithm  difficult to analyze. While the original algorithm optimizes an objective function, the PnP version lacks an explicit objective function. In the original PnP scheme \cite{Venkatakrishnan,Sreehari}, the iterative algorithm in which the denoiser is plugged-in is the \textit{alternating direction method of multipliers} (ADMM) \cite{boyd,AfonsoCSALSA2010}. PnP-ADMM allows using an arbitrary denoiser to regularize an imaging inverse problem, such as deblurring or super-resolution, tackled via ADMM, but departing from its standard use, where the denoiser is the proximity operator of a convex regularizer \cite{boyd}. (Another recent approach, termed \textit{regularization by denoising}, follows a different route by building an explicit regularizer based on a denoiser \cite{RED}.) The fact that a denoiser, which may lack an explicit objective function or a closed-form, is plugged into ADMM begs three obvious questions \cite{chan,Venkatakrishnan,Sreehari}: 
	\begin{itemize}
		\item[\textbf{(a)}] is the resulting algorithm guaranteed to converge? 
		\item[\textbf{(b)}] if so, does it converge to a (global or local) optimum of some objective function? 
		\item[\textbf{(c)}] if so, can we identify this function? 
	\end{itemize}
    In this paper, we adopt a patch-based denoiser using  \textit{Gaussian mixture models} (GMM) \cite{Teodoro2015,ZoranWeiss,yu}, which allows us to provide positive answers to these three questions.  

% Tira-se a ref do MLSP; se o paper for aceite, recoloca-se. As proposed in earlier work \cite{Teodoro2016, Teodoro2016-2}
	
	As proposed in our earlier work \cite{Teodoro2016}, a GMM can be adapted/specialized to specific classes of images; the rationale is that a denoiser based on a class-adapted prior is able to capture the characteristics of that class better than a general-purpose one. Here, we take this adaptation one step further, by using scene-adapted priors. As the name suggests, the model is no longer learned from a set of images from the same class, but from one or more images of the same scene being restored/reconstructed. Hyperspectral sharpening \cite{review, simoes, qwei, Teodoro2017} and image deblurring from noisy/blurred image pairs \cite{lim, yuan} are the two data fusion problems that may leverage such priors and that we consider in this paper.
	
	In summary, our contributions are the following.
		\begin{itemize}
	            \item[\textbf{(i)}] We propose plugging a GMM-based denoiser into an ADMM algorithm; this GMM-based denoiser is a modified version of a patch-based MMSE (\textit{minimum mean squared error}) denoiser  \cite{Teodoro2015}; the modification consists in fixing the posterior weights of the MMSE denoiser.		
				\item[\textbf{(ii)}] We prove that the resulting PnP-ADMM algorithm is guaranteed to converge to a global minimum of a cost function, which we explicitly identify.
				\item[\textbf{(iii)}] We apply the proposed framework to the two fusion problems mentioned above, showing that it yields results competitive with the stat-of-the-art.
\end{itemize}

	An earlier version of this work was published in \cite{Teodoro2017-2}. With respect
	to that publication, this one includes more detailed and general results and proofs, and the application to noisy/blurred image pairs, which was not considered.
	
	The paper is organized as follows. Section~\ref{sec:admm} briefly reviews ADMM, which is the basis of the proposed algorithm, and its PnP version. Section~\ref{sec:prox} studies conditions for the denoiser to be a proximity operator, focusing on linear denoisers and explicitly identifying the underlying objective function. Section~\ref{sec:gmmden} proposes a GMM-based denoiser and shows that it satisfies the conditions for being a proximity operator. Sections \ref{sec:HSapp} -- \ref{sec:results} present the two instantiations of the proposed method and the corresponding experimental results. Finally, Section~\ref{sec:conclusion} concludes the paper.
	
	%This paper differs from \cite{Teodoro2017} in two major aspects. First, following the same rationale as in \cite{qwei}, we propose a modification of the GMM denoiser used in \cite{Teodoro2017}, which  can be interpreted as keeping the dictionary support fixed throughout the iterations of ADMM. Second, although we observed convergence in practice in \cite{Teodoro2017}, this modification enables us to theoretically guarantee convergence.
	
	%The paper is organized as follows. Section~\ref{sec:formulation} formulates the addressed problem while Section~\ref{sec:pnp} shows how to tackle it using a PnP-ADMM scheme. Sections~\ref{sec:method} and \ref{sec:conv} describe the adopted denoiser and show that the proposed PnP-ADMM is guaranteed to converge to a minimum of an implicitly defined objective function. In Section~\ref{sec:results}, we present experimental results showing that the proposed method performs competitively with, in some cases better than, another state-of-the-art method. Section \ref{sec:conclusion} concludes the paper.

	\section{Alternating Direction Method of Multipliers}\label{sec:admm}
	\subsection{From Basic ADMM to SALSA}
	Although dating back to the 1970's \cite{gabay}, ADMM has seen a surge of interest in the last decade, as a flexible and efficient optimization tool, widely used in imaging problems, machine learning, and other areas \cite{eckstein, boyd}. One of the canonical problem forms addressed by ADMM is 
	\begin{equation}
	\min_\bx \;  f( \bx) + g(\Hmat  \bx), \label{eq:admmcan} \\
	\end{equation}
	where  $f$ and $g$ are closed (equivalently, lower semi-continuous -- l.s.c. \cite{Bauschke}), proper\footnote{A convex function $f:\mathbb{R}^n\rightarrow \bar{\mathbb{R}}$ is called \textit{proper} if its \textit{domain} is not empty: $\mbox{dom}(f) \equiv \{\bx\in\mathbb{R}^n: \; f(\bx) < +\infty\} \neq \emptyset$.}, convex functions, and $\Hmat$ is a full column rank matrix of appropriate dimensions \cite{eckstein}. Each iteration of ADMM for \eqref{eq:admmcan} is as follows (with the superscript $(\cdot)^{(k)}$ denoting the iteration counter):
	\begin{align}
	\bx^{(k+1)} & =  \arg\min_\bx  f(\bx) + \frac{\rho}{2} \bigl\| \Hmat \bx -\bv^{(k)} - \bu^{(k)}\bigr\|_2^2   \label{eq:ineq1}\\
	\bv^{(k+1)} & =  \arg\min_\bv   g(\bv) + \frac{\rho}{2} \bigl\| \Hmat \bx^{(k+1)} - \bv -  \bu^{(k)}\bigr\|_2^2 , \label{eq:ineq2}\\
	\bu^{(k+1)} & =  \bu^{(k)} -  \Hmat \bx^{(k+1)} + \bv^{(k+1)} ,  \label{eq:ineq3}
	\end{align}
	where $\bu^{(k)}$ are the (scaled) Lagrange multipliers at iteration $k$, and $\rho > 0$ is the penalty parameter.
	
	ADMM  can be directly applied to problems involving the sum of $J$ closed, proper, convex functions, composed with linear operators,
	\begin{equation}
	\min_{\bx } \sum_{j=1}^J g_j (\Hmat_{j}\bx) \label{eq:sum1J}
	\end{equation}
	by casting it into the form \eqref{eq:admmcan} as follows: 
	\begin{equation*}
	f(\bx) = 0, \hspace{0.25cm} \Hmat = \begin{bmatrix} \Hmat_{1} \\ \vdots \\ \Hmat_{J} \end{bmatrix}, \hspace{0.25cm} \bv = \begin{bmatrix} \bv_1 \\ \vdots \\ \bv_{J} \end{bmatrix}, \hspace{0.25cm} g(\bv) =  \sum_{j=1}^J g_j (\bv_j).
	\end{equation*}
	The resulting instance of ADMM (called SALSA--\textit{split augmented Lagrangian shrinkage algorithm} \cite{AfonsoCSALSA2010}) is
	\begin{eqnarray}
	\bx^{(k+1)} & =  &\arg\min_\bx  \sum_{j=1}^J \|  \Hmat_j \bx - \bv_j^{(k)}  -\bu_j^{(k)}  \|_2^2  \label{eq:ineq1b}\\
	\bv_1^{(k+1)} & =  &\arg\min_\bz   g_1(\bv) + \frac{\rho}{2} \bigl\| \Hmat_1 \bx^{(k+1)} - \bz -  \bu_1^{(k)}\bigr\|_2^2 , \label{eq:ineq2b}\\
	\vdots  & &  \vdots \nonumber \\
	\bv_J^{(k+1)} & = & \arg\min_\bz   g_J(\bv) + \frac{\rho}{2} \bigl\| \Hmat_J \bx^{(k+1)} - \bz -  \bu_J^{(k)}\bigr\|_2^2 , \label{eq:ineq2c}\\
	\bu^{(k+1)} & =  &\bu^{k} - \Hmat \bx^{(k+1)} + \bv^{(k+1)}, \label{eq:ineq3b}
	\end{eqnarray}
	where (as for $\bv^{(k)}$), $\bu^{(k)} = \bigl[ \bigl(\bu_1^{(k)}\bigr)^T, \, \dots, \, \bigl(\bu_J^{(k)}\bigr)^T\bigr]^T$. Computing $\bx^{(k+1)}$ requires solving a linear system (equivalently, inverting a matrix):
	\begin{equation}
	\bx^{(k+1)} = \Bigl( \sum_{j=1}^{J}  \Hmat_j^T \Hmat_j \Bigr)^{-1} \Bigl(  \sum_{j=1}^{J}  \Hmat_j^T (\bv_j^{(k)}  + \bu_j^{(k)}) \Bigr),
	\end{equation}
	whereas computing $\bv_j^{(k+1)}$ (for $j=1,...,J$) requires applying the proximity operator of the corresponding $g_j$:
	\begin{equation}
    \bv_j^{(k+1)} = \mbox{prox}_{g_j /\rho} \Bigl( \Hmat_j \bx^{(k+1)}  -  \bu_j^{(k)}  \Bigr).
	\end{equation} 
	Recall that the proximity operator of a convex l.s.c. function $\phi$, computed at some point $\bz$, is given by
   \begin{equation}
   \mbox{prox}_{\phi}({\bf z}) = \underset{\textbf{x}}{\argmin} \frac{1}{2}\| {\bf x - z}\|_2^2 + \phi({\bf x}),\label{eq:prox}
  \end{equation}
and is guaranteed to be well defined due to the coercivity and strict convexity of the term $\frac{1}{2}\| {\bf x - z}\|_2^2$ \cite{Bauschke}.
	
	\subsection{Convergence of {ADMM} Algorithm}\label{ssec:conv}
	
	We present a simplified version of a classical theorem on the convergence of ADMM, proved in the seminal paper of Eckstein and Bertsekas \cite{eckstein}. The full version of the theorem allows for inexact solution of the optimization problems in the iterations of ADMM, while this version, which we will use below, assumes exact solutions. For SALSA, being a particular instance of ADMM, the application of the theorem is trivial.
	
	\begin{theorem}[Eckstein and Bertsekas \cite{eckstein}]\label{th:admm}
		Consider a problem of the form \eqref{eq:admmcan}, where $\Hmat$ has full column rank, and $f:\mathbb{R}^n\rightarrow \bar{\mathbb{R}} = \mathbb{R}\cup \{+\infty\}$ and $g:\mathbb{R}^m \rightarrow \bar{\mathbb{R}}$ are closed, proper, convex functions; let $\bv_0, \bu_0 \in \mathbb{R}^m$, and $\rho > 0$ be given. If the sequences $(\bx^{(k)})_{k = 1, 2,\dots}$, $(\bv^{(k)})_{k = 1, 2,\dots }$, and $(\bu^{(k)})_{k = 1,2,\dots}$ are generated according to  \eqref{eq:ineq1}, \eqref{eq:ineq2}, \eqref{eq:ineq3}, then  $(\bx^{(k)})_{k = 1,2,\dots}$ converges to a solution of \eqref{eq:admmcan}, $\bx^{(k)} \rightarrow \bx^{*}$, if one exists. Furthermore, if a solution does not exist, then at least one of the sequences $(\bv^{(k)})_{k = 1, 2,\dots }$ or $(\bu^{(k)})_{k = 1,2,\dots}$ diverges.
	\end{theorem}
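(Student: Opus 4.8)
The plan is to follow the operator-splitting route of Eckstein and Bertsekas, identifying the ADMM recursion \eqref{eq:ineq1}--\eqref{eq:ineq3} with the Douglas--Rachford splitting method applied to the dual of \eqref{eq:admmcan}, and in turn identifying Douglas--Rachford with the proximal point algorithm applied to a suitably constructed maximal monotone operator. The convergence theory for the proximal point algorithm then delivers both conclusions at once: convergence of the iterates when a zero exists, and unboundedness otherwise.

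First I would recall that, since $f$ and $g$ are closed, proper, and convex, their subdifferentials $\partial f$ and $\partial g$ (equivalently $\partial f^*$ and $\partial g^*$ of the conjugates) are maximal monotone, so that the resolvents $(\Imat + \rho\,\partial f^*)^{-1}$ and $(\Imat + \rho\,\partial g^*)^{-1}$ are single-valued and firmly nonexpansive. The full-column-rank hypothesis on $\Hmat$ is what guarantees that the $\bx$-update \eqref{eq:ineq1} minimizes a strictly convex (coercive) quadratic-plus-$f$ objective, hence is unique and well defined; this is also the ingredient that will later let us upgrade convergence of the dual/split variables to convergence of $\bx^{(k)}$ itself.

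The core algebraic step --- and the part I expect to be the main obstacle --- is to write the first-order optimality conditions of the subproblems \eqref{eq:ineq1} and \eqref{eq:ineq2}, together with the multiplier update \eqref{eq:ineq3}, and rearrange them into exactly the form of one Douglas--Rachford iteration $\lambda^{(k+1)} = \lambda^{(k)} + J_{\rho A}\bigl(2 J_{\rho B}(\lambda^{(k)}) - \lambda^{(k)}\bigr) - J_{\rho B}(\lambda^{(k)})$ on a dual variable $\lambda$, where $A$ and $B$ are the maximal monotone operators built from the conjugate subdifferentials $\partial g^*$ and $\partial f^*$ and the linear map $\Hmat$. Carefully tracking which intermediate quantity plays the role of each resolvent evaluation, and verifying the bookkeeping of the scaled multipliers $\bu^{(k)}$, is the delicate part here.

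Once this identification is in place, I would invoke the Eckstein--Bertsekas result that Douglas--Rachford splitting is itself an instance of the proximal point algorithm applied to the (maximal monotone) splitting operator, and then appeal to Rockafellar's convergence theorem for the proximal point algorithm: the generated sequence converges to a zero of that operator precisely when one exists, and is unbounded otherwise. Translating back, a zero corresponds to a KKT/saddle point of \eqref{eq:admmcan}; since the $\bx$-update is a continuous map of the converging dual/split variables and $\Hmat$ has full column rank, this pins down the primal point and gives $\bx^{(k)}\to\bx^*$. Conversely, if \eqref{eq:admmcan} has no solution there is no such zero, the proximal iterates diverge, and unwinding the correspondence forces at least one of $(\bv^{(k)})$ or $(\bu^{(k)})$ to diverge, which is exactly the final assertion.
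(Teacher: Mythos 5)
The paper does not actually prove Theorem~\ref{th:admm}: it imports it, in a simplified exact-minimization form, directly from the cited Eckstein--Bertsekas paper, so there is no in-paper proof to compare against. What your proposal does is reconstruct the proof of the cited reference itself, and the reconstruction is the correct one: ADMM for \eqref{eq:admmcan} coincides (Gabay's observation) with Douglas--Rachford splitting applied to the dual problem, built from the maximal monotone operators associated with $\partial f^*$, $\partial g^*$ and $\Hmat$; Douglas--Rachford splitting is in turn the proximal point algorithm applied to a maximal monotone splitting operator; and Rockafellar's proximal point theorem delivers precisely the stated dichotomy --- convergence to a zero when one exists, unboundedness of the iterates when none does. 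Your two uses of the full-column-rank hypothesis are also the right ones: it makes the quadratic term in \eqref{eq:ineq1} strictly convex (so the $\bx$-update is single-valued), and it is what upgrades convergence of $\Hmat\bx^{(k)}$ (equivalently, of the split/dual variables) to convergence of $\bx^{(k)}$ itself. A further harmless point in your favor: Rockafellar's theorem gives weak convergence, which in $\mathbb{R}^n$ is ordinary convergence, so nothing extra is needed there.

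Two caveats keep this a roadmap rather than a complete proof. First, the step you yourself flag as the main obstacle --- writing the optimality conditions of \eqref{eq:ineq1}--\eqref{eq:ineq2} together with \eqref{eq:ineq3} and rearranging them into a single Douglas--Rachford iteration on a dual variable (essentially $\bv^{(k)}+\bu^{(k)}$ up to scaling by $\rho$), with each intermediate ADMM quantity matched to a resolvent evaluation --- is exactly where all the work of the Eckstein--Bertsekas argument lives, and it is not carried out; until it is, the appeal to the proximal point machinery has the same logical status as the paper's citation. Second, both your sketch and the paper's statement gloss the distinction between existence of a \emph{solution} of \eqref{eq:admmcan} and existence of a Kuhn--Tucker (saddle) pair: a zero of the splitting operator corresponds to the latter, and Eckstein--Bertsekas phrase their theorem accordingly; the two notions coincide only under a constraint qualification, so a fully rigorous write-up should either state the theorem in terms of KKT pairs or add that qualification.
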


	\subsection{Plug-and-Play ADMM}\label{sec:pnp}
	The PnP framework \cite{Venkatakrishnan} emerged from noticing that sub-problem \eqref{eq:ineq2} in ADMM (equivalently, \eqref{eq:ineq2b}-\eqref{eq:ineq2c} in SALSA), which correspond to the application of a proximity operator, can be interpreted as a denoising step. In fact, the definition of proximity operator in \eqref{eq:prox} is formally equivalent the \textit{maximum a posteriori} (MAP) estimate of some unknown $\bx$ from noisy observations $\bz$ (under additive white Gaussian noise with unit variance), under a prior proportional to $\exp(-\phi(\bx))$. Armed with this observation, the goal of the PnP approach is to capitalize on recent developments in image denoising, by replacing a proximity operator by a state-of-the-art denoiser. However, the current state-of-the-art denoising methods do not correspond (at least, explicitly) to proximity operators of convex functions. Most of them exploit image self-similarity by decomposing the image into small patches, {\it i.e.}, they rely on a \textit{divide and conquer} principle, where instead of dealing with the noisy image as a whole, small (overlapping) patches are extracted, denoised independently (using, \textit{e.g.}, collaborative filtering, Gaussian mixtures, dictionaries, ...) and then put back in their locations \cite{aharon, dabov,Buades,Teodoro2015,ZoranWeiss,zoran}. These denoisers often lack a closed-form expression, making the convergence analysis of the resulting algorithm much more difficult. In the following section, we focus on the key properties of the denoising functions needed to satisfy Theorem~\ref{th:admm}, temporarily leaving aside the operator $\Hmat$, as it is problem-dependent.
	
	\section{When is a Denoiser a Proximity Operator?}\label{sec:prox}
	\subsection{Necessary and Sufficient Conditions}
	If the adopted denoiser in the PnP-ADMM scheme can be shown to correspond to the proximity operator of some l.s.c. convex function (although maybe not explicitly derived as such), then nothing really changes. The following classical result by Moreau \cite{moreau1965} (which was used by Sreehari \textit{et al} \cite{Sreehari} in this context) gives necessary and sufficient conditions for some mapping to be a proximity operator.
	\begin{lemma}[Moreau \cite{moreau1965}]  
	A function $p:\mathbb{R}^n \rightarrow \mathbb{R}^n$ is the proximity operator of some function $\phi:\mathbb{R}^n\rightarrow \bar{\mathbb{R}}$ if and only if it is non-expansive and it is a sub-gradient of some convex function $\varphi$, \textit{i.e.}, if and only if,  for any $\bx,\by\in\mathbb{R}^n$,
	 \begin{equation}
	 \Vert p(\bx)-p(\by) \Vert^2 \leq \Vert \bx - \by \Vert^2 \hspace{0.7cm}\mbox{and} \hspace{0.7cm} p(\bx) \in \partial \varphi(\bx),
	 \end{equation}
	 where $\partial\varphi(\bx) = \{ \bz \in\mathbb{R}^n: \varphi(\bx') - \varphi(\bx) \geq \langle \bz,\bx'-\bx\rangle, \; \forall \bx' \}$ denotes the sub-differential of $\varphi$ at $\bx$ \cite{Bauschke}.\label{lem:moreau}
	\end{lemma}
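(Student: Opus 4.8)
The plan is to establish the two implications separately, in both cases exploiting the Moreau envelope $e_\phi(\bz) = \min_\bx \frac{1}{2}\|\bx - \bz\|^2 + \phi(\bx)$ together with the Fenchel conjugate. For \textbf{necessity}, I would assume $p = \mbox{prox}_\phi$ for some closed, proper, convex $\phi$ and prove the two properties in turn. Non-expansiveness follows from the firm non-expansiveness of proximity operators: writing $\bu = p(\bx)$, $\bv = p(\by)$, the optimality conditions give $\bx - \bu \in \partial\phi(\bu)$ and $\by - \bv \in \partial\phi(\bv)$, so monotonicity of the subdifferential yields $\langle \bx - \by, \bu - \bv\rangle \geq \|\bu - \bv\|^2$, and Cauchy--Schwarz then gives $\|\bu - \bv\| \leq \|\bx - \by\|$. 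For the subgradient property I would recall that $e_\phi$ is convex and differentiable with $\nabla e_\phi = \mathrm{Id} - \mbox{prox}_\phi$, and invoke Moreau's decomposition $e_\phi + e_{\phi^*} = \frac{1}{2}\|\cdot\|^2$ to set $\varphi := e_{\phi^*}$. This $\varphi$ is convex and satisfies $\nabla\varphi = \mathrm{Id} - \mbox{prox}_{\phi^*} = \mbox{prox}_\phi = p$, whence $p(\bx) = \nabla\varphi(\bx) \in \partial\varphi(\bx)$.

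For \textbf{sufficiency}, I would assume $p$ is non-expansive and $p(\bx) \in \partial\varphi(\bx)$ for all $\bx$, with $\varphi$ convex, and construct $\phi$ with $\mbox{prox}_\phi = p$. First, non-expansiveness makes $p$ continuous, and an everywhere-defined continuous selection of $\partial\varphi$ forces $\varphi$ to be differentiable with $\nabla\varphi = p$. The candidate is $\phi(\by) := \varphi^*(\by) - \frac{1}{2}\|\by\|^2$, where $\varphi^*$ is the Fenchel conjugate. Because $\nabla\varphi = p$ is $1$-Lipschitz, conjugate duality makes $\varphi^*$ strongly convex with modulus $1$, so $\phi$ is convex (closed and proper after l.s.c.\ regularization). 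A direct computation then finishes the argument: expanding the quadratic in $\mbox{prox}_\phi(\bz) = \argmin_\by \frac{1}{2}\|\by - \bz\|^2 + \varphi^*(\by) - \frac{1}{2}\|\by\|^2$, the $\frac{1}{2}\|\by\|^2$ terms cancel and the minimizer satisfies $\nabla\varphi^*(\by) = \bz$, i.e.\ $\by = (\nabla\varphi^*)^{-1}(\bz) = \nabla\varphi(\bz) = p(\bz)$.

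The main obstacle is the sufficiency direction, specifically verifying that $\phi = \varphi^* - \frac{1}{2}\|\cdot\|^2$ is genuinely convex. This rests on the equivalence between $1$-Lipschitz continuity (non-expansiveness) of the gradient $\nabla\varphi$ and $1$-strong convexity of the conjugate $\varphi^*$, a Baillon--Haddad-type fact that I would either cite or derive from the standard inequality characterizations of Lipschitz gradients and strong convexity. A secondary technicality is promoting the inclusion $p \in \partial\varphi$, with $p$ continuous and full-domain, to honest differentiability $\nabla\varphi = p$, and confirming that domain/range considerations do not obstruct the gradient inversion $(\nabla\varphi^*)^{-1} = \nabla\varphi$ used in the final computation.
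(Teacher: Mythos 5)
The paper does not actually prove this lemma: it is imported verbatim as a classical result of Moreau \cite{moreau1965} (following its use by Sreehari \textit{et al.} \cite{Sreehari}), so there is no in-paper proof to compare yours against. Judged on its own merits, your argument follows the standard conjugate-duality route to Moreau's characterization, and its architecture is sound: monotonicity of $\partial\phi$ plus Cauchy--Schwarz gives (firm) non-expansiveness; $\varphi := e_{\phi^*}$ with the Moreau decomposition gives $\nabla\varphi = \mbox{prox}_\phi$, hence the subgradient property; and for sufficiency, $\phi := \varphi^* - \frac{1}{2}\|\cdot\|^2$ is the right candidate, with its convexity supplied by the equivalence between $1$-Lipschitz continuity of $\nabla\varphi$ and $1$-strong convexity of $\varphi^*$.

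Two technical points need repair, one of which is a genuine error as written. First, in the final computation you invert $\nabla\varphi^*$, but strong convexity of $\varphi^*$ does \emph{not} imply its differentiability (e.g.\ $\frac{1}{2}\|\by\|^2 + |y_1|$ is $1$-strongly convex with a kink at $y_1=0$), so $(\nabla\varphi^*)^{-1}$ need not exist as a map. The correct step is subdifferential inversion: the minimizer $\by$ of $\varphi^*(\by) - \langle \by,\bz\rangle$ (which exists and is unique by strong convexity) satisfies $\bz \in \partial\varphi^*(\by)$, and for closed proper convex $\varphi$ this is equivalent to $\by \in \partial\varphi(\bz) = \{p(\bz)\}$, using $\varphi^{**}=\varphi$; note $\varphi$ is automatically closed here, being finite and convex on $\mathbb{R}^n$, hence continuous. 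Second, the promotion of the continuous selection $p(\bx)\in\partial\varphi(\bx)$ to $\nabla\varphi = p$, which you flag but do not prove, deserves its one-line argument: for any direction $\bd$ and $t>0$, the subgradient inequalities at $\bx$ and at $\bx+t\bd$ sandwich $\bigl(\varphi(\bx+t\bd)-\varphi(\bx)\bigr)/t$ between $\langle p(\bx),\bd\rangle$ and $\langle p(\bx+t\bd),\bd\rangle$, and letting $t\downarrow 0$ with continuity of $p$ yields $\varphi'(\bx;\bd)=\langle p(\bx),\bd\rangle$, i.e.\ differentiability with gradient $p(\bx)$. Finally, no l.s.c.\ regularization of $\phi$ is needed: $\varphi^*$ is closed as a conjugate, subtracting the continuous quadratic preserves lower semicontinuity, and properness of $\varphi^*$ follows since the finite convex $\varphi$ admits an affine minorant. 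With these repairs your proof is complete.
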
 
	 
	 In practice, given some black-box denoiser that performs a complex sequence of operations (\textit{e.g.}, the state-of-the-art BM3D \cite{dabov}) it may be difficult, if not impossible, to verify that it satisfies the conditions of Lemma \ref{lem:moreau}. In the next section, we study a class of denoisers that verify the conditions of Lemma \ref{lem:moreau}, and go one step further by identifying the function of which the denoiser is the proximity operator. This second step is important to study the existence and uniqueness of solutions of the underlying optimization problem.
	
	\subsection{Denoisers as Linear Filters}
	Arguably, the simplest denoising methods estimate each pixel as a linear combination of the noisy pixels, with weights that depend themselves on these noisy pixels. Mathematically, given a noisy image $\by = \bx + \bn$, where $\bx$ is the underlying clean image and $\bn$ is noise, these methods produce an estimate, $\hat{\bx}$, according to
	\begin{equation}
	\hat{\bx} = p({\bf y}) = \Wmat ({\bf y})  \;  {\bf y}, \label{eq:linFilt}
	\end{equation}
	where $\Wmat({\bf y})$ is a square matrix, which is a function of $ {\bf y}$. Many methods, including the classical \textit{non-local means} (NLM \cite{Buades}) can be written in this form. Sreehari \textit{et al} \cite{Sreehari} have studied this class of denoisers, with $\Wmat({\bf y})$ being a symmetric and doubly-stochastic matrix. This is the case of symmetrized NLM \cite{Venkatakrishnan, chan2}, patch-based sparse representations on a dictionary with a fixed support \cite{qwei}, and patch-based approximate MAP estimation under a GMM prior \cite{Sulam}. 
	
	In this paper, we study a different class of denoisers, where the matrix is fixed, $\Wmat({\by}) = \Wmat$, but it only needs to be symmetric, positive semi-definite (p.s.d.), and have spectral radius less or equal to 1 (conditions that are satisfied by any symmetric doubly-stochastic matrix). As shown below, the patch-based GMM-denoiser proposed in this paper satisfies these conditions.
	
	Before proceeding,
	 we introduce some notation. Let $S(\Wmat)$ denote the \textit{column span} of $\Wmat$. Let $\iota_C:\mathbb{R}^n\rightarrow \bar{\mathbb{R}}$ denote the \textit{indicator function} of the convex set $C\subset\mathbb{R}^n$, \textit{i.e.}, $\iota_C(\bx) = 0$, if $\bx\in C$, and $\iota_C(\bx) = +\infty$, if $\bx\not\in C$.  	The following theorem is proved in Appendix A.
	
	%Since $\Wmat$ is symmetric, $N(\Wmat)^{\bot} = \mbox{range}(\Wmat^T) = \mbox{range}(\Wmat)$. Finally, $\iota_C:\mathbb{R}^n\rightarrow \bar{\mathbb{R}}$ denotes the indicator function of the convex set $C\subset\mathbb{R}^n$, \textit{i.e.}, $\iota_C(\bx) = 0$, if $\bx\in C$, and $\iota_C(\bx) = +\infty$, if $\bx\not\in C$.

	\begin{theorem}\label{lem:1}
		Let $p:\mathbb{R}^n \rightarrow \mathbb{R}^n$ be defined as $p(\by) = \Wmat\by$, where $\Wmat\in\mathbb{R}^{n\times n}$ is a symmetric, p.s.d. matrix, with maximum eigenvalue satisfying $0 < \lambda_{\mbox{\scriptsize max}} \leq 1$. Let $\Wmat = \Qmat \Lambda \Qmat^T$ be the eigendecomposition of $\Wmat$, where $\Lambda = \mbox{diag}(\lambda_1 = \lambda_{\mbox{\scriptsize max}},...,\lambda_n)$ contains the eigenvalues, sorted in decreasing order; the first $r = \mbox{rank}(\Wmat)$ are non-zero and the last  $(n-r)$ are zeros. Let $\bar{\Lambda}$ be the $r\times r$ diagonal matrix with the non-zero eigenvalues and $\bar{\Qmat}$ the matrix with the first $r$ columns of $\Qmat$ (an orthonormal basis for $S(\Wmat))$. Then, 
		\begin{itemize}
			\item[\textbf{(i)}] $p$ is the proximity operator of a convex function $\phi$; 
			\item[\textbf{(ii)}] $\phi$ is the following closed, proper, convex function:
					\begin{equation}
	\phi(\bx) = \iota_{S(\Wmat)} (\bx) + \frac{1}{2}\bx^T \bar{\Qmat} ( \bar{\Lambda}^{-1} - \Imat ) \bar{\Qmat}^T \bx. \label{eq:phi_def}
	    \end{equation}
	\end{itemize}\label{th:prox}
	\end{theorem}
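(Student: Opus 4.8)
The plan is to establish (i) via the characterization of proximity operators in Lemma~\ref{lem:moreau}, and then to prove (ii) by a direct computation that exhibits the minimizer of the proximal problem for the candidate $\phi$ in \eqref{eq:phi_def}.

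For part (i), I would verify the two conditions of Lemma~\ref{lem:moreau}. Non-expansiveness is immediate: since $\Wmat$ is symmetric, its spectral norm equals its largest eigenvalue, so $\| p(\bx) - p(\by)\| = \|\Wmat(\bx - \by)\| \le \lambda_{\max}\,\|\bx-\by\| \le \|\bx-\by\|$, using $\lambda_{\max}\le 1$. For the subgradient condition, note that $p(\bx)=\Wmat\bx = \nabla\varphi(\bx)$ with $\varphi(\bx)=\tfrac12\bx^T\Wmat\bx$; since $\Wmat$ is p.s.d., $\varphi$ is convex and differentiable, so $\partial\varphi(\bx)=\{\Wmat\bx\}$ and hence $p(\bx)\in\partial\varphi(\bx)$. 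Lemma~\ref{lem:moreau} then guarantees that $p$ is the proximity operator of some convex $\phi$, giving (i).

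For part (ii), the idea is to verify that the explicit $\phi$ of \eqref{eq:phi_def} reproduces $p$, i.e. that $\mbox{prox}_\phi(\bz)=\Wmat\bz$ for all $\bz$. I would diagonalize the problem: writing $\bx = \Qmat\balpha$ and $\bz=\Qmat\bbeta$ and using orthogonality of $\Qmat$, the proximal objective $\tfrac12\|\bx-\bz\|^2 + \phi(\bx)$ decouples into $n$ scalar problems. The indicator $\iota_{S(\Wmat)}$ forces $\alpha_i=0$ for $i>r$ (the null-space coordinates), while for each $i\le r$ the objective becomes $\tfrac12(\alpha_i-\beta_i)^2 + \tfrac12(\lambda_i^{-1}-1)\alpha_i^2$, whose unique minimizer is $\alpha_i=\lambda_i\beta_i$. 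Reassembling, the minimizer is $\balpha=\Lambda\bbeta$, i.e. $\bx=\Qmat\Lambda\Qmat^T\bz=\Wmat\bz$, as required. It remains to confirm that $\phi$ is closed, proper, and convex: properness holds since $\phi(\bzero)=0$; the indicator of the subspace $S(\Wmat)$ is l.s.c. and convex; and $\bar{\Lambda}^{-1}-\Imat\succeq\bzero$ because $0<\lambda_i\le 1$ implies $\lambda_i^{-1}\ge 1$, so the quadratic term is convex and continuous.

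The crux of the argument is handling the potential rank-deficiency of $\Wmat$, which is exactly what makes the indicator function $\iota_{S(\Wmat)}$ indispensable. Because the range of $p$ is always contained in $S(\Wmat)$, any $\phi$ with $\mbox{prox}_\phi=p$ must equal $+\infty$ off this subspace; moreover the inverse $\bar{\Lambda}^{-1}$ is only meaningful on the non-zero part of the spectrum, so the quadratic penalty must be confined to $S(\Wmat)$. Getting this interplay right — matching the $\iota_{S(\Wmat)}$ constraint on the null space with the finite quadratic penalty $\tfrac12(\lambda_i^{-1}-1)\alpha_i^2$ on the range — is the main obstacle; once the eigenbasis decoupling is in place, the rest reduces to the elementary scalar minimization above.
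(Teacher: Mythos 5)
Your proof is correct and follows essentially the same route as the paper: part (i) invokes Moreau's characterization (Lemma~\ref{lem:moreau}) with the same potential $\varphi(\bx)=\tfrac12\bx^T\Wmat\bx$, and part (ii) verifies $\mbox{prox}_\phi = p$ by a direct quadratic minimization in the eigenbasis of $\Wmat$. The only cosmetic difference is that the paper parametrizes the constraint set as $\bx=\bar{\Qmat}\bz$ and solves the reduced $r$-dimensional problem in matrix form, whereas you change coordinates over the full basis $\Qmat$ and decouple into scalar problems — the same computation.
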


	\vspace{0.25cm}	
	We present (and prove) claims (i) and (ii) separately to highlight that (as done by  Sreehari \textit{et al} \cite{Sreehari}), by using Lemma \ref{lem:moreau}, it is possible to show that $p$ is a proximity operator without the need to explicitly obtain the underlying function. 
	
	\section{GMM-Based Denoising}\label{sec:gmmden}
	In this paper, we adopt a GMM patch-based denoiser to plug into ADMM. The reasons for this choice are threefold: \textbf{(i)} it has been shown that GMM are good priors for clean image patches \cite{Teodoro2015,ZoranWeiss, yu}; \textbf{(ii)} a GMM can be learned either from an external dataset of clean images \cite{ZoranWeiss, yu}, or directly from noisy patches \cite{Teodoro2015}; \textbf{(iii)} the ease of learning a GMM  from an external dataset  opens the door to  class-specific priors, which, naturally, lead to superior results since they capture the characteristics of the image class in hand better than a general-purpose image model \cite{Teodoro2016,luo}.
	
	%\cite{luo,Teodoro2016,Teodoro2016-2}.
	
	%, but the training of the model is done using an external dataset, as in \cite{ZoranWeiss}, instead of the noisy patches. After extracting patches from a clean dataset of images, the GMM is learned using the well-known expectation-maximization algorithm.
	
	%The GMM-based denoiser used in this paper is the one proposed in \cite{Teodoro2015}. 
	Leaving aside, for now, the question of how to learn the parameters of the model,  we consider a GMM prior\footnote{$\mathcal{N}(\bx ; \mu,{\bf C})$ denotes a Gaussian probability density function of mean $\mu$ and covariance $\bf C$, computed at $\bx$.}
	\begin{equation}
	p({\bf x}_i ) = \sum_{j=1}^K \alpha_j \mathcal{N}({\bf x}_i ; \bmu_j, {\bf C}_j),
	\end{equation}
	for any patch ${\bf x}_i$ of the image to be denoised. A current practice in patch-based image denoising is to treat the average of each patch separately: the average of the noisy patch is subtracted from it, the resulting zero-mean patch is denoised, and the average is added back to the estimated patch. The implication of this procedure is that, without loss of generality, we can assume that the  components of the GMM have  zero mean ($\mu_j = 0$, for $j=1,...,K$). 
	
	Given this GMM prior and a noisy version $\by_i$ of each patch, where $\by_i = \bx_i + \bn_i$, with $\bn_i \sim \mathcal{N}(0,\sigma^2)$, the MMSE estimate of $\bx_i$ is given by
	\begin{equation}
	\hat{\bf x}_i = \sum_{j=1}^K \beta_j({\bf y}_i ) \; {\bf v}_j({\bf y}_i),\label{eq:MMSE}
	\end{equation}
	where
	\begin{equation}
	{\bf v}_j ({\bf y}_i) =  \Cmat_j \Bigl( {\bf C}_j +  \sigma^2 {\bf I} \Bigr)^{-1}  {\bf y}_i, \label{eq:v_m}
	\end{equation}
	and 
	\begin{equation}
	\beta_j({\bf y}_i) = \frac{\alpha_j \; \mathcal{N}({\bf y}_i ; 0, {\bf C}_j + \sigma^2 \, {\bf I})
	}{\sum_{k=1}^K \alpha_j \; \mathcal{N}({\bf y}_i ; 0, {\bf C}_k + \sigma^2 \, {\bf I})}. \label{eq:beta}
	\end{equation}
	Notice that  $\beta_j({\bf y}_i)$ is  the posterior probability that the \textit{i}-th patch was generated by the \textit{j}-th component of the GMM, and ${\bf v}_j({\bf y}_i)$ is the MMSE estimate of the \textit{i}-th patch if it was known that it had been generated by the $j$-th component.

	As is standard in patch-based denoising, after computing the MMSE estimates of the patches, they are returned to their location and combined by straight averaging. This corresponds to solving the following optimization problem
	\begin{align}
	\widehat{\bx }  \in  \underset{\bx \in \mathbb{R}^n }{\argmin}  \quad  \sum_{i = 1}^{N} \Vert \hat{\bf x}_i   - \bP_i \bx \Vert_2^2, \label{eq:wholeopt}
	\end{align}
	where $\bP_i \in \{0,1\}^{n_p \times n}$ is a binary matrix that extracts the $i$-th patch from the image (thus $\bP_i^T$ puts the patch back into its place), $N$ is the number of patches, $n_p$ is the number of pixels in each patch, and $n$ the total number of image pixels. The solution to \eqref{eq:wholeopt} is
	\begin{equation}
	\hat{\bx} = \Bigl( \sum_{i = 1}^{N} \bP_i^T \bP_i \Bigr)^{-1}\Bigl( \sum_{i = 1}^{N} \bP_i^T \hat{\bf x}_i \Bigr) = \frac{1}{n_p}\,  \sum_{i = 1}^{N} \bP_i^T \hat{\bf x}_i, \label{eq:wholex}
	\end{equation}
	assuming the patches are extracted with unit stride and periodic boundary conditions, thus every pixel belongs to $n_p$ patches and $\sum_{i = 1}^{N} \bP_i^T \bP_i = n_p \Imat$.

	\subsection{From Class-Adapted to Scene-Adapted Priors}\label{ssec:class}
	Class-adapted priors have been used successfully in image deblurring, if the image to be deblurred is known to belong to a certain class, {\it e.g.}, a face or a fingerprint, and a prior adapted to that class is used \cite{Teodoro2016}.
	Here, we take this adaptation to an extreme, which is possible if we have two types of data/images of the same scene, as is the case with some fusion problems. Instead of using a dataset of other images from the same class (\textit{e.g.}, faces), we leverage one type of data to learn a model that is adapted to the specific scene being imaged, and use it as a prior. The underlying assumption is that the two types of data have similar spatial statistical properties, which is reasonable, as the scene being imaged is the same. In Sections~\ref{sec:HSapp} and \ref{sec:deb}, we apply this idea to tackle two data fusion problems, namely hyperspectral (HS) sharpening \cite{review}, and image deblurring from a noisy/blurred image pair \cite{yuan}. 
	
	In HS sharpening, the goal is to fuse HS data of low spatial resolution, but high spectral resolution, with multispectral (MS) or panchromatic (PAN) data of high spatial resolution, but low spectral resolution, to obtain an HS image with  high spatial and spectral resolutions. This can be seen as a super-resolution problem, where the HS image is super-resolved with the help of a spatial prior obtained from the MS image of the same scene. More specifically, we resort to a GMM prior, as described in the previous section, learned from (patches of the bands of) the observed MS or PAN image using the classical \textit{expectation-maximization} (EM) algorithm, and then leverage this prior to sharpen the low resolution HS bands. 

	In the second problem, the goal is to restore a blurred, but relatively noiseless, image, with the help of another image of the same scene, which is sharp but noisy. The noisy sharp image is used to learn a spatial prior, adapted to the particular scene, which is then used as a prior to estimate the underlying sharp and clean image. Again, this is carried out by learning a GMM from the patches of the noisy, but sharp, image.

	In light of the scene-adaptation scheme just described, we further claim that, it not only makes sense to use a GMM prior learned from the observed sharp images, but it also makes sense to keep the weights $\beta_j$ of each patch in the target image, required to compute \eqref{eq:MMSE}, equal to the values for the corresponding observed patch, obtained during training.  In other words, we use the weights as if we were simply denoising the sharp image, with a GMM trained from its noisy patches. In fact, another instance of this idea has been previously used, based on sparse representations on learned dictionaries \cite{qwei}. We stress that this approach is only reasonable in the case of image denoising or image reconstruction using scene-adapted priors, where the training and target images are of the same scene, and thus have similar spatial structures. As will be shown in the next section, this modification will play an important role in proving convergence of the proposed PnP-ADMM algorithm.

%	\section{Convergence of the {PnP-ADMM} Algorithm}\label{sec:conv}
%	
%	We begin by presenting a simplified version of a classical theorem on the convergence of a generalized version of ADMM, proved in the seminal paper by Eckstein and Bertsekas \cite{eckstein}. The full version of the theorem allows for inexact solution of the optimization problems in the iterations of ADMM, while this version, based on which our proof of convergence of PnP-ADMM will be supported, assumes exact solutions. 
%	
%	\begin{theorem}[Eckstein and Bertsekas \cite{eckstein}]\label{th:admm}
%		Consider a problem of the form \eqref{eq:admmcan}, where $\Hmat$ has full column rank, and $f:\mathbb{R}^n\rightarrow \bar{\mathbb{R}}$ and $g:\mathbb{R}^m \rightarrow \bar{\mathbb{R}}$ are closed, proper, and convex functions, and let $\bv_0, \bu_0 \in \mathbb{R}^m$, and $\rho > 0$ be given. If the sequences $(\bx^{(k)}, k = 0, 1,\dots )$, $(\bv^{(k)}, k = 0, 1,\dots )$, and $(\bu^{(k)}, k = 0, 1,\dots )$ are generated according to  \eqref{eq:ineq1}, \eqref{eq:ineq2}, \eqref{eq:ineq3}, then  $(\bx^{(k)}, k = 0, 1,\dots )$ converges to a solution of \eqref{eq:admmcan}, $\bx^{(k)} \rightarrow \bx^{*}$, if one exists. Furthermore, if a solution does not exist, then at least one of the sequences $(\bv^{(k)}, k = 0, 1,\dots )$ or $(\bu^{(k)}, k = 0, 1,\dots )$ diverges.
%	\end{theorem}
%	
%	Before proceeding, some properties of the denoising function are established.
%	
	\subsection{Analysis of the Scene-adapted GMM-based Denoiser}
	\label{sec:GMMdenoiser}
	Consider the role of the denoiser: for a noisy input argument $\by \in \mathbb{R}^{n}$, it produces an estimate $\hat{\bx} \in \mathbb{R}^{n}$, given the parameters of the GMM, $\theta = \{\Cmat_1, \dots, \Cmat_K, \beta_1^1, \dots, \beta_K^n\}$, and the noise variance $\sigma^2$. As the GMM-denoiser is patch-based, the algorithm starts by extracting (overlapping, with unit stride) patches, followed by computing their estimates using \eqref{eq:MMSE}. Recall that, with the scene-adaptation scheme, parameters $\beta$ no longer depend on the noisy patches, and thus we may write compactly,
	\begin{align}
	\label{eq:linMMSE}
	\hat{\bf x}_i &= \sum_{j=1}^K \beta_j^{i} \; {\bf C}_j \Bigl(  {\bf C}_j + {\sigma^2\, \bf I} \Bigr)^{-1} \by_i  
	= \bF_i \, \by_i = \bF_i \, \bP_i\, \by,
	\end{align}
	where $\bP_i$ is the same matrix that appears in \eqref{eq:wholeopt}, thus $\by_i = \bP_i\, \by$. Furthermore, using \eqref{eq:wholex}, 
%	Consider the denoising function in line 10 of Algorithm 1, with argument $\Smat \in \mathbb{R}^{L_s \times n_h}$. Notice that $\Smat$ is composed of $L_s$ images (each in one row), each with a total of $n_h$ pixels, and the denoiser processes each one of them separately, using the same weights $\beta_i^j$, for $i=1,...,n_h$ and $j=1,...,K$.  The estimate of each patch is computed as in \eqref{eq:MMSE2}, which can be written compactly as
%	\begin{equation}
%	\hat{\bf x}^i_p = \bF_i {\bf s}^i_p =  \bF_i \Pmat_i {\bf s}_p 
%	\end{equation}
%	where ${\bf s}_p$ denotes the (column) vector containing the $p$-th row/image in $\Smat$,  $\Pmat_i$ is the binary matrix that extracts the $i$-th patch (see Subsection~\ref{ssec:gmmden}), and
%	\begin{equation}
%	\bF_i = \sum_{j=1}^K \beta_j^i \;   \Cmat_j \Bigl( (\tau/\rho)\, {\bf C}_j + {\bf I} \Bigr)^{-1}. \label{eq:defFi}
%	\end{equation}
%	Letting $\Tmat\in \mathbb{R}^{L_s \times n_h}$ be the output of the denoiser and $\bt_p$ the column vector containing its $p$-th row/image, invoking \eqref{eq:wholex2} allows writing 
	\begin{equation}
	\hat{\bx} = \underbrace{\frac{1}{n_p} \sum_{i=1}^N \Pmat_i^T   \bF_i \Pmat_i}_{\Wmat} \; {\bf y} = \Wmat  \;  {\bf y}, \label{eq:defW}
	\end{equation}
	which shows that the GMM-based MMSE denoiser, with fixed weights, is a linear function of the noisy image. Matrix $\Wmat$, of course, depends on the parameters of the method: $\sigma^2$, $n_p$, and the parameters of the GMM, but it is fixed throughout the iterations of ADMM. Recall that the weights $\beta_j^i$ and covariances $\Cmat_j$  are obtained during training, and $\sigma^2$ is assumed known, thus is fixed. 
%	Finally, since each ${\bf s}_p$ is the transpose of a row of $\Smat,$ we can write
%	\begin{equation}
%	\Tmat = \text{denoise} \left(\Smat  , \tau/\rho , \bbeta \right) = ( \Wmat \Smat^T )^T = \Smat \Wmat^T, 
%	\end{equation}
%	or, equivalently, letting ${\bf t} \equiv \mbox{vec}(\Tmat^T)$ and  ${\bf s} \equiv \mbox{vec}(\Smat^T)$
%	\begin{equation}
%	{\bf t} = (\Imat \otimes \Wmat ) \; {\bf s}.
%	\end{equation}
	The following Lemma summarizes the key properties of $\Wmat$, when using the scene-adapted GMM denoiser. The proof can be found in Appendix B.
	
	\begin{lemma}\label{lem:22}
		Matrix $\Wmat$ is symmetric, positive semi-definite, with maximum eigenvalue no larger than 1.
	\end{lemma}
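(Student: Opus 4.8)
The plan is to build $\Wmat$ up from its constituent blocks: I would first establish the three claimed properties for each per-patch factor $\bF_i$, and then transfer them to $\Wmat$ through the averaging in \eqref{eq:defW}, exploiting the patch-coverage identity $\sum_i \Pmat_i^T\Pmat_i = n_p\Imat$ noted after \eqref{eq:wholex}.

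First I would analyze the elementary building block $\Cmat_j(\Cmat_j + \sigma^2\Imat)^{-1}$. Since each GMM covariance $\Cmat_j$ is symmetric positive semi-definite, $\Cmat_j + \sigma^2\Imat$ is invertible (its eigenvalues are at least $\sigma^2 > 0$), and diagonalizing $\Cmat_j = \Umat_j \Dmat_j \Umat_j^T$ with $\Dmat_j \succeq 0$ gives
\begin{equation*}
\Cmat_j(\Cmat_j + \sigma^2\Imat)^{-1} = \Umat_j \Dmat_j (\Dmat_j + \sigma^2\Imat)^{-1}\Umat_j^T,
\end{equation*}
which shares the eigenbasis $\Umat_j$ and has eigenvalues $d/(d+\sigma^2) \in [0,1)$ for each eigenvalue $d \ge 0$ of $\Cmat_j$. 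Hence each block is symmetric, p.s.d., and has spectral norm strictly below $1$. Next, from \eqref{eq:linMMSE}, $\bF_i = \sum_{j=1}^K \beta_j^i\,\Cmat_j(\Cmat_j + \sigma^2\Imat)^{-1}$ is a \emph{convex} combination of these blocks, since the fixed weights $\beta_j^i$ (posterior probabilities, cf. \eqref{eq:beta}) are nonnegative and sum to one. A convex combination of symmetric p.s.d. matrices is symmetric and p.s.d., and by subadditivity of the spectral norm $\|\bF_i\| \le \sum_j \beta_j^i\,\|\Cmat_j(\Cmat_j+\sigma^2\Imat)^{-1}\| \le 1$; thus every $\bF_i$ is symmetric p.s.d. with eigenvalues in $[0,1]$.

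Finally I would push these properties through \eqref{eq:defW}. Symmetry is immediate, as $(\Pmat_i^T\bF_i\Pmat_i)^T = \Pmat_i^T\bF_i\Pmat_i$ and a nonnegatively-weighted sum of symmetric matrices is symmetric. For the remaining two claims I would argue via the Rayleigh quotient: for any $\bx$,
\begin{equation*}
\bx^T\Wmat\bx = \frac{1}{n_p}\sum_{i=1}^N (\Pmat_i\bx)^T\bF_i(\Pmat_i\bx) \ge 0,
\end{equation*}
which gives p.s.d.; and, bounding each quadratic form by $\lambda_{\max}(\bF_i)\,\|\Pmat_i\bx\|_2^2 \le \|\Pmat_i\bx\|_2^2$ and invoking $\sum_i \Pmat_i^T\Pmat_i = n_p\Imat$,
\begin{equation*}
\bx^T\Wmat\bx \le \frac{1}{n_p}\sum_{i=1}^N\|\Pmat_i\bx\|_2^2 = \frac{1}{n_p}\,\bx^T\Bigl(\sum_i\Pmat_i^T\Pmat_i\Bigr)\bx = \|\bx\|_2^2,
\end{equation*}
so $\lambda_{\max}(\Wmat) \le 1$.

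The bookkeeping is routine; the one step I expect to be the crux is the eigenvalue bound $\|\bF_i\| \le 1$, where one must simultaneously use that each block $\Cmat_j(\Cmat_j+\sigma^2\Imat)^{-1}$ is contractive and that the $\beta_j^i$ genuinely form a convex combination (nonnegativity and normalization both matter). Everything else follows from preservation of symmetry and semidefiniteness under the map $\Pmat_i^T(\cdot)\Pmat_i$ and nonnegative summation, together with the patch-coverage identity $\sum_i\Pmat_i^T\Pmat_i = n_p\Imat$.
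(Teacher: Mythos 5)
Your proof is correct, but it reaches the spectral bound on $\Wmat$ by a genuinely different route than the paper. The treatment of the per-patch factors $\bF_i$ is essentially the same in both (the paper invokes Weyl's inequality for convex combinations of symmetric matrices where you use subadditivity of the spectral norm; both give symmetry, positive semi-definiteness, and eigenvalues in $[0,1)$). The divergence is in the final step: the paper partitions the patch index set into $n_p$ families of pairwise non-overlapping patches (possible because of unit stride and periodic boundaries), observes that each partial sum $\Amat_j = \sum_{k\in\Omega_j}\bP_k^T\bF_k\bP_k$ is, up to a permutation of the pixels, block-diagonal with blocks $\bF_k$ and hence has eigenvalues in $[0,1)$, and then applies Weyl's inequality once more to the average $\Wmat = \frac{1}{n_p}\sum_{j}\Amat_j$. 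You instead bound the Rayleigh quotient directly, using only $\bF_i \preceq \Imat$ and the covering identity $\sum_i \Pmat_i^T\Pmat_i = n_p\Imat$. Your route is more elementary: it needs no combinatorial tiling argument and no block-diagonalization, only that every pixel belongs to exactly $n_p$ patches, and it would even survive under the weaker hypothesis $\sum_i \Pmat_i^T\Pmat_i \preceq n_p\Imat$. What the paper's argument buys in exchange is a slightly sharper conclusion: since each $\Amat_j$ has spectral radius strictly below $1$, it yields $\lambda_{\max}(\Wmat) < 1$, whereas your bound only gives $\lambda_{\max}(\Wmat) \le 1$. For the lemma as stated, and for Theorem~\ref{th:prox} (which requires only $0 < \lambda_{\max} \le 1$), either bound suffices.
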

	
	Lemma \ref{lem:22} and Theorem \ref{th:prox} yield the following corollary.
	
	\begin{corollary}\label{lem:2}
The GMM patch-based denoiser defined in \eqref{eq:defW} corresponds to the proximity operator of the closed, proper, convex function defined in \eqref{eq:phi_def}.
	\end{corollary}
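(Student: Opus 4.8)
The plan is to obtain the corollary as an immediate consequence of chaining together the two preceding results: Lemma~\ref{lem:22}, which characterizes the spectral structure of $\Wmat$, and Theorem~\ref{th:prox}, which turns that structure into the statement that $p(\by) = \Wmat\by$ is a proximity operator with an explicitly known underlying function. Since both ingredients are already in hand, the work reduces to checking that the hypotheses of Theorem~\ref{th:prox} are met exactly by the matrix $\Wmat$ defined in \eqref{eq:defW}.

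First I would invoke Lemma~\ref{lem:22} to record that $\Wmat$ is symmetric, positive semi-definite, and has largest eigenvalue at most $1$. Theorem~\ref{th:prox}, however, requires the slightly stronger condition $0 < \lambda_{\mbox{\scriptsize max}} \leq 1$, so the one remaining point to verify is strict positivity of the top eigenvalue, i.e., that $\Wmat$ is not the zero matrix. This follows directly from the construction in \eqref{eq:linMMSE}--\eqref{eq:defW}: each per-patch operator $\bF_i = \sum_{j} \beta_j^{i}\, \Cmat_j(\Cmat_j + \sigma^2 \Imat)^{-1}$ is a nonzero positive semi-definite matrix whenever the covariances $\Cmat_j$ are nonzero, so the averaged operator $\Wmat = \frac{1}{n_p}\sum_{i} \Pmat_i^T \bF_i \Pmat_i$ is nonzero and, being p.s.d. by Lemma~\ref{lem:22}, has $\lambda_{\mbox{\scriptsize max}} > 0$.

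With all the hypotheses verified, I would apply Theorem~\ref{th:prox} directly to $p(\by) = \Wmat\by$: part (i) yields that $p$ is the proximity operator of some convex function $\phi$, and part (ii) identifies $\phi$ as the closed, proper, convex function in \eqref{eq:phi_def}, built from the eigendecomposition $\Wmat = \Qmat \Lambda \Qmat^T$. This is precisely the assertion of the corollary.

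The argument contains no genuine obstacle; all the nontrivial content has already been discharged in proving Lemma~\ref{lem:22} (the spectral bounds) and Theorem~\ref{th:prox} (the Moreau-type characterization together with the explicit construction of $\phi$). The single point that warrants an explicit sentence, rather than being taken for granted, is the strict positivity $\lambda_{\mbox{\scriptsize max}} > 0$, since Lemma~\ref{lem:22} alone only guarantees $\lambda_{\mbox{\scriptsize max}} \leq 1$ and leaves open the degenerate case $\Wmat = 0$, which Theorem~\ref{th:prox} excludes by hypothesis.
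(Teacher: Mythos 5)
Your proposal is correct and takes essentially the same route as the paper, which derives the corollary immediately by combining Lemma~\ref{lem:22} with Theorem~\ref{th:prox}. Your additional check that $\lambda_{\mbox{\scriptsize max}} > 0$ (i.e., that $\Wmat \neq 0$) addresses a small mismatch between the lemma's conclusion and the theorem's hypothesis that the paper silently glosses over, and your argument for it is sound.
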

	
	Notice that Lemma \ref{lem:22} and Corollary \ref{lem:2} hinge  on the fact that the denoiser uses fixed weights $\bbeta_m^i$, rather than being a pure MMSE denoiser. In fact, the simple univariate example in Fig.~\ref{fig:nonlinear} shows that an MMSE denoiser may not even be non-expansive.
	%	 Recall that a function $f:\mathbb{R}^n \rightarrow \mathbb{R}^n$ is \textit{non-expansive} if, for any $\bx,\by\in \mathbb{R}^n$, 
	%		\begin{equation}
	%		\Vert f(\bx)-f(\by) \Vert^2 \leq \Vert \bx - \by \Vert^2.
	%		\end{equation}
	%		An important connection between non-expansiveness and proximity operators in given in the following theorem:
	%		\vspace{-0.2cm}
	%		\begin{theorem}[Moreau \cite{moreau1965}]  A function is a proximal mapping if and only if it is non-expansive and the sub-gradient of some convex function.
	%		\end{theorem} 
	This figure plots the MMSE estimate $\hat{\bx}$ in \eqref{eq:MMSE}, as a function of the noisy observed  $\by$, under a GMM prior with two zero-mean components (one with small and one with large variance). Clearly, the presence of regions of the function with derivative larger than 1 implies that it is not non-expansive.

	\begin{figure}
		%\begin{minipage}[b]{1.0\linewidth}
		%  \centering
		%  \centerline{\includegraphics[width=8.5cm]{images/corrcman.pdf}}
		%%  \vspace{2.0cm}
		%  \centerline{(a) Result 1}\medskip
		%\end{minipage}
		%%
		\begin{minipage}[b]{.45\linewidth}
			\centering
			\subfloat[]{\includegraphics[width=4cm]{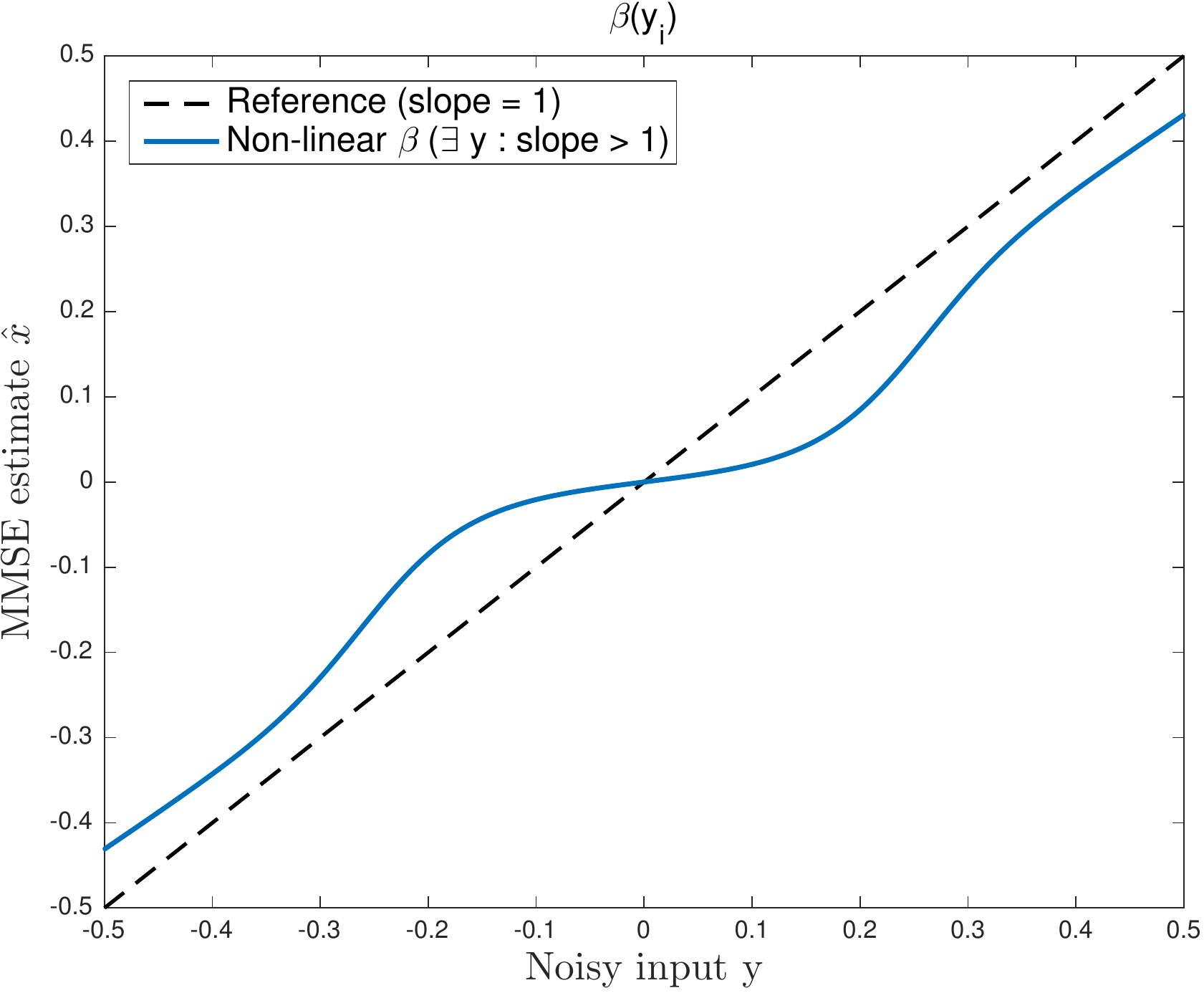}\label{fig:nonlinear}}
			%  \vspace{1.5cm}
		\end{minipage}
		\hfill
		\begin{minipage}[b]{0.45\linewidth}
			\centering
			\subfloat[]{\includegraphics[width=4cm]{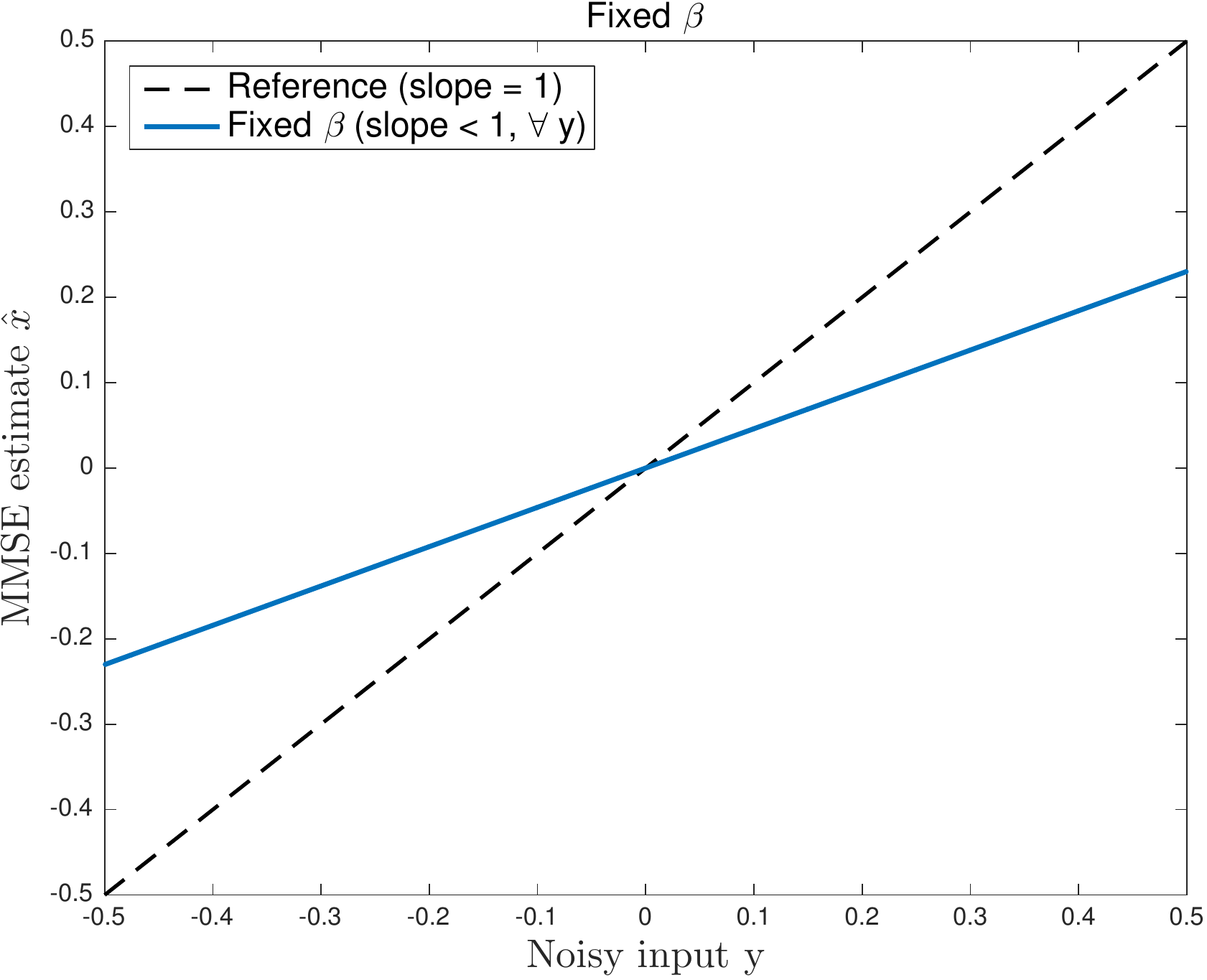}\label{fig:fixed}}
			%  \vspace{1.5cm}
		\end{minipage}
		\caption{Denoiser expansiveness example: (a) Non-linear weights, $\beta(\by_i)$; (b) Fixed weights, $\beta$.}
		\label{fig:exp}
	\end{figure}
	
	\subsection{Convergence Proof}
	
	Corollary \ref{lem:2} allows proving convergence of the proposed algorithm. This result is formalized in Corollary \ref{th:conv}.
	
	\begin{corollary}\label{th:conv}
		Consider the SALSA algorithm (in \eqref{eq:ineq1b}--\eqref{eq:ineq3b}) applied to a problem of the form \eqref{eq:sum1J}, with  $\Hmat_j = \Imat$, for at least one $j\in\{1,...,j\}$. Consider also that $g_1, \dots, g_{J-1}$ are proper closed convex functions, and $g_J = \phi$, where  $\phi$ is as defined in \eqref{eq:phi_def}. Furthermore, assume that the intersection of the domains of $g_1,...,g_{J}$ is not empty. Then, the algorithm converges.
	\end{corollary}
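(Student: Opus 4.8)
The plan is to exhibit the hypothesized SALSA recursion as an instance of plain ADMM on the canonical problem \eqref{eq:admmcan} and then invoke Theorem~\ref{th:admm} verbatim. Recall from Section~\ref{sec:admm} that problem \eqref{eq:sum1J} is cast into \eqref{eq:admmcan} by setting $f(\bx) = 0$, stacking $\Hmat = [\Hmat_1^T, \dots, \Hmat_J^T]^T$ and $\bv = [\bv_1^T, \dots, \bv_J^T]^T$, and defining $g(\bv) = \sum_{j=1}^J g_j(\bv_j)$. With this identification, iterations \eqref{eq:ineq1b}--\eqref{eq:ineq3b} are exactly \eqref{eq:ineq1}--\eqref{eq:ineq3}, so it suffices to check the three requirements of Theorem~\ref{th:admm}: full column rank of $\Hmat$, and closedness, properness, and convexity of $f$ and $g$.

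First I would dispatch the conditions on $f$ and $\Hmat$. The choice $f\equiv 0$ is trivially closed, proper, and convex. For the rank condition I would use the hypothesis that $\Hmat_{j_0} = \Imat$ for some $j_0$: since every summand is p.s.d., $\Hmat^T \Hmat = \sum_{j=1}^J \Hmat_j^T \Hmat_j \succeq \Hmat_{j_0}^T \Hmat_{j_0} = \Imat \succ 0$, so $\Hmat^T\Hmat$ is nonsingular and $\Hmat$ has full column rank.

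The substantive step is closedness, properness, and convexity of the separable sum $g(\bv) = \sum_j g_j(\bv_j)$, and this is where the earlier analysis is indispensable. For $j < J$ the $g_j$ are closed proper convex by assumption; the critical term is $g_J = \phi$. It is precisely here that I would invoke Theorem~\ref{th:prox}(ii) together with Corollary~\ref{lem:2}, which assert that $\phi$ in \eqref{eq:phi_def} is closed, proper, and convex and, moreover, that the GMM denoiser \eqref{eq:defW} is exactly its proximity operator; this makes the $\bv_J$-update in \eqref{eq:ineq2c} a bona fide proximal step. This is the non-trivial content established earlier, and it is exactly what fails for a genuine (variable-weight) MMSE denoiser, which need not even be non-expansive (Fig.~\ref{fig:exp}). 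Since each block $g_j$ acts on its own variable $\bv_j$, the effective domain of $g$ is the Cartesian product $\prod_j \mathrm{dom}(g_j)$, which is nonempty because each $g_j$ is proper; separability then makes $g$ an l.s.c. (hence closed) sum of convex functions, therefore convex.

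Finally, Theorem~\ref{th:admm} yields $\bx^{(k)} \to \bx^\ast$ \emph{provided} a minimizer of \eqref{eq:sum1J} exists, so the remaining task is to certify existence and thereby exclude the divergent-multiplier alternative of the theorem. Here I would use the assumption that the domains of $g_1,\dots,g_J$ have a common point (pulled back through the $\Hmat_j$), which guarantees that the combined objective $\sum_j g_j(\Hmat_j \bx)$ is proper, so its infimum is finite rather than trivially $+\infty$. I expect this attainment argument to be the main obstacle: properness and lower semicontinuity alone do not force a minimizer to exist. To close the gap I would supply a coercivity/level-boundedness argument on the subspace $S(\Wmat)$ to which $\phi$ confines its argument through the indicator $\iota_{S(\Wmat)}$ (in the two applications this coercivity is furnished by the data-fidelity term), converting the conditional ``converges if a solution exists'' of Theorem~\ref{th:admm} into the unconditional convergence claimed here.
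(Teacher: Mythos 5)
Your proposal follows the same route as the paper's own proof: cast \eqref{eq:sum1J} into \eqref{eq:admmcan} with $f\equiv 0$, obtain full column rank of $\Hmat$ from the identity block (your Gram-matrix computation $\Hmat^T\Hmat=\sum_j\Hmat_j^T\Hmat_j\succeq\Imat\succ 0$ is just a more explicit version of the paper's ``regardless of the other blocks'' remark), certify $g_J=\phi$ as closed, proper, and convex via Corollary~\ref{lem:2}, and invoke Theorem~\ref{th:admm}.

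Where you diverge is the existence step, and there your caution is not only justified but exposes a genuine flaw in the paper's own argument. The paper concludes that since the objective is a sum of closed, proper, convex functions with intersecting domains, ``the sum is itself closed, proper, and convex, implying that a minimizer is guaranteed to exist.'' That implication is false as a general principle (the map $x\mapsto e^x$ is closed, proper, and convex, yet attains no minimum), so the paper, as written, only establishes the conditional ``converges if a solution exists'' of Theorem~\ref{th:admm}. Your coercivity repair is the right one, and it can be made intrinsic rather than delegated to the data-fidelity terms of the two applications: Lemma~\ref{lem:22} gives the strict bound $\lambda_{\max}(\Wmat)<1$, hence $\bar{\Lambda}^{-1}-\Imat\succ 0$ and $\phi(\bx)\geq\frac{1}{2}\bigl(\lambda_{\max}^{-1}-1\bigr)\Vert\bx\Vert_2^2$ for $\bx\in S(\Wmat)$, while $\phi=+\infty$ off $S(\Wmat)$; thus $\phi$ itself is coercive. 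Each $g_j\circ\Hmat_j$ with $j<J$, being closed, proper, and convex, admits an affine minorant, so the full objective is coercive, and a proper, closed, convex, coercive function attains its minimum. Note that the strict inequality is essential: if one only assumes $\lambda_{\max}\leq 1$, as \eqref{eq:phi_def} and Theorem~\ref{th:prox} permit, the unconditional claim of the corollary can genuinely fail --- take $\Wmat=\Imat$ (so that $\phi\equiv 0$) and $g_1$ a nonzero linear function with $\Hmat_1=\Imat$; then no minimizer exists and, by Theorem~\ref{th:admm}, at least one of the sequences $(\bv^{(k)})$ or $(\bu^{(k)})$ must diverge. One further small point you handle correctly but the paper states loosely: the hypothesis that matters is nonemptiness of the set of $\bx$ with $\Hmat_j\bx\in\mathrm{dom}(g_j)$ for all $j$ (the domains pulled back through the $\Hmat_j$), which is what makes the composite objective proper.
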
	
	
 \begin{proof}
 	The proof amounts to verifying the conditions of Theorem~\ref{th:admm}.  The first condition for convergence is that $\Hmat$ has full column rank. Since 
	\[
	\Hmat = \begin{bmatrix} \Hmat_1^T , \dots , \Hmat_{J}^T \end{bmatrix}^T ,
	\]
	the fact that $\exists j\in\{1,...,J\}\! : \; \Hmat_j = \Imat$ implies that $\Hmat$ has full column rank, regardless of the other blocks. The second condition regards functions $g_1$ to $g_{J}$; while $g_1$ to $g_{J-1}$ are assumed to be closed, proper, and convex, Corollary~\ref{lem:2} guarantees that $g_J = \phi$ is also closed, proper, and convex. Finally, the function being minimized is the sum of $J$ closed, proper, convex functions, such that the intersection of the corresponding domains is not empty; thus, the sum is itself closed, proper, and convex, implying that a minimizer is guaranteed to exist. \end{proof}

		\section{Application to Hyperspectral Sharpening}\label{sec:HSapp}
		\subsection{Formulation}
		
		Hyperspectral (HS) sharpening  is a data fusion problem where the goal is to combine HS data of very high spectral resolution, but low spatial resolution, with  multispectral (MS) or panchromatic (PAN) data of high spatial resolution, but low spectral resolution \cite{review}. This data fusion problem can be formulated as an inverse problem, where the goal is to infer an image $\Zmat$ with both high spatial and spectral resolutions, from the observed data \cite{review,Hardie}. Using compact matrix notation, the forward model can be written as
		\begin{eqnarray}
		\Ymat_h & = & \Zmat \Bmat \Mmat + \Nmat_h,\label{eq:yh} \\
		\Ymat_m & = & \Rmat \Zmat + \Nmat_m, \label{eq:ym}
		\end{eqnarray}
		where:
		\begin{itemize} 
			\item $\Zmat \in \mathbb{R}^{L_h \times n_m}$ is the  image to be inferred, with  $L_h$ bands and $n_m$ pixels (columns index pixels,  rows index the bands);
			\item $\Ymat_h \in \mathbb{R}^{L_h \times n_h}$ denotes the observed HS data, with $L_h$ bands and $n_h$ pixels (naturally, $n_h < n_m$; often, $n_h \ll n_m$)
			\item $\Ymat_m \in \mathbb{R}^{L_m \times n_m}$ is the observed MS/PAN data, with $L_m$ bands and $n_m$ pixels (naturally, $L_m < L_h$; often, $L_m \ll L_h$);
			\item $\Bmat \in \mathbb{R}^{n_m \times n_m}$ represents  a spatial low-pass filter;
			\item $\Mmat \in \mathbb{R}^{n_m \times n_h}$ is the spatial sub-sampling operator;
			\item $\Rmat \in \mathbb{R}^{L_m \times L_h}$ models the spectral response of the MS or PAN sensor;
			\item $\Nmat_h$ and $\Nmat_m$ represent noise, assumed white,  Gaussian, with known variances $\sigma_h^2$ and $\sigma_m^2$, respectively.
		\end{itemize}
		Eqs. \eqref{eq:yh}-\eqref{eq:ym} model the HS data 	$\Ymat_h$ as a spatially blurred and downsampled version of $\Zmat$, and the MS data $\Ymat_m$ as a spectrally degraded version of $\Zmat$,  via the spectral response matrix $\Rmat$. In this paper, we assume that $\Bmat$ and $\Rmat$ are known, and that the spatial filter $\Bmat$ is cyclic so that we may handle it efficiently using the \textit{fast Fourier transform} (FFT); this assumption could be easily relaxed using the method  in \cite{Almeida2013}. Moreover, other types of noise could also be considered within this formulation \cite{qwei}.
		
		Since the number $L_h  \, n_m$ of unknowns in $\Zmat$ is typically much larger than the number of measurements, $L_h  \, n_h + L_m  \, n_m$, the problem of estimating $\Zmat$ is ill-posed. Current state-of-the-art methods tackle it using a variational formulation (equivalently, \textit{maximum a posteriori }-- MAP) \cite{review}, \textit{i.e.},  seeking an estimate $	\widehat{\Zmat}$ according to
		\begin{equation}
		\widehat{\Zmat}  \in  \underset{\Zmat}{\argmin}   \;  \frac{\Vert \Zmat \Bmat \Mmat  - \Ymat_h \Vert_F^2}{2} +  \frac{ \lambda \Vert \Rmat \Zmat  - \Ymat_m \Vert_F^2}{2} + \tau \;\xi(\Zmat), \label{eq:optimization1}
		\end{equation}
		where $\xi$ is the regularizer (or negative log-prior), while parameters $\lambda$ and $\tau$ control the relative weight of each term.
		
		One way to alleviate the difficulty associated with the very high dimensionality of $\Zmat$ is to exploit the fact that the spectra in a HS image typically live in a subspace of $\mathbb{R}^{L_h}$ of dimension $L_s$ (often with $L_s  \ll L_h$), due to  the high correlation among the different bands \cite{unmixing,review}.
		This subspace can be interpreted as a set of materials existing on the scene, which is often much smaller than the number of bands. There are several methods to identify a suitable subspace, from the classical \textit{singular value decomposition} (SVD) or \textit{principal component analysis} (PCA), to methods specifically designed for HS data \cite{hysime,Nascimento2005}. Let $\Emat = \left[\textbf{e}_1, \dots, \textbf{e}_{L_s}\right] \in \mathbb{R}^{L_h \times L_s}$ contain a basis for the identified subspace, then we may write $\Zmat = \Emat \Xmat$, where each column of $\Xmat \in \mathbb{R}^{L_s\times n_m}$, denoted latent image, is a coefficient representation of the corresponding column of $\Zmat$.  Instead of estimating $\Zmat$ directly, this approach reformulates the problem as that of obtaining an estimate of $\Xmat$, from which $\Zmat$ can be recovered. Replacing $\Zmat$ by its representation $\Zmat = \Emat \Xmat$  in \eqref{eq:optimization1} leads to a new estimation criterion
		\begin{align}
		\widehat{\Xmat}  \in  \underset{\Xmat}{\argmin} \;  \frac{\Vert \Emat \Xmat \Bmat \Mmat \! - \!\Ymat_h \Vert_F^2}{2} \! + \!\frac{ \lambda \Vert \Rmat \Emat \Xmat \! - \!\Ymat_m \Vert_F^2}{2}  + \tau \, \phi(\Xmat), \label{eq:optimization}
		\end{align}
		where $\| \cdot \|_F^2$ denotes the squared Frobenius norm and $\phi$ is the regularizer acting on the coefficients.

		\subsection{SALSA for Hyperspectral Sharpening}
		We follow Sim\~oes \textit{et al.}  \cite{simoes}  and use SALSA to tackle problem \eqref{eq:optimization}. Notice that, while in \eqref{eq:optimization}, the optimization variable is a matrix $\Xmat$, in \eqref{eq:sum1J}, the optimization variable is a vector $\bx$. This is merely a difference in notation, as we can represent matrix $\Xmat$ by its vectorized version $\bx = \mbox{vec}(\Xmat)$, \textit{i.e.}, by stacking the columns of $\Xmat $ into vector $\bx$. In fact, using well-known equalities relating vectorization and Kronecker products, we can  write
		\begin{align}
		\|  \Emat \Xmat \Bmat \Mmat  - \Ymat_h \|_F^2 & =  \left\|\bigl( \Mmat^T \otimes \Emat\bigr) (\Bmat^T \otimes \Imat) \bx - \mbox{vec} (\Ymat_h) \right\|_2^2\\
		\Vert \Rmat \Emat \Xmat  -  \Ymat_m \Vert_F^2 & =   \left\|   \bigl(\Imat \otimes (\Rmat\Emat) \bigr)  \bx - \mbox{vec} (\Ymat_m) \right\|_2^2,
		\end{align} 
		and map \eqref{eq:optimization} into \eqref{eq:sum1J} by letting $J=3$, and
		\begin{align}
		g_1(\bv_1) &= \left\|\bigl( \Mmat^T \otimes \Emat\bigr) \bv_1 - \mbox{vec} (\Ymat_h) \right\|_2^2   \label{eq:g1} 
		 = \Vert \Emat \Vmat_1 \Mmat  - \Ymat_h \Vert_F^2 \\
		g_2(\bv_2) &=  \lambda  \left\|   \bigl(\Imat \otimes (\Rmat\Emat) \bigr) \bv_2 - \mbox{vec} (\Ymat_m) \right\|_2^2 \label{eq:g2} \\ 
		& =   \lambda \Vert \Rmat \Emat \Vmat_2  - \Ymat_m \Vert_F^2, \nonumber \\
		g_3(\bv_3) &=  2 \,\tau \, \phi(\Vmat_3),
		\end{align}
		where $\bv_i = \mbox{vec}(\Vmat_i)$, for $i=1,2,3$,  and 
		\begin{equation}
		\Hmat_1 = (\Bmat^T \otimes \Imat), \hspace{0.5cm}\Hmat_2 = \Imat,  \hspace{0.5cm} \Hmat_3 = \Imat.  \label{Hmatrices}
		\end{equation}
		
		Although the previous paragraph showed that we can arbitrarily switch between matrix and vector representations, in what follows it is more convenient to keep the matrix versions. Each iteration of the resulting instance of ADMM  has the form
		\begin{align}
		& \Xmat^{(k+1)}  =   \arg\min_{\Xmat} \,   \Vert \Xmat \Bmat -  \Vmat^{(k)}_1  -  \Dmat^{(k)}_1 \Vert_F^2 + \Vert \Xmat  -  \Vmat_2^{(k)}  -  \Dmat^{(k)}_2 \Vert_F^2  \nonumber \\
		&  \hspace{2.2cm } + \Vert \Xmat -  \Vmat_3^{(k)} \! - \! \Dmat^{(k)}_3 \Vert_F^2,  \nonumber  \\
		& \Vmat_1^{(k+1)} = \arg \min_{\Vmat_1} \, \,  \Vert \Emat \Vmat_1 \Mmat - \Ymat_h \Vert_F^2 + \rho  \Vert \Xmat^{(k+1)}\Bmat - \Vmat_1 - \Dmat_1^{(k)} \Vert_F^2, \nonumber \\
		& \Vmat_2^{(k+1)}  =\arg \min_{\Vmat_2} \, \, \lambda\, \Vert \Rmat \Emat \Vmat_2 - \Ymat_m \Vert_F^2 + \rho  \Vert \Xmat^{(k+1)} - \Vmat_2 - \Dmat_2^{(k)} \Vert_F^2,  \nonumber \\
		& \Vmat_3^{(k+1)}  =\arg \min_{\Vmat_3} \, \, \phi(\Vmat_3) + \frac{\rho}{2\, \tau} \Vert \Xmat^{(k+1)} - \Vmat_3 - \Dmat_3^{(k)} \Vert_F^2, \label{eq:mpo}\\
		&\Dmat_1^{(k+1)} = \Dmat_1^{(k)} - \left(\Xmat^{(k+1)}\Bmat - \Vmat_1^{(k+1)}\right), \label{D1update}\\
		&\Dmat_2^{(k+1)} = \Dmat_2^{(k)} - \left(\Xmat^{(k+1)} - \Vmat_2^{(k+1)}\right), \label{D2update}\\
		&\Dmat_3^{(k+1)} = \Dmat_3^{(k)} - \left(\Xmat^{(k+1)} - \Vmat_3^{(k+1)}\right) , \label{D3update}
		\end{align}
		where $\rho$ is the so-called \textit{penalty parameter} of ADMM \cite{boyd}. The first three problems are quadratic, thus with closed-form solutions involving matrix inversions  \cite{simoes}:
		\begin{align}
		\Xmat^{(k+1)} & = \label{Xupdate}  \\
		& \hspace{-1cm } \left[ \left(\Vmat_1^{(k)} \! + \!\Dmat_1^{(k)}\right)\Bmat^T + \Vmat_2^{(k)} +  \Dmat_2^{(k)} + \Vmat_3^{(k)} +  \Dmat_3^{(k)} \right]\left[ \Bmat \Bmat^T  \!+ \! 2\, \Imat \right]^{-1}, \nonumber \\
		\Vmat_1^{(k+1)} & = \left[\Emat^T\Emat + \rho\Imat\right]^{-1}\left[\Emat^T\Ymat_h + \rho\left(\Xmat^{(k+1)}\Bmat - \Dmat_1^{(k)}\right)\right]\odot\Mmat \notag \\ & \quad + \left(\Xmat^{(k+1)}\Bmat - \Dmat_1^{(k)}\right)\odot \left(1-\Mmat\right), \label{V1update} \\
		\Vmat_2^{(k+1)} & = \left[\lambda\Emat^T\Rmat^T\Rmat\Emat + \rho\Imat\right]^{-1}\left[\lambda\Emat^T\Rmat^T\Ymat_m + \rho\left(\Xmat^{(k+1)} - \Dmat_2^{(k)}\right)\right], \label{V2update}
		\end{align}
		where $\odot$ denotes entry-wise (Hadamard) product.  The matrix inversion in \eqref{Xupdate} can be computed with $O(n_m \log n_m)$ cost via  FFT, assuming periodic boundary conditions or unknown boundaries  \cite{Almeida2013}. The matrix inversions in \eqref{V1update} and \eqref{V2update} involve matrices of size $L_s \times L_s$; since $L_s$ is typically a number around $10\sim30$, the cost of these inversions is marginal. Moreover, with fixed $\rho$, these two inverses can be pre-computed \cite{simoes}.
		
		\subsection{GMM-Based PnP-SALSA Algorithm}
		Finally, the PnP algorithm is obtained by letting the regularizer $\phi$ be as defined in \eqref{eq:phi_def}, \textit{i.e.},  \eqref{eq:mpo} takes the form
		\begin{equation}
	    \Vmat_3^{(k+1)}  = \Wmat \, \bigl( \Xmat^{(k+1)} \! - \Dmat_3^{(k)}  \bigr) , \label{eq:mpo2}
		\end{equation}
		where $\Wmat$ is as given in \eqref{eq:linMMSE}--\eqref{eq:defW}, with $\sigma^2 = \tau/\rho$. Of course, \eqref{eq:mpo2} is just the formal representation of the denoiser acting on $\bigl( \Xmat^{(k+1)} - \Dmat_3^{(k)}  \bigr)$; matrix $\Wmat$ is never explicitly formed, and the multiplication by $\Wmat$ corresponds to the patch-based denoiser described in Section \ref{sec:gmmden}. The resulting PnP-SALSA algorithm is as shown in Algorithm 1, and its convergence is stated by the following corollary.
				\begin{corollary}
			Algorithm 1 converges.
		\end{corollary}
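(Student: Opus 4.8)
The plan is to show that, thanks to Corollary~\ref{lem:2}, the plug-and-play substitution does \emph{not} take us outside the scope of ordinary SALSA/ADMM, so that convergence follows by a direct application of Corollary~\ref{th:conv} (equivalently, by verifying the hypotheses of Theorem~\ref{th:admm}). First I would identify Algorithm~1 as the instance of SALSA \eqref{eq:ineq1b}--\eqref{eq:ineq3b} associated with $J=3$, the matrices in \eqref{Hmatrices}, the two data-fidelity terms $g_1,g_2$ of \eqref{eq:g1}--\eqref{eq:g2}, and a regularizing block $g_3$. The terms $g_1$ and $g_2$ are finite quadratic forms, hence closed, proper, and convex, with full domain $\mathbb{R}^n$.

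The key step is to reinterpret the PnP update \eqref{eq:mpo2}. The denoising step $\Vmat_3^{(k+1)} = \Wmat(\Xmat^{(k+1)} - \Dmat_3^{(k)})$, with $\sigma^2 = \tau/\rho$, replaces the proximity step that SALSA would nominally perform on the block $g_3$. By Corollary~\ref{lem:2}, the operator $\Wmat$ of \eqref{eq:defW} is \emph{exactly} the proximity operator of the closed, proper, convex function $\phi$ of \eqref{eq:phi_def}. Consequently, taking $g_3 = \rho\,\phi$ (a positive multiple of $\phi$, hence still closed, proper, and convex, and still of the indicator-plus-quadratic form required by Corollary~\ref{th:conv}), the update \eqref{eq:mpo2} coincides with the exact SALSA update $\Vmat_3^{(k+1)} = \mbox{prox}_{g_3/\rho}(\Hmat_3\Xmat^{(k+1)} - \Dmat_3^{(k)})$. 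Thus Algorithm~1 is a genuine, \emph{exact} instance of SALSA: the algorithmic structure is untouched, and only the regularizer has been specified implicitly through the denoiser.

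It then remains to verify the three hypotheses of Corollary~\ref{th:conv}. First, since $\Hmat_2 = \Hmat_3 = \Imat$ in \eqref{Hmatrices}, at least one block equals the identity, so the stacked matrix $\Hmat = [\Hmat_1^T,\Hmat_2^T,\Hmat_3^T]^T$ has full column rank regardless of $\Hmat_1$. Second, $g_1,g_2$ are closed, proper, convex as noted, and $g_3 = \rho\,\phi$ is closed, proper, convex by Corollary~\ref{lem:2}. Third, I would check that $\mbox{dom}(g_1)\cap\mbox{dom}(g_2)\cap\mbox{dom}(g_3)\neq\emptyset$: since $\mbox{dom}(g_1)=\mbox{dom}(g_2)=\mathbb{R}^n$ and, from \eqref{eq:phi_def}, $\mbox{dom}(g_3)=S(\Wmat)$, the intersection equals the subspace $S(\Wmat)$, which is nonempty (it contains $\bzero$). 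With all hypotheses met, Corollary~\ref{th:conv} delivers convergence of the iterates, completing the argument.

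The main obstacle is not any estimate but the conceptual step of confirming that the plugged-in denoiser leaves us inside the exact-ADMM framework: everything hinges on Corollary~\ref{lem:2} identifying $\Wmat$ as a bona fide proximity operator of a convex function, which is precisely what allows $g_3$ to be treated as an ordinary regularizing term rather than an ad hoc map lacking an objective. Once that identification is in hand, the remaining verifications (full column rank of $\Hmat$ and nonempty intersection of the domains) are immediate.
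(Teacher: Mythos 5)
Your proof is correct and follows essentially the same route as the paper: convergence is obtained as a direct application of Corollary~\ref{th:conv} (hence Theorem~\ref{th:admm}), checking that $\Hmat_2=\Hmat_3=\Imat$ gives $\Hmat$ full column rank, that $g_1,g_2$ are closed, proper, convex quadratics with full domain, and that $g_3$ is the closed, proper, convex function of \eqref{eq:phi_def} whose proximity operator is the plugged-in denoiser $\Wmat$ (Corollary~\ref{lem:2}). Your version is in fact slightly more careful than the paper's on two minor points --- the exact scaling $g_3=\rho\,\phi$ needed to match the SALSA step $\mbox{prox}_{g_3/\rho}$ with $\Wmat$, and the explicit identification $\mbox{dom}(g_3)=S(\Wmat)\ni\bzero$ when verifying the nonempty intersection of domains --- but these do not change the substance of the argument.
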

		
		\begin{proof}
			Convergence of Algorithm 1 is a direct consequence of Corollary \ref{th:conv}, with $J=3$, and $g_3 = \phi$, with $\phi$ as given by \eqref{eq:phi_def}. The condition that at least one of the $\Hmat_j$ matrices is an identity is satisfied in \eqref{Hmatrices}. Functions $g_1$ and $g_2$ (see \eqref{eq:g1} and \eqref{eq:g2})  are quadratic, thus closed, proper, and convex, and their domain is the whole Euclidean space where they are defined; consequently, $\mbox{dom}(g_1)\cap \mbox{dom}(g_2)\cap \mbox{dom}(g_3) \neq\emptyset$, implying that \eqref{eq:optimization} has at least one solution.
		\end{proof}
		
			\begin{algorithm}[thb]
			\SetAlgoLined
			\KwIn{$\Ymat_h$, $\Ymat_m$, $\Rmat$, $\Bmat$, $\Emat$, $\Wmat$, $\rho$, $\lambda$, $\tau$\;}
			\KwOut{$\widehat{\Zmat}$\;}
			Initialization:  $k=0$, $\Dmat_1^{(k)} = \Dmat_2^{(k)} = \Dmat_3^{(k)} = 0$, $\Vmat_1^{(k)} = \Vmat_2^{(k)} = \Vmat_3^{(k)} = 0$\; 
			\Repeat{convergence}{
				Compute $\Xmat^{(k+1)}$ via \eqref{Xupdate}\;
				Compute $\Vmat_1^{(k+1)}$ via \eqref{V1update}\;
				Compute $\Vmat_2^{(k+1)}$ via \eqref{V2update}\;
				Compute  $\Vmat_3^{(k+1)}$ via \eqref{eq:mpo2}\;
				Compute $\Dmat_1^{(k+1)}$ via \eqref{D1update}\;  	
				Compute $\Dmat_2^{(k+1)}$ via \eqref{D2update}\;  	
				Compute $\Dmat_3^{(k+1)}$ via \eqref{D3update}\;
				$k \leftarrow k+1$
			}
			Reconstruct HS high resolution image: $\widehat{\Zmat}= \Emat \Xmat^{(k)}$;
			\caption{PnP-SALSA for Hyperspectral Sharpening}
		\end{algorithm}
		
		\subsection{Complete Hyperspectral Sharpening Algorithm}\label{sec:complete}
		Finally, to obtain a complete HS sharpening algorithm, we still need to specify how matrices $\Emat$ and $\Wmat$ are obtained.
		Matrix $\Emat$ is obtained by applying PCA to all the spectra in the HS data, $\Ymat_h$.
		 
		As mentioned above, matrix $\Wmat$ is not explicitly formed; multiplication by $\Wmat$ corresponds to the patch-based denoiser described in Subsection \ref{sec:GMMdenoiser}, based on the covariance matrices $\Cmat_1,...,\Cmat_K$, and  weights $\beta_j^i$. These covariance matrices are learned from all the patches of all the bands of $\Ymat_m$, using the EM algorithm. With $\by_{m,p}^i$ denoting the (vectorized) $i$-th patch of the $p$-th band of $\Ymat_m$, each E-step of EM computes the posterior probability that this patch was produced by the $j$-the GMM component:
			\begin{equation}
			\beta_{p,j}^i = \beta_j (\by_{m,p}^i) = \frac{\alpha_j \; \mathcal{N}( \by_{m,p}^i ; 0, {\bf C}_j + \sigma_m^2 \, {\bf I})
			}{\sum_{k=1}^K \alpha_k \; \mathcal{N}( \by_{m,k}^i; 0, {\bf C}_k + \sigma_m^2 \, {\bf I})},
			\end{equation} 
			where $\alpha_1,...,\alpha_K$ are the current weight estimates. The M-step updates the weight and covariance estimates. The weights are updated using the standard expression in EM for GMM:
			\[
			\alpha_j \leftarrow \frac{\sum_i \sum_p \beta_{p,j}^i}{\sum_j \sum_i \sum_p \beta_{p,j}^i}, \;\;\mbox{for $j=1,...,K$.}
			\] 
			The covariances update has to be modified to take into account that the patches have noise (see \cite{Teodoro2015}),
			\[
			\Cmat_j \leftarrow \mbox{eigt}\biggl(\frac{\sum_i\sum_p \beta_{p,j}^i (\by_{m,p}^i)(\by_{m,p}^i)^T}{\sum_i\sum_p \beta_{p,j}^i} - \sigma_m^2\Imat\biggr), \;\;\mbox{for $j=1,...,K$,}
			\]
			where $\mbox{eigt}(\cdot)$ is the eigenvalue thresholding operation, which guarantees that $\Cmat_j$ is a p.s.d. matrix after subtracting $\sigma_m^2\Imat$; for a real symmetric matrix argument, $\mbox{eigt}(\Amat) = \Qmat \max(\Lambda,0) \Qmat^T$,
			 where the columns of $\Qmat$ are the eigenvectors of $\Amat$, $\Lambda$ is the diagonal matrix holding its eigenvalues, and the $\max$ operation is applied element-wise.
			
			After convergence of  EM  is declared, the posterior weights of each patch are computed by averaging across bands,
			\begin{equation}
			\beta_{j}^i = \frac{1}{L_m}  \sum_{p=1}^{L_m} \beta_{p,j}^i \label{eq:betaij}
			\end{equation}
			and stored in $\bbeta \in [ 0,1 ]^{K\times N} $ (recall that  $N$ is the number of patches). Naturally, we assume that the patches are aligned across bands, \textit{i.e.}, all the patches  $\by_{m,p}^i$, for $p=1,...,L_m$, are exactly in the same spatial location. The complete method is described in Algorithm 2.
		
			\begin{algorithm}[thb]
			\SetAlgoLined
			\KwIn{$\Ymat_h$, $\Ymat_m$, $\Rmat$, $\Bmat$, $\rho$, $\lambda$, $\tau$, $L_s$, $n_p$}
			\KwOut{$\widehat{\Zmat}$}
			$\Emat \leftarrow \mbox{PCA}(\Ymat_h, L_s)$\;
			$(\Cmat_1,...,\Cmat_K,\bbeta)  \leftarrow  \mbox{EM}(\Ymat_m,n_p,\sigma_m^2)$\;
			Build the denoiser $\Wmat$ from $\Cmat_1,...,\Cmat_K,\bbeta, $ and $\tau/\rho$\;
            $\widehat{\Zmat} \leftarrow \mbox{Algorithm 1}(\Ymat_h, \Ymat_m, \Rmat, \Bmat,\Emat,\Wmat,, \rho, \lambda, \tau)$
			\caption{Hyperspectral Sharpening Algorithm}
		\end{algorithm}
		
		\section{Application to Deblurring Image Pairs}\label{sec:deb}
		\subsection{Formulation}
		Another application that can leverage scene-adapted priors is image deblurring from blurred and noisy image pairs. It was first proposed in \cite{lim}, and it is particularly useful for improving image quality under dim lighting conditions. On the one hand, a long exposure with a hand-held camera is likely to yield blurred pictures, whereas, on the other hand, a short exposure with high ISO produces sharp, yet noisy images. The idea is then to combine these two types of images of the same scene and fuse them to obtain a sharp and clean image.
		
		The observation model in this problem is a simple instance of  \eqref{eq:yh}--\eqref{eq:ym}, with $n_m = n_h = n$, $L_m = L_h = L$, $\Mmat = \Imat$, and $\Rmat = \Imat$, \textit{i.e.}, there is neither sub-sampling nor spectral degradation from the sensor. Furthermore, we only consider grayscale images, thus $L = 1$, and we vectorize the images differently, with all the pixels stacked into a column vector. In conclusion, the observation model is
		\begin{eqnarray}
		\by_b & = & \Bmat \bx + \bn_b,\label{eq:yblur} \\
		\by_n & = & \bx + \bn_n, \label{eq:ynoisy}
		\end{eqnarray}
		where:
		\begin{itemize} 
			\item $\bx \in \mathbb{R}^{n}$ is the (vectorized) target image to be inferred;
			\item $\by_b \in \mathbb{R}^{n}$ is the  blurred image;
			\item $\by_n \in \mathbb{R}^{n}$ is the noisy sharp image;
			\item $\Bmat \in \mathbb{R}^{n \times n}$ represents a spatial blur;
			\item $\bn_b, \bn_n \in \mathbb{R}^{n}$ represent noise, assumed to be zero-mean, white, Gaussian, with variances $\sigma^2_b$ and $\sigma^2_n$, with $\sigma^2_b \ll \sigma^2_n$.
		\end{itemize}
		
	As before, we seek a MAP/variational estimate given by 
				\begin{equation}
		\widehat{\bx}  \in  \underset{\bx}{\argmin}   \;  \frac{\Vert \Bmat \bx  - \by_b \Vert_2^2}{2} +  \frac{ \lambda \Vert \bx  - \by_n \Vert_2^2}{2} + \tau \;\phi(\bx),\label{eq:optimizationpair}
		\end{equation}
		where $\phi$ is the regularizer, and $\lambda$ and $\tau$ have similar meanings as in \eqref{eq:optimization}.

		\subsection{ADMM for Deblurring with Image Pairs} \label{ssec:debpair}
		For this problem formulation, we may use the canonical form of ADMM  in \eqref{eq:admmcan}. Each iteration of the resulting algorithm becomes
		\begin{align}
		\bx^{(k+1)} & =  \arg\min_\bx  \Vert \Bmat \bx  - \by_b \Vert_2^2 +   \lambda \Vert \bx  - \by_n \Vert_2^2 \nonumber \\
		& + \rho \bigl\| \bx -\bv^{(k)} - \bu^{(k)}\bigr\|_2^2   \label{eq:xupdate}\\
		\bv^{(k+1)} & =  \arg\min_\bv  \; \phi(\bv) + \frac{\rho}{2\tau} \bigl\|  \bx^{(k+1)} - \bv -  \bu^{(k)}\bigr\|_2^2 , \label{eq:vupdate}\\
		\bu^{(k+1)} & =  \bu^{(k)} - \bx^{(k+1)} + \bv^{(k+1)}, \label{eq:uupdate}
		\end{align}
		where the objective in \eqref{eq:xupdate} is quadratic, with a closed-form solution,
		\begin{align}
		\bx^{(k+1)} & = \left[ \Bmat^T \Bmat  \!+ \! \lambda\Imat \!+ \!\rho \Imat  \right]^{-1} \left[ \Bmat^T \by_b \!+\! \lambda \by_n\! + \!\rho \left(\bv^{(k)} \!+ \! \bu^{(k)} \right) \right]. \label{eq:pairsFFT}
		\end{align}
		As above, assuming periodic or unknown boundaries, \eqref{eq:pairsFFT} can be implemented with $O(n\log n)$ cost via the FFT \cite{Almeida2013}. The proximity operator in \eqref{eq:vupdate} corresponds to the denoising step, which thus takes the form
		\begin{equation}
		\bv^{(k+1)} =  \Wmat \bigl( \bx^{(k+1)}  -  \bu^{(k)}\bigr). \label{eq:vupdate2}
		\end{equation}
        The proof of convergence of the algorithm in \eqref{eq:xupdate}--\eqref{eq:uupdate} is essentially the same as that of the previous section, so we abstain from presenting it here; we simply have to notice that \eqref{eq:vupdate2} is indeed a proximity operator and that \eqref{eq:optimizationpair} is guaranteed to have at least one solution, because it is a proper, closed, convex function.

       As in the HS sharpening case, $\Wmat$ is not explicitly formed; multiplication by $\Wmat$ corresponds to the patch-based denoiser described in Subsection \ref{sec:GMMdenoiser}, based on the covariance matrices $\Cmat_1,...,\Cmat_K$, and  weights $\beta_j^i$. These covariances are learned via EM from the patches of the noisy sharp image, and the weights of the GMM components at each patch are kept and used to denoise the corresponding patch in the blurred image.

		\section{Experimental Results}
		\label{sec:results}
		\subsection{Hyperspectral Sharpening}
		We begin by reporting an experiment designed to assess the adequacy of the GMM prior. We generate synthetic HS and MS data according to models \eqref{eq:yh} and \eqref{eq:ym}, respectively, with 50dB SNR (nearly noiseless). Afterwards, we add Gaussian noise with a given variance, namely $\sigma = 25$, and recover the clean data using the GMM-based denoiser and BM3D \cite{dabov}. The results are then compared with respect to the visual quality and \textit{peak SNR} (PSNR). Whereas BM3D estimates the clean image from the noisy one, the GMM is trained only once on the nearly noiseless PAN image. This prior is then used on all the denoising experiments: noisy PAN, noisy MS band (red channel), noisy HS band, noisy image of coefficients, see Figs~\ref{fig:den1} to \ref{fig:den4}. The weights $\beta$ are also kept fixed on the four experiments, that is, we use the weights computed in the training stage, relative to the PAN image. In all the experiments, we consider a GMM with 20 components, trained from overlapping $8 \times 8$ patches.
		
		\begin{figure}
			\begin{center}
				\begin{minipage}{.25\linewidth}
					\centering
					\subfloat[]{\includegraphics[width=\textwidth]{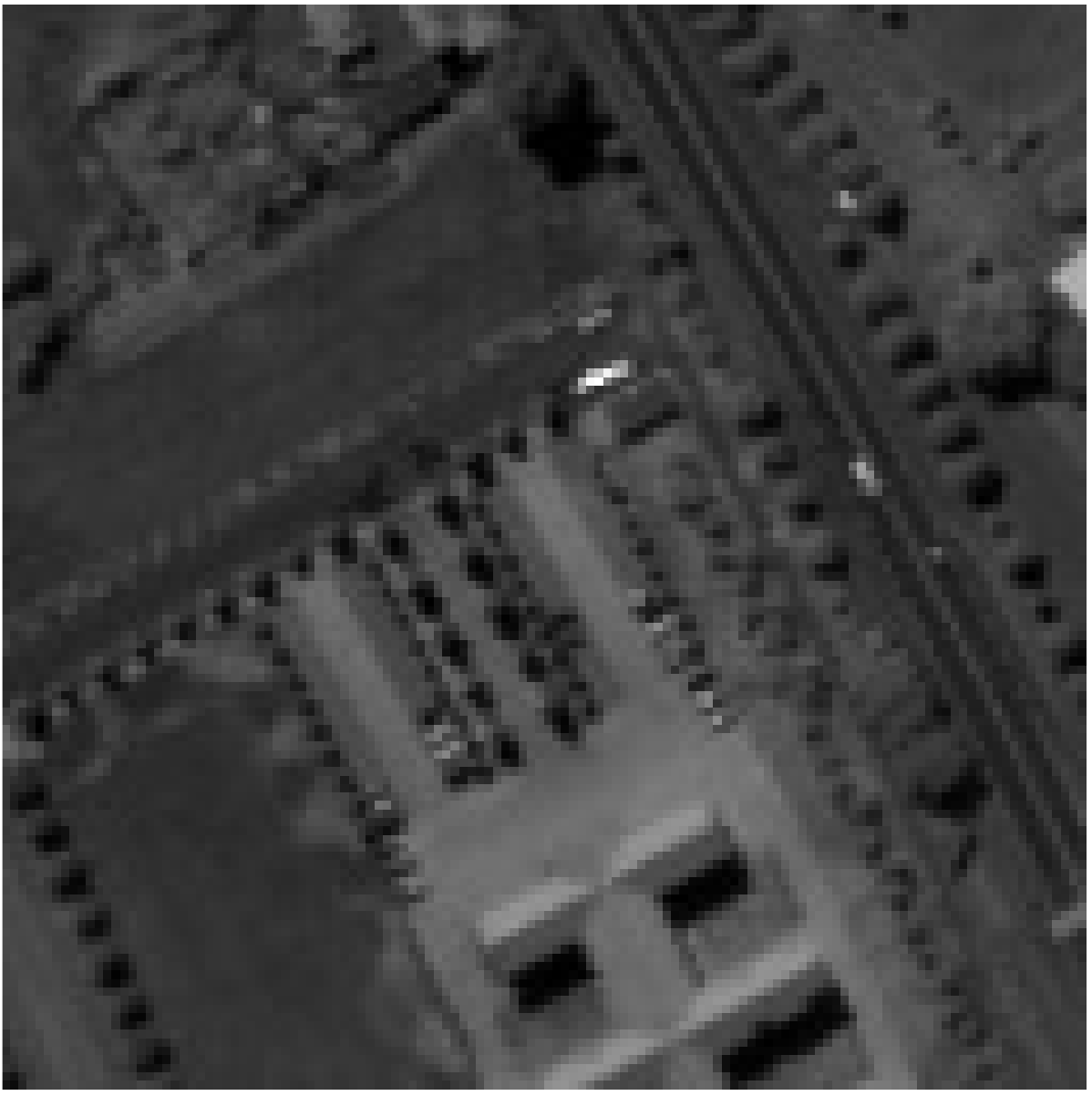}}
				\end{minipage}%
				%			\begin{minipage}{.21\linewidth}
				%				\centering
				%				\subfloat[]{\includegraphics[width=\textwidth]{images/pan_sigma5.pdf}}
				%			\end{minipage}%
				%			\begin{minipage}{.21\linewidth}
				%				\centering
				%				\subfloat[]{\includegraphics[width=\textwidth]{images/pan_sigma5_bm3d.pdf}}
				%			\end{minipage}%
				%			\begin{minipage}{.21\linewidth}
				%				\centering
				%				\subfloat[]{\includegraphics[width=\textwidth]{images/pan_sigma5_gmm.pdf}}
				%			\end{minipage}\par
				%		\end{center}
				%		\begin{center}
				\begin{minipage}{.25\linewidth}
					\centering
					\subfloat[]{\includegraphics[width=\textwidth]{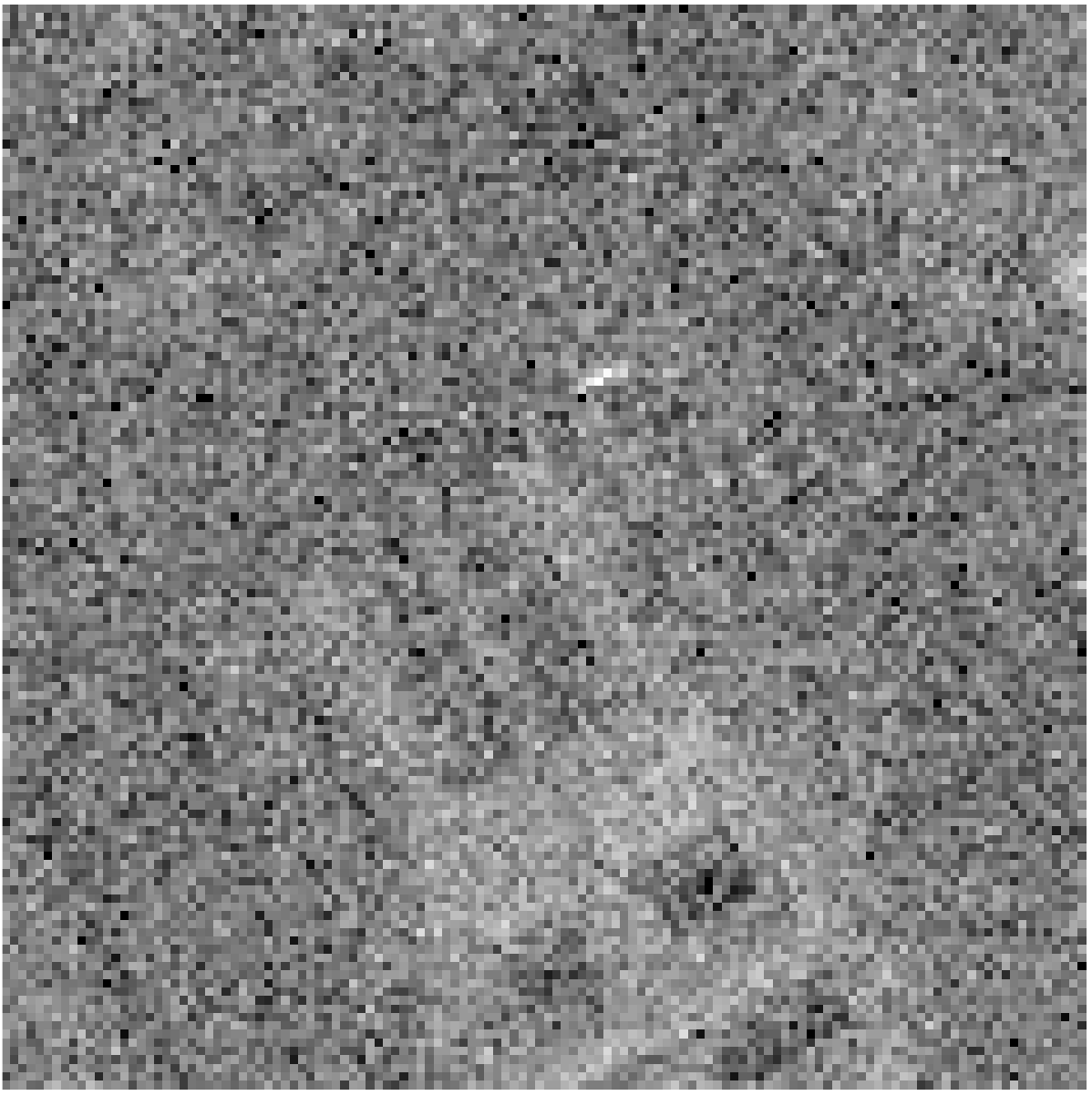}}
				\end{minipage}%
				\begin{minipage}{.25\linewidth}
					\centering
					\subfloat[]{\includegraphics[width=\textwidth]{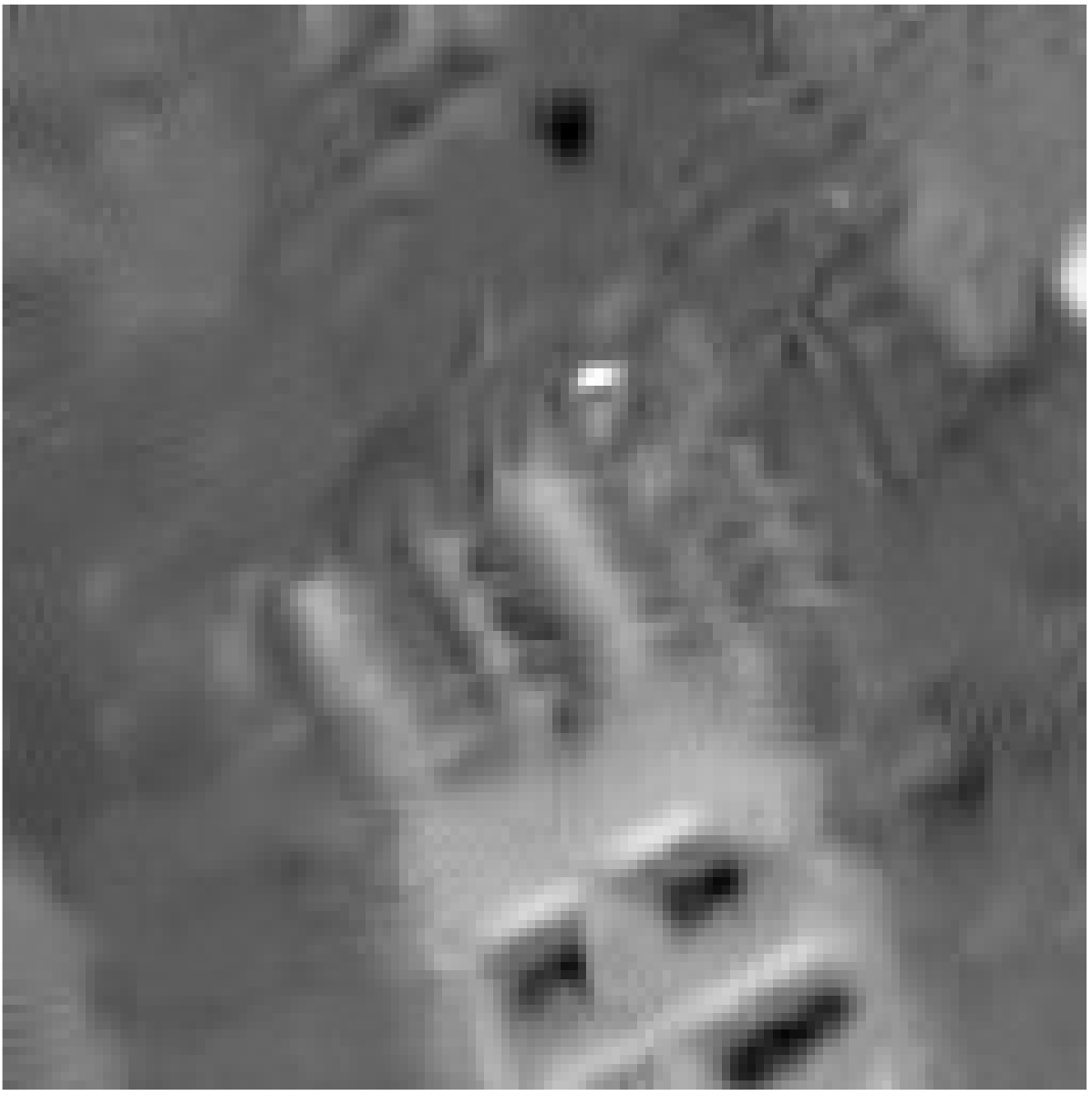}}
				\end{minipage}%
				\begin{minipage}{.25\linewidth}
					\centering
					\subfloat[]{\includegraphics[width=\textwidth]{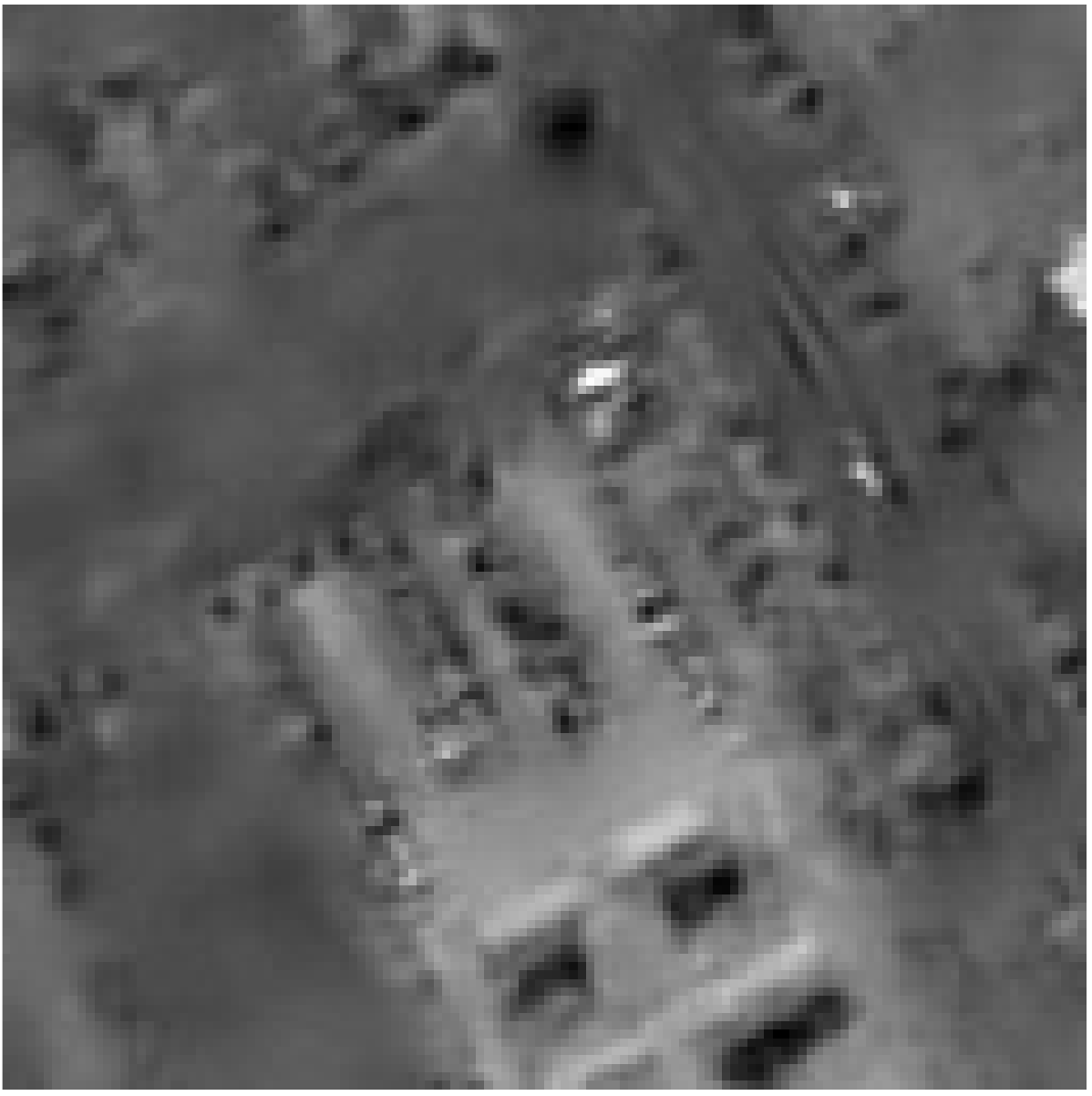}}
				\end{minipage}
			\end{center}
			\vspace{-0.1cm}
			\caption{Denoising: (a) original PAN; (b) noisy PAN ($\sigma = 25$); (c) BM3D (31.90dB); (d) GMM (32.40dB). }
			\label{fig:den1}
		\end{figure}
		
		\begin{figure}
			\begin{center}
				\begin{minipage}{.25\linewidth}
					\centering
					\subfloat[]{\includegraphics[width=\textwidth]{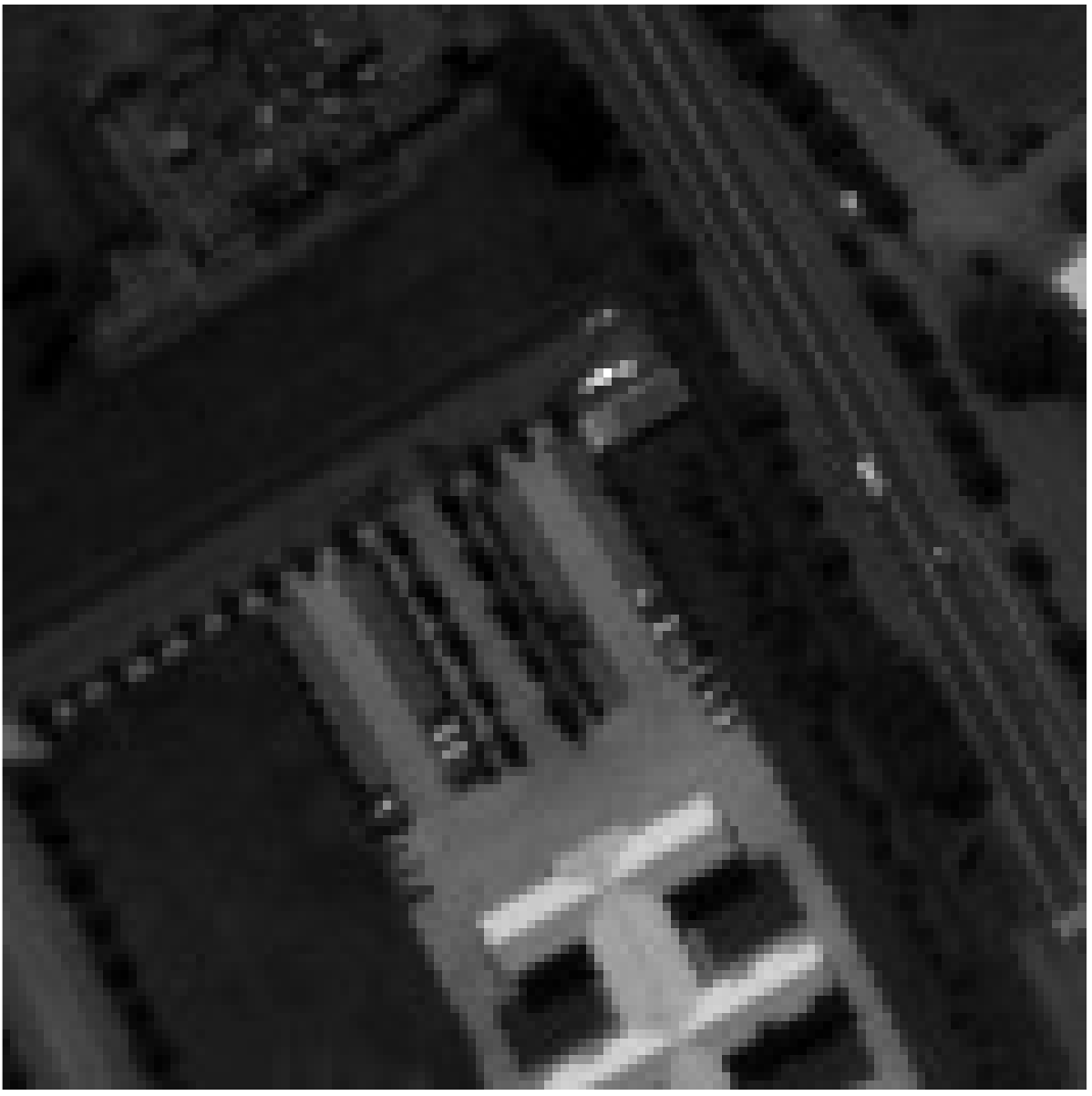}}
				\end{minipage}%
				%			\begin{minipage}{.21\linewidth}
				%				\centering
				%				\subfloat[]{\includegraphics[width=\textwidth]{images/pan_sigma5.pdf}}
				%			\end{minipage}%
				%			\begin{minipage}{.21\linewidth}
				%				\centering
				%				\subfloat[]{\includegraphics[width=\textwidth]{images/pan_sigma5_bm3d.pdf}}
				%			\end{minipage}%
				%			\begin{minipage}{.21\linewidth}
				%				\centering
				%				\subfloat[]{\includegraphics[width=\textwidth]{images/pan_sigma5_gmm.pdf}}
				%			\end{minipage}\par
				%		\end{center}
				%		\begin{center}
				\begin{minipage}{.25\linewidth}
					\centering
					\subfloat[]{\includegraphics[width=\textwidth]{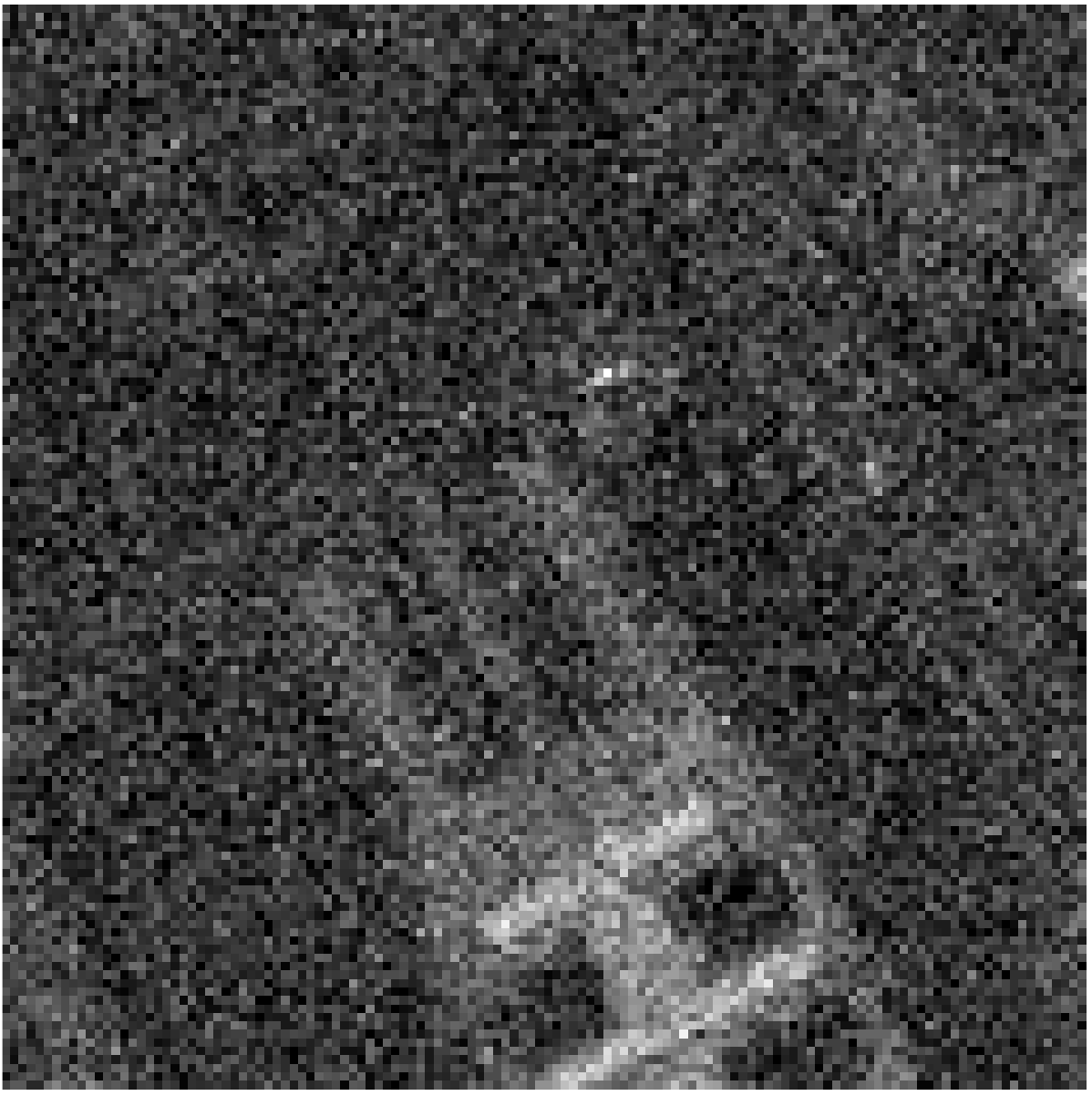}}
				\end{minipage}%
				\begin{minipage}{.25\linewidth}
					\centering
					\subfloat[]{\includegraphics[width=\textwidth]{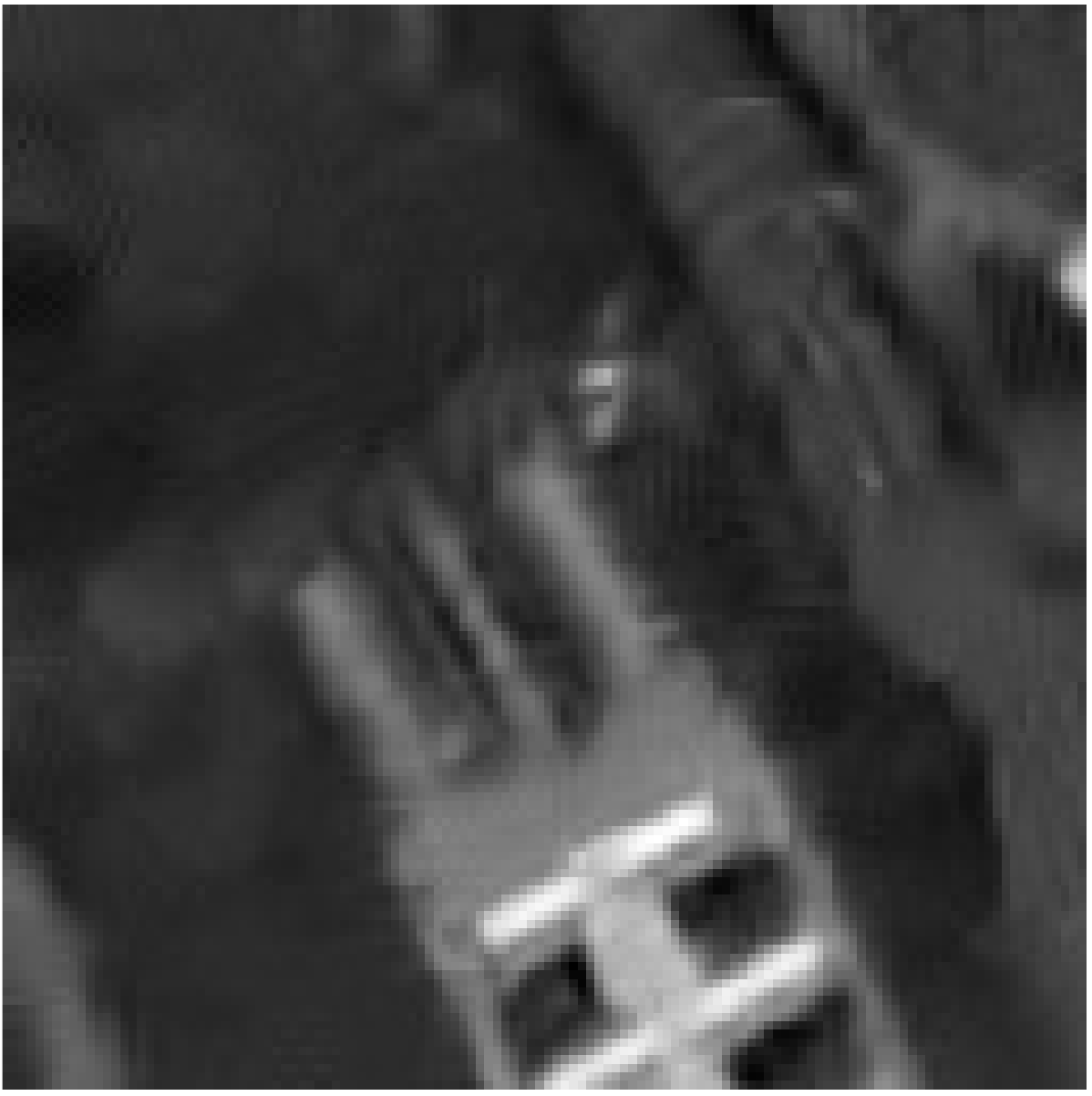}}
				\end{minipage}%
				\begin{minipage}{.25\linewidth}
					\centering
					\subfloat[]{\includegraphics[width=\textwidth]{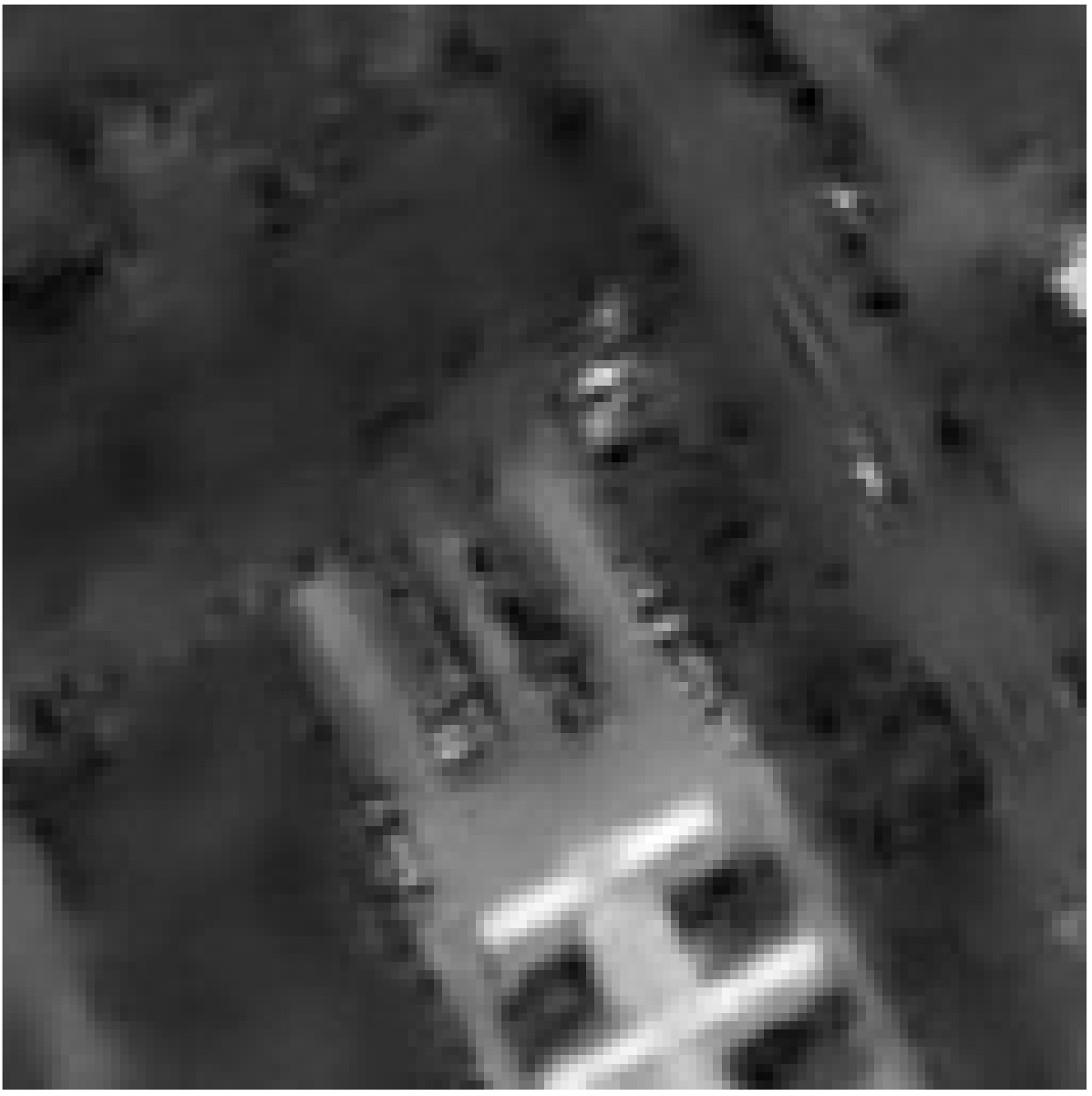}}
				\end{minipage}
			\end{center}
			\vspace{-0.1cm}
			\caption{Denoising: (a) original red band; (b) noisy red band ($\sigma = 25$); (c) BM3D (32.43dB); (d) GMM (32.52dB). }
			\label{fig:den2}
		\end{figure}
		
		\begin{figure}
			\begin{center}
				\begin{minipage}{.25\linewidth}
					\centering
					\subfloat[]{\includegraphics[width=\textwidth]{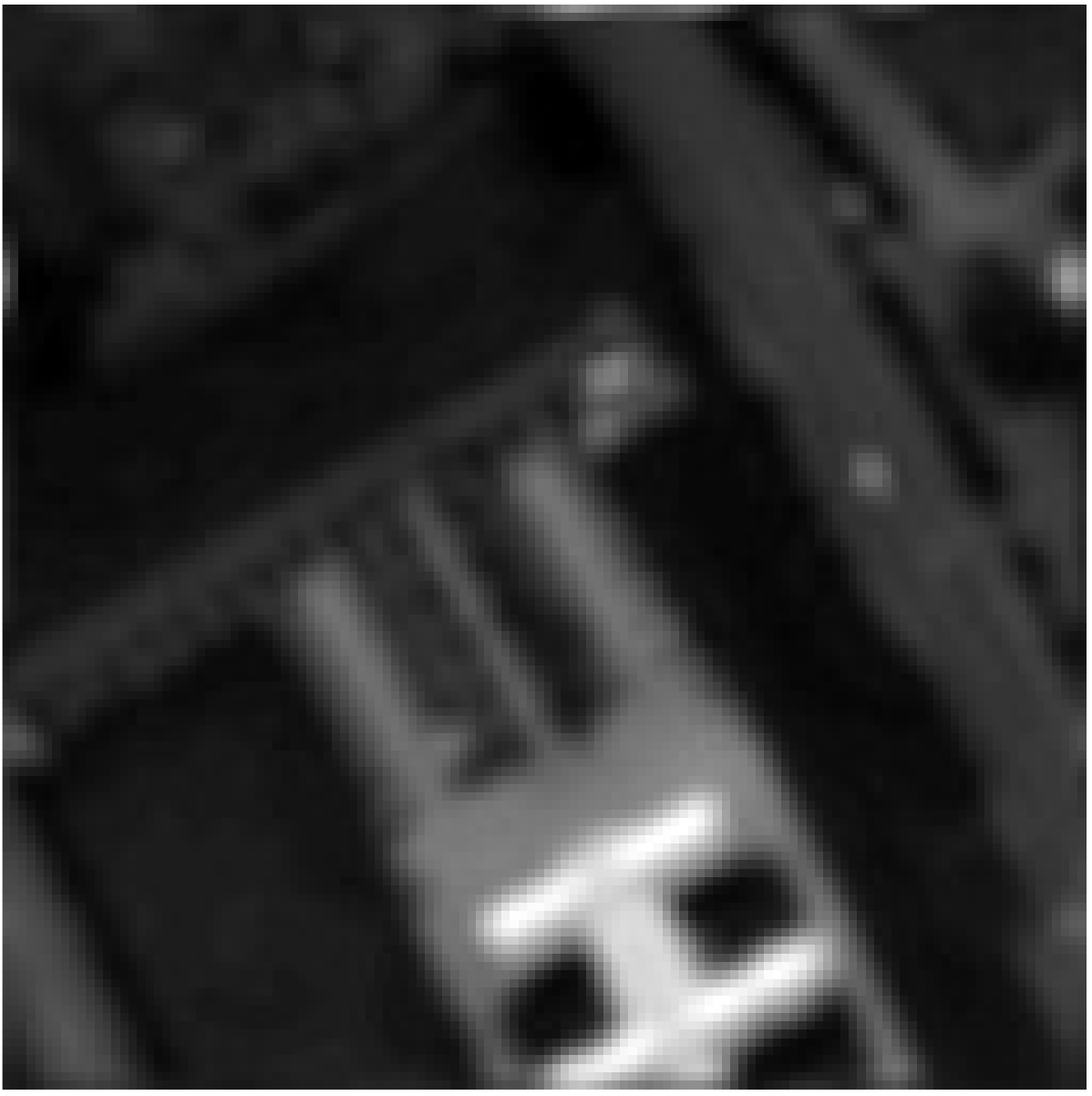}}
				\end{minipage}%
				%			\begin{minipage}{.21\linewidth}
				%				\centering
				%				\subfloat[]{\includegraphics[width=\textwidth]{images/pan_sigma5.pdf}}
				%			\end{minipage}%
				%			\begin{minipage}{.21\linewidth}
				%				\centering
				%				\subfloat[]{\includegraphics[width=\textwidth]{images/pan_sigma5_bm3d.pdf}}
				%			\end{minipage}%
				%			\begin{minipage}{.21\linewidth}
				%				\centering
				%				\subfloat[]{\includegraphics[width=\textwidth]{images/pan_sigma5_gmm.pdf}}
				%			\end{minipage}\par
				%		\end{center}
				%		\begin{center}
				\begin{minipage}{.25\linewidth}
					\centering
					\subfloat[]{\includegraphics[width=\textwidth]{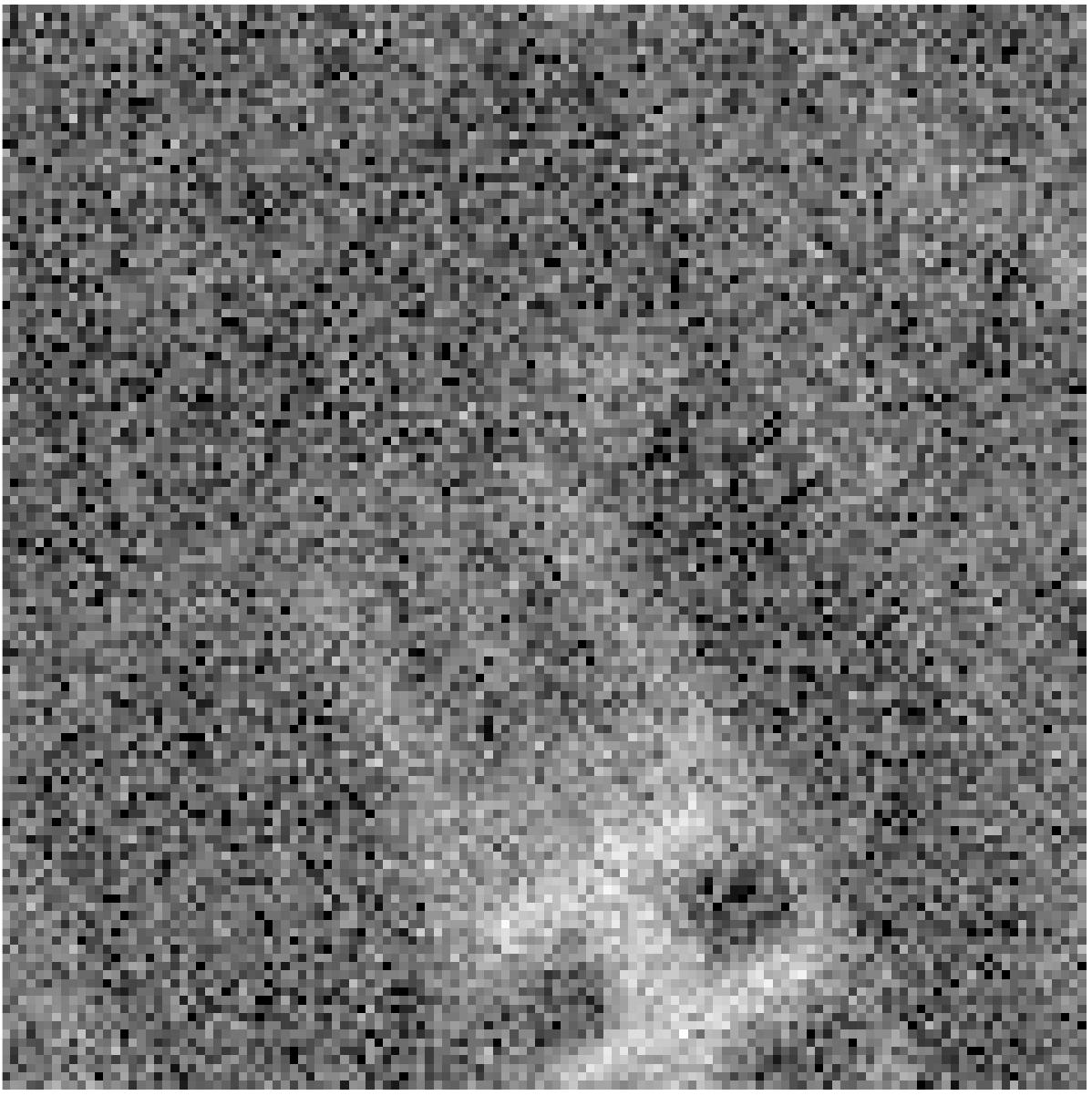}}
				\end{minipage}%
				\begin{minipage}{.25\linewidth}
					\centering
					\subfloat[]{\includegraphics[width=\textwidth]{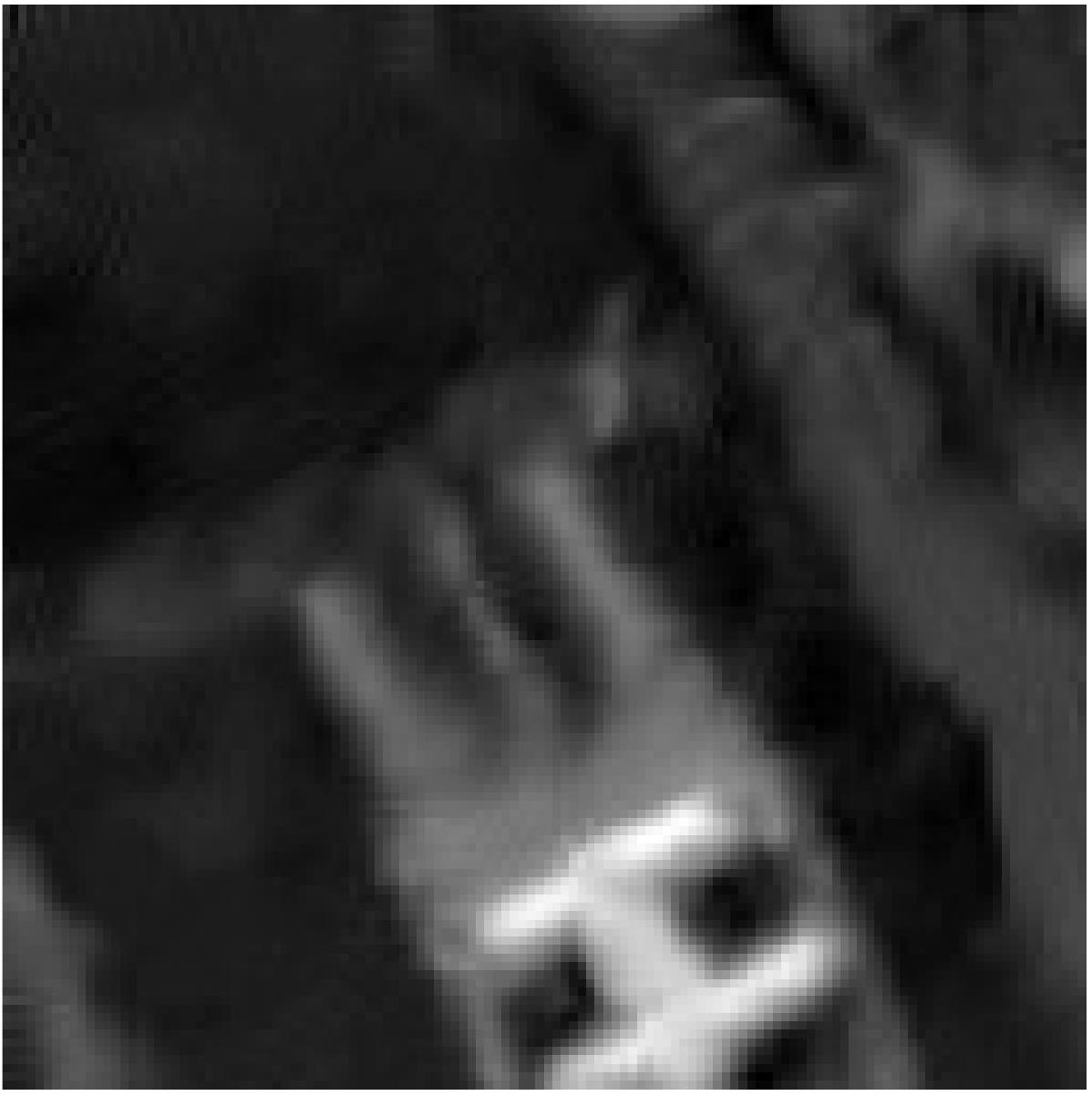}}
				\end{minipage}%
				\begin{minipage}{.25\linewidth}
					\centering
					\subfloat[]{\includegraphics[width=\textwidth]{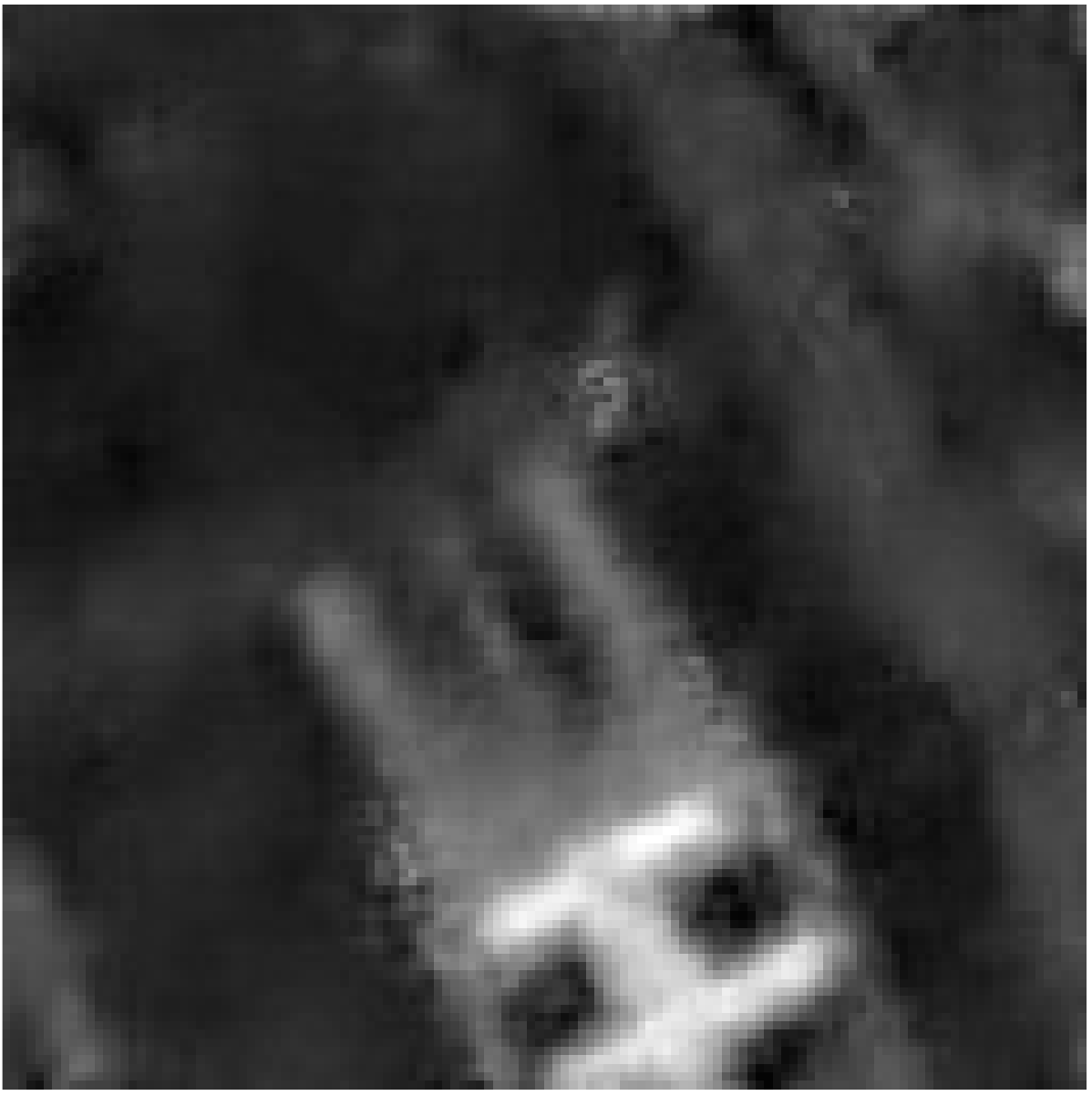}}
				\end{minipage}
			\end{center}
			\vspace{-0.1cm}
			\caption{Denoising: (a) original (blurred) HS band; (b) noisy HS band ($\sigma = 25$); (c) BM3D (36.03dB); (d) GMM (35.90dB). }
			\label{fig:den3}
		\end{figure}
		
		\begin{figure}
			\begin{center}
				\begin{minipage}{.25\linewidth}
					\centering
					\subfloat[]{\includegraphics[width=\textwidth]{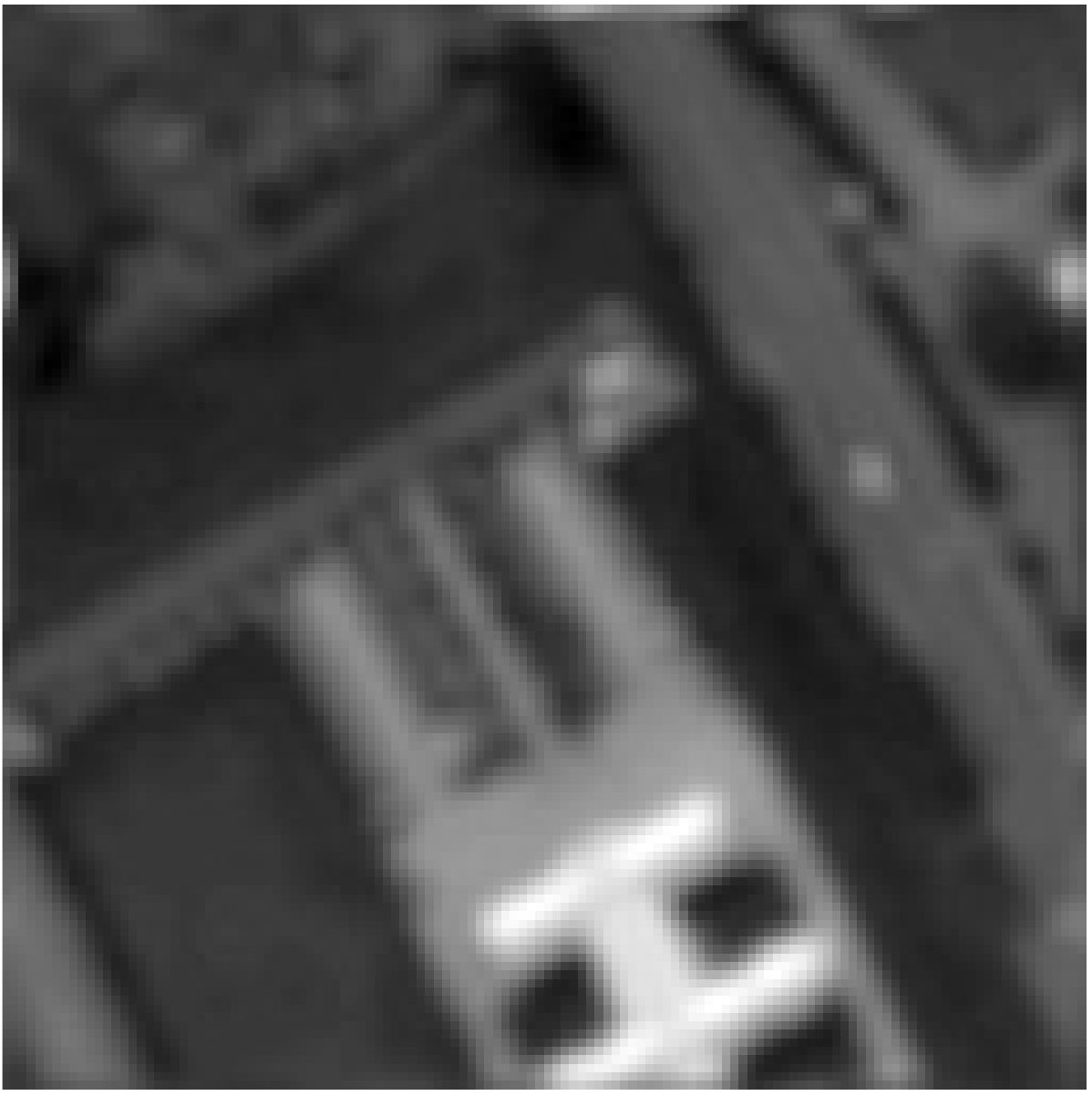}}
				\end{minipage}%
				%			\begin{minipage}{.21\linewidth}
				%				\centering
				%				\subfloat[]{\includegraphics[width=\textwidth]{images/pan_sigma5.pdf}}
				%			\end{minipage}%
				%			\begin{minipage}{.21\linewidth}
				%				\centering
				%				\subfloat[]{\includegraphics[width=\textwidth]{images/pan_sigma5_bm3d.pdf}}
				%			\end{minipage}%
				%			\begin{minipage}{.21\linewidth}
				%				\centering
				%				\subfloat[]{\includegraphics[width=\textwidth]{images/pan_sigma5_gmm.pdf}}
				%			\end{minipage}\par
				%		\end{center}
				%		\begin{center}
				\begin{minipage}{.25\linewidth}
					\centering
					\subfloat[]{\includegraphics[width=\textwidth]{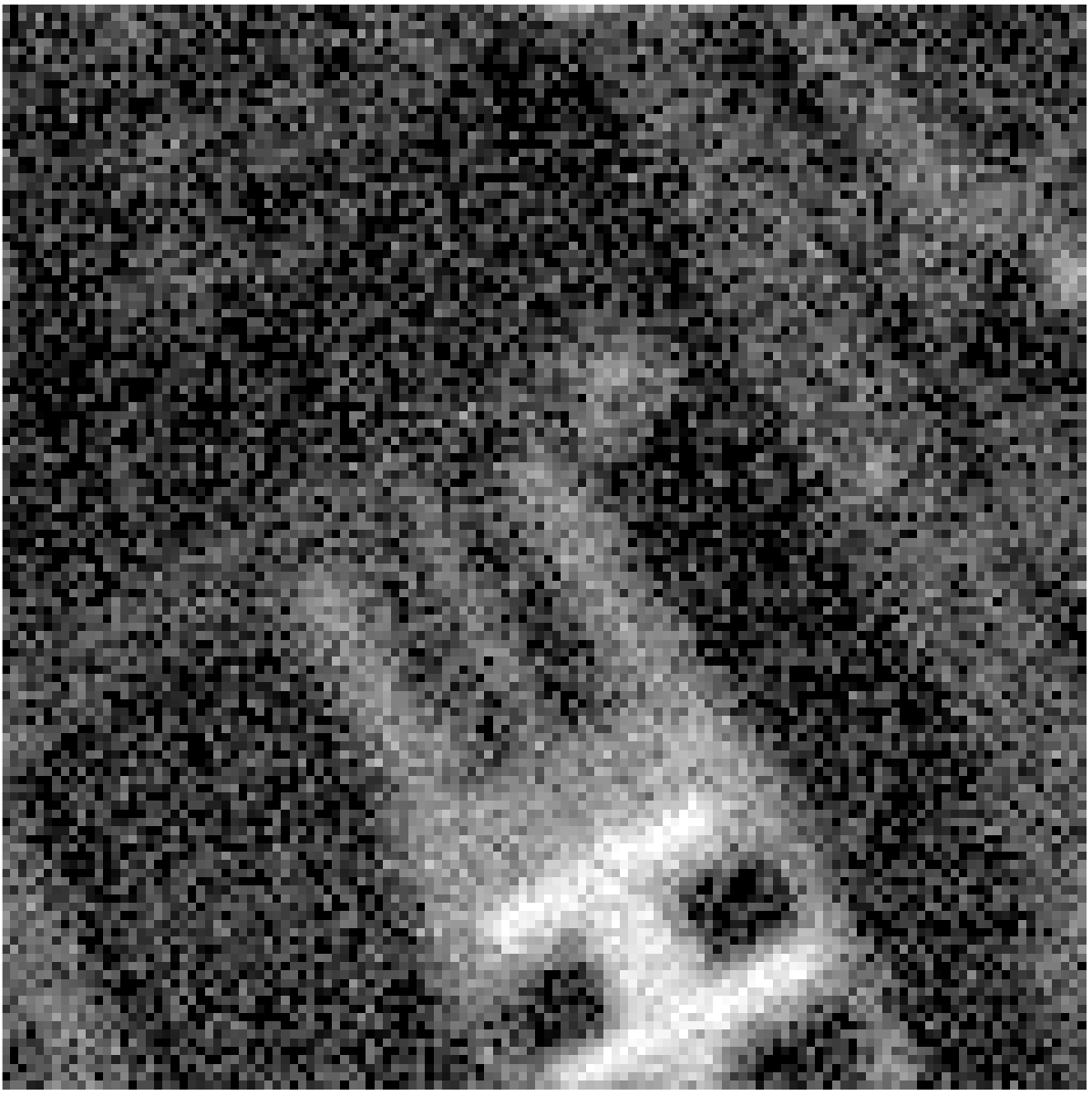}}
				\end{minipage}%
				\begin{minipage}{.25\linewidth}
					\centering
					\subfloat[]{\includegraphics[width=\textwidth]{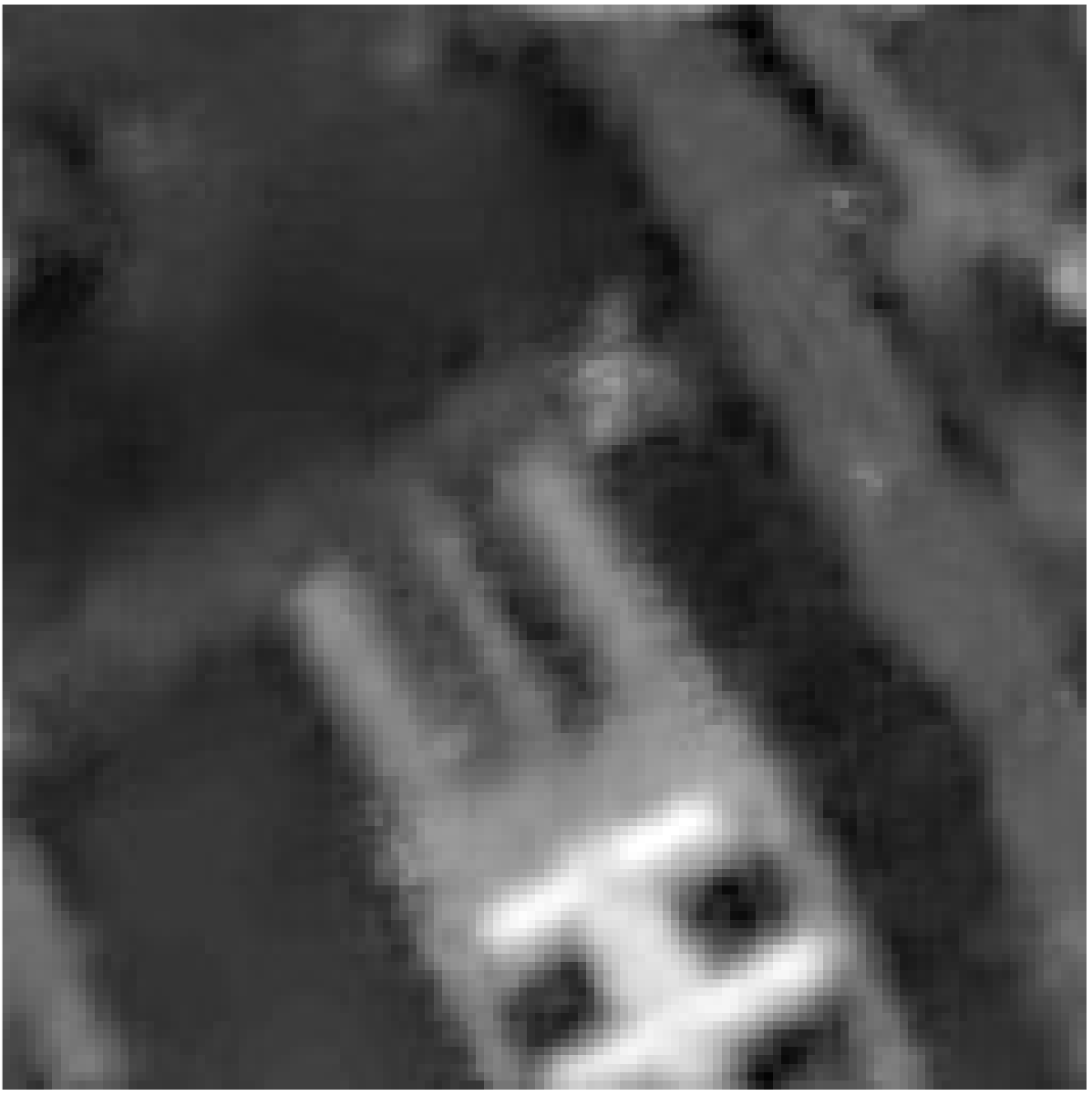}}
				\end{minipage}%
				\begin{minipage}{.25\linewidth}
					\centering
					\subfloat[]{\includegraphics[width=\textwidth]{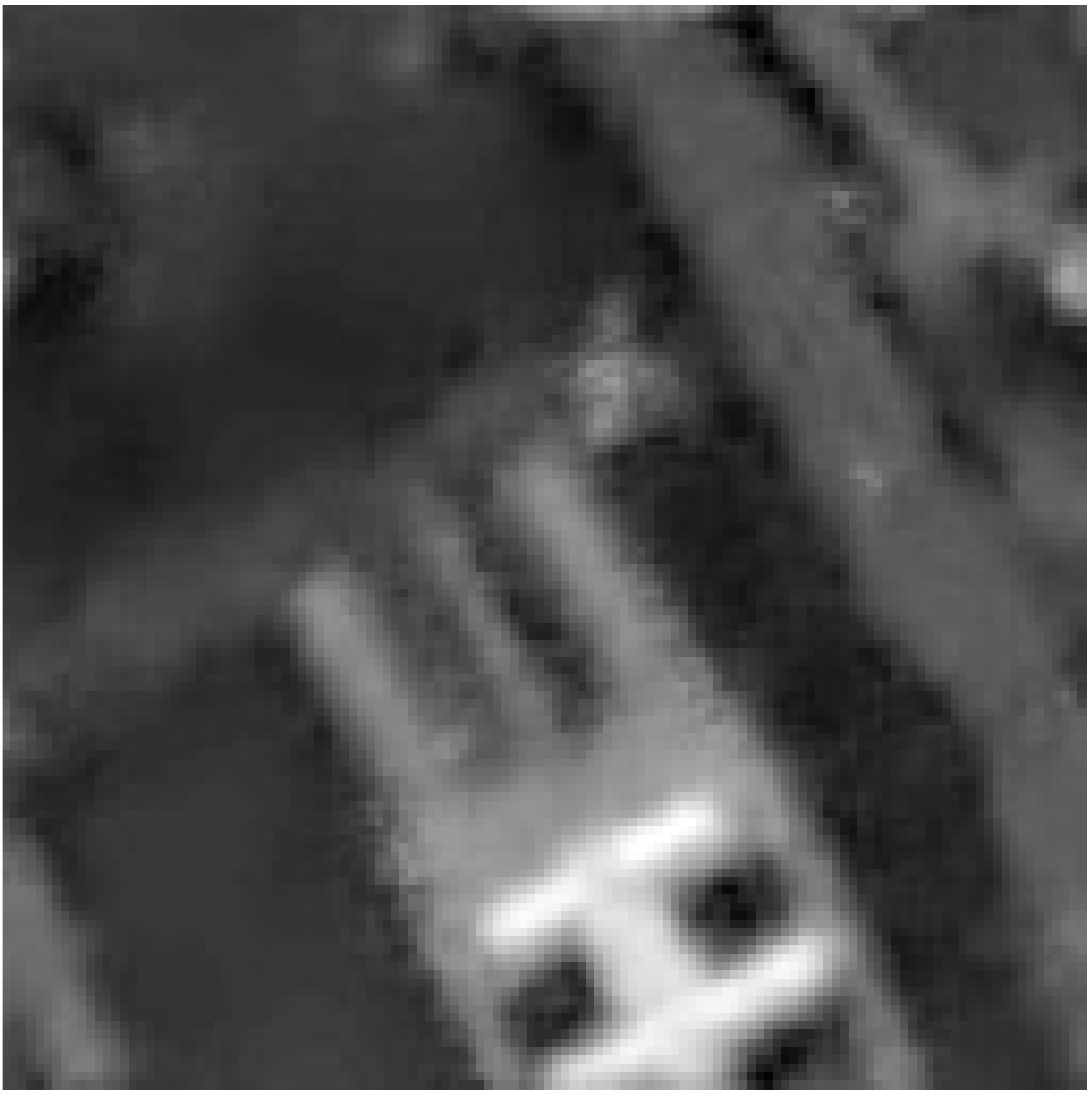}}
				\end{minipage}
			\end{center}
			\vspace{-0.1cm}
			\caption{Denoising: (a) clean coefficients image; (b) noisy coefficients image ($\sigma = 25$); (c) BM3D (33.95dB); (d) GMM (33.62dB). }
			\label{fig:den4}
		\end{figure}

		The  results support the claim that the GMM trained from the PAN image is a good prior for both the MS and HS bands, and also for the image of coefficients. In fact, when denoising the PAN and MS data, the GMM-based denoiser, with the precomputed model and weights, is able to outperform BM3D (Figs~\ref{fig:den1} and \ref{fig:den2}). Furthermore, when denoising the HS image, the same GMM prior is still able to perform competitively with BM3D, which uses only the noisy input (Figs~\ref{fig:den3} and \ref{fig:den4}). While BM3D performs very well in this pure denoising setting, being able to learn the model from the PAN image is an important feature of the GMM denoiser because, in HS sharpening, the spatial resolution of the HS bands may not be enough to learn a suitable prior. On the other hand, since the PAN image has the highest spatial resolution, it is the best band to train the model. Notice that the other MS bands, and also the HS bands and coefficients, retain most of the spatial features of the panchromatic band, which gives supporting evidence to our claim that the weights $\beta$ should be kept fixed. Table~\ref{tab:beta} shows the results when the weights are fixed versus weights computed with \eqref{eq:beta} depending on the noisy input patch (the exact MMSE estimate).
		
		\begin{table}
			\caption{Denoising: fixed weights vs varying weights.\label{tab:beta}}
			\begin{center}
				\resizebox{0.6\columnwidth}{!}{
					\begin{tabular}{c||c||c}

						Image & Fixed $\beta$ & Varying $\beta(\by_i)$ \\ \hline \hline
						PAN       	& \bf 32.40 & 31.97 \\ \hline
						Red 		& \bf 32.52 & 32.31 \\ \hline
						HS       	& \bf 35.90 & 35.86 \\ \hline
						Coefficient & \bf 33.62 & 33.45 \\ \hline \hline
					\end{tabular}
				}
			\end{center}
		\end{table}

		\begin{table*}[htb] % Straight Averaging
			\begin{center}
				\caption{HS and MS fusion on (cropped) ROSIS Pavia University and Moffett Field datasets. \label{tab:sharp1}}
				\vspace{3pt}
				\resizebox{0.95\textwidth}{!}{
					\begin{tabular}{c|c||c|c|c||c|c|c||c|c|c||c|c|c}
						\multicolumn{2}{c||}{}	& \multicolumn{3}{c||}{Exp. 1 (PAN)} & \multicolumn{3}{c||}{Exp. 2 (PAN)}  & \multicolumn{3}{c||}{Exp. 3 (R,G,B,N-IR)} & \multicolumn{3}{c}{Exp. 4 (R,G,B,N-IR)} \\
						\hline 
						\hline
						%				SNR ($\Ymat_m$) & \multicolumn{3}{c||}{50dB} & \multicolumn{3}{c||}{30dB}  & \multicolumn{3}{c||}{50dB} & \multicolumn{3}{c}{30dB}\\ \hline
						%				SNR ($\Ymat_h$) & \multicolumn{3}{c||}{50dB} & \multicolumn{3}{c||}{20dB} & \multicolumn{3}{c||}{50dB} & \multicolumn{3}{c}{20dB} \\ \hline
						Dataset & Metric & ERGAS & SAM & PSNR & ERGAS & SAM & PSNR & ERGAS & SAM & PSNR & ERGAS & SAM & PSNR\\ \hline \hline
						\multirow{3}{*}{Rosis} & Dictionary \cite{qwei} & 1.97 & 3.25 & 32.80 & 2.04 & 3.14 & 32.22 & \textbf{0.47} & \textbf{0.84} & 45.66 & 0.87 & 1.57 & 39.46 \\ \cline{2-14}
						& MMSE-GMM \cite{Teodoro2017} & 1.62 & \textit{2.66} & 34.09 & 1.72 & 2.69 & 33.46 & \textit{0.49} & \textit{0.87} & 45.64 & \textit{0.84} & 1.38 & 39.94 \\ \cline{2-14}
						& MAP-GMM \cite{Sulam} & \textit{1.56} & 2.56 & \textit{34.49} & \textit{1.64} & \textit{2.59} & \textit{33.94} & \textit{0.49} & \textit{0.87} & \textit{45.67} & \textbf{0.80} & \textit{1.35} & \textbf{40.34} \\ \cline{2-14}
						& \textbf{SA-GMM} & \textbf{1.55} & \textbf{2.53} & \textbf{34.57} & \textbf{1.64} & \textbf{2.57} & \textbf{33.96} & \textit{0.49} & \textit{0.87} & \textbf{45.68} & \textbf{0.80} & \textbf{1.34} & \textit{40.31} \\ \hline
						\multirow{3}{*}{Moffett} & Dictionary \cite{qwei} & 2.67 & 4.18 & 32.86 & 2.73 & 4.19 & 32.51  & 1.83 & 2.70 & 39.13 & 2.13 & 3.19 & 36.15 \\ \cline{2-14}
						& MMSE-GMM \cite{Teodoro2017} & 2.56 & 4.03 & 32.98 & 2.64 & 4.04 & 32.68 & 1.85 & 2.69 & 39.14 & \textbf{1.98} & 2.97 & 36.50 \\ \cline{2-14}
						&  MAP-GMM \cite{Sulam} & \textbf{2.47} & \textit{3.91} & \textit{33.28} & \textbf{2.55} & \textit{3.91} & \textit{32.96} & \textit{1.69} & \textit{2.52} & \textit{39.57} & \textit{2.01} & 2.88 & 36.61 \\ \cline{2-14}
						& \textbf{SA-GMM} & \textbf{2.47} & \textbf{3.89} & \textbf{33.30} & \textbf{2.55} & \textbf{3.90} & \textbf{32.97} & \textbf{1.67} & \textbf{2.47} & \textbf{39.66} & \textbf{1.98} & \textbf{2.84} & \textbf{36.76} \\ \hline
						\hline
					\end{tabular}
				}
			\end{center}
		\end{table*}
		
		Next, we compare the HS sharpening results using the MMSE and MAP GMM-based denoisers from \cite{Teodoro2015} and \cite{Sulam}, respectively, the scene-adapted GMM denoiser herein proposed, and the dictionary-based method from \cite{qwei} (which, to the best of our knowledge, is state-of-the-art). We use three different metrics: ERGAS ({\em erreur relative globale adimensionnelle de synth\`ese}), SAM ({\em spectral angle mapper}), and  PSNR \cite{review}, in 4 different settings. In the first one, we consider sharpening the HS images using the PAN image, both at 50dB SNR. The second experiment refers to PAN-sharpening as well, at 30dB SNR on the PAN image and last 50 HS bands, and 35dB on all the other bands \cite{qwei}. Experiments 3 and 4 use the same SNR as 1 and 2, respectively, but the HS bands are sharpened based on 4 MS bands: R, G, B, and near-infrared. Once again, a GMM with $K=20$ components is learned, and used as explained in Subsection \ref{sec:complete}. The PAN and MS images were generated from the original HS data using the IKONOS and LANDSAT spectral responses. For more details about the experimental procedure we refer the reader to \cite{qwei}.
		
		Table~\ref{tab:sharp1} shows the results on a cropped area of the \textit{ROSIS Pavia University} and \textit{Moffett Field} datasets. The four methods have comparable performances, with the proposed method being slightly better. The results also support the claim that the target image has the same spatial structure as the PAN image used to train the GMM, otherwise keeping the posterior weights would not yield good results. Figures~\ref{fig:orig}--\ref{fig:rec} show the results of experiment 4 on the Moffett Field data, in terms of visual quality, yet the differences are not visually noticeable. To better grasp the accuracy, we plot the (sorted) errors for each pixel, as well as pixel values across bands. In all experiments, the parameters of PnP-SALSA, as described on Algorithm 1, were selected using a grid search. In particular, in this example, we used $\rho = 10^{-4}$, $\lambda = 10^{-1}$, $\tau = 10^{-6}$. 
	
		\begin{figure}[htb]\centering
			\begin{minipage}{.25\linewidth}
				\centering
				\subfloat[]{\includegraphics[width=\textwidth]{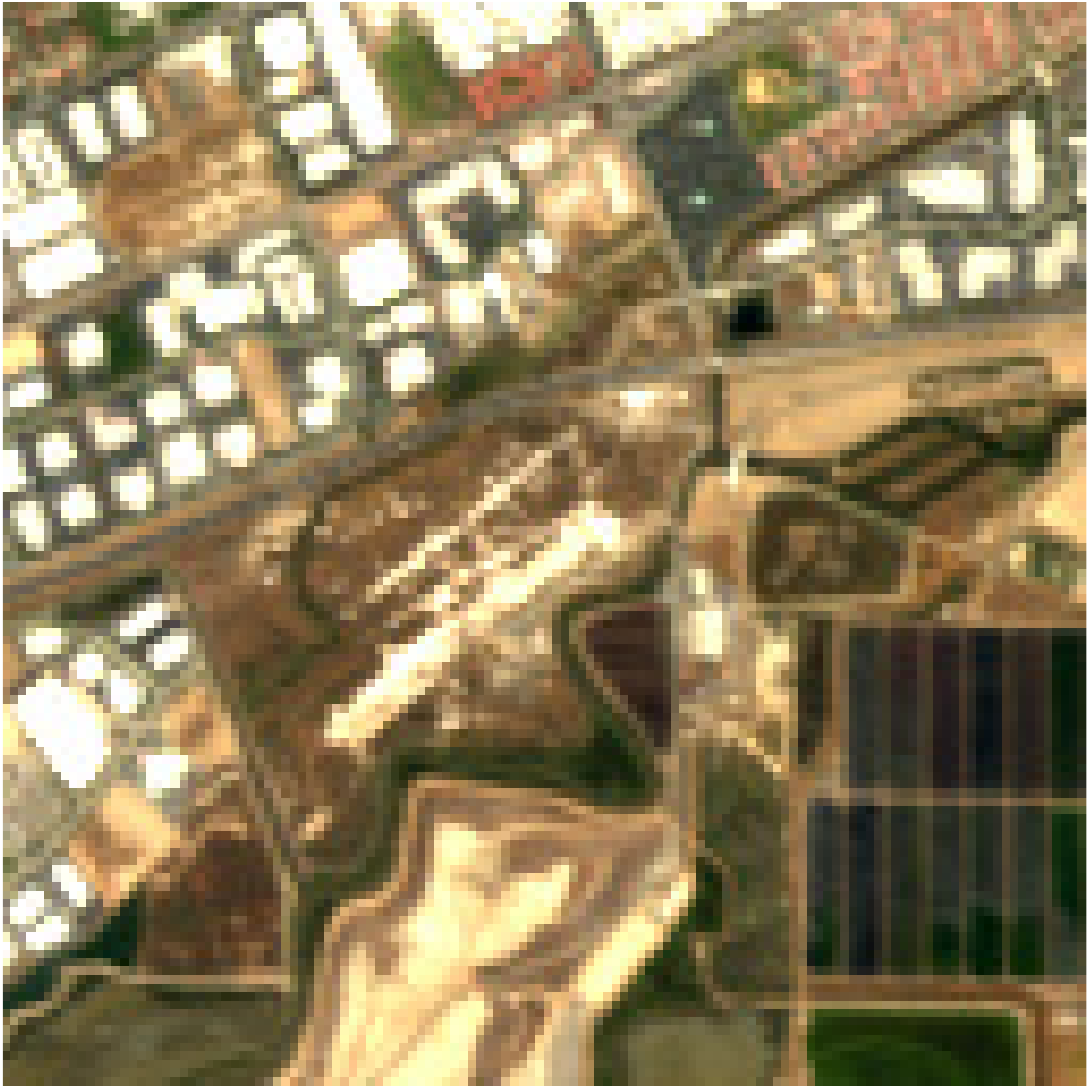}\label{fig:orig}}
				%		{\begin{tikzpicture}
				%			\begin{scope}[spy using outlines={rectangle,magnification=4,connect spies,size=2cm}]
				%			\node[inner sep=0,outer sep=0,anchor=south west] (image) at (0,0) 
				%			{\includegraphics[width=\textwidth]{images/moffett_orig.pdf}};
				%			\spy on (0.5,3) in node (zoom) at (1.9,1.5);
				%			\end{scope}
				%			\end{tikzpicture}}
			\end{minipage}%
			\begin{minipage}{.25\linewidth}
				\centering
				\subfloat[]{\includegraphics[width=\textwidth]{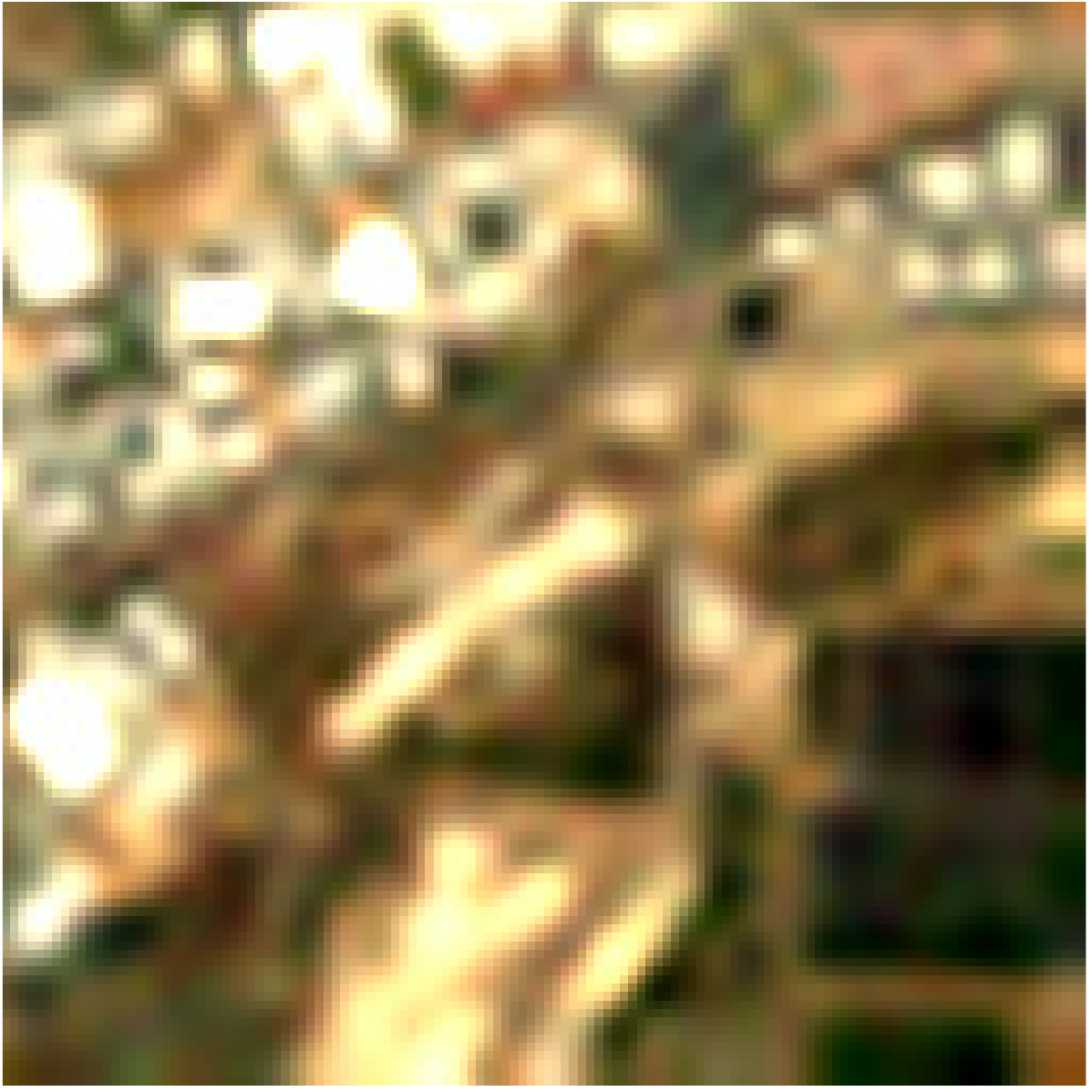}}
			\end{minipage}%
			\begin{minipage}{.25\linewidth}
				\centering
				\subfloat[]{\includegraphics[width=\textwidth]{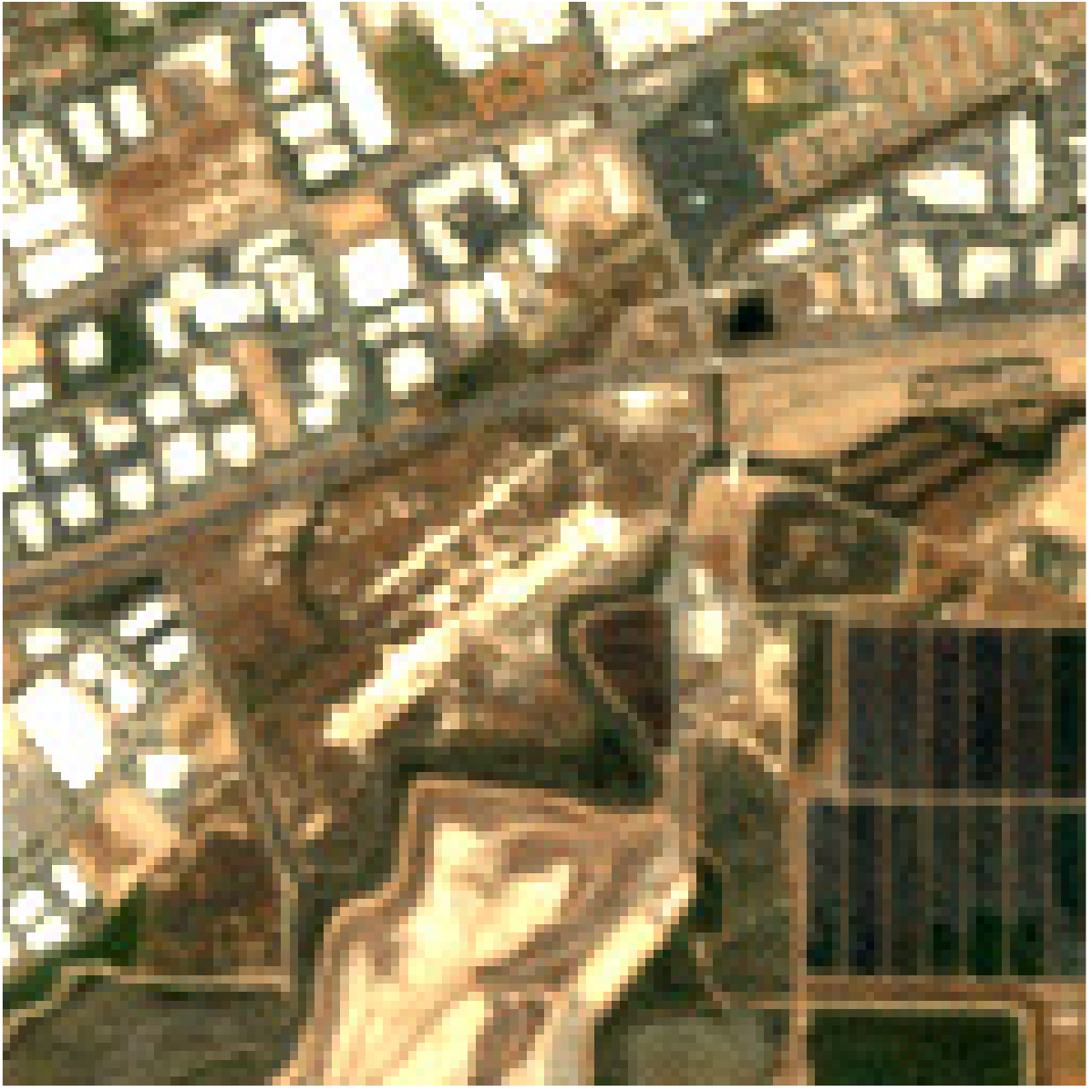}}
				%		{\begin{tikzpicture}
				%			\begin{scope}[spy using outlines={rectangle,magnification=4,connect spies,size=2cm}]
				%			\node[inner sep=0,outer sep=0,anchor=south west] (image) at (0,0) 
				%			{\includegraphics[width=\textwidth]{images/moffett_dic.pdf}};
				%			\spy on (0.5,3) in node (zoom) at (1.9,1.5);
				%			\end{scope}
				%			\end{tikzpicture}}
			\end{minipage}%
			\begin{minipage}{.25\linewidth}
				\centering
				\subfloat[]{\includegraphics[width=\textwidth]{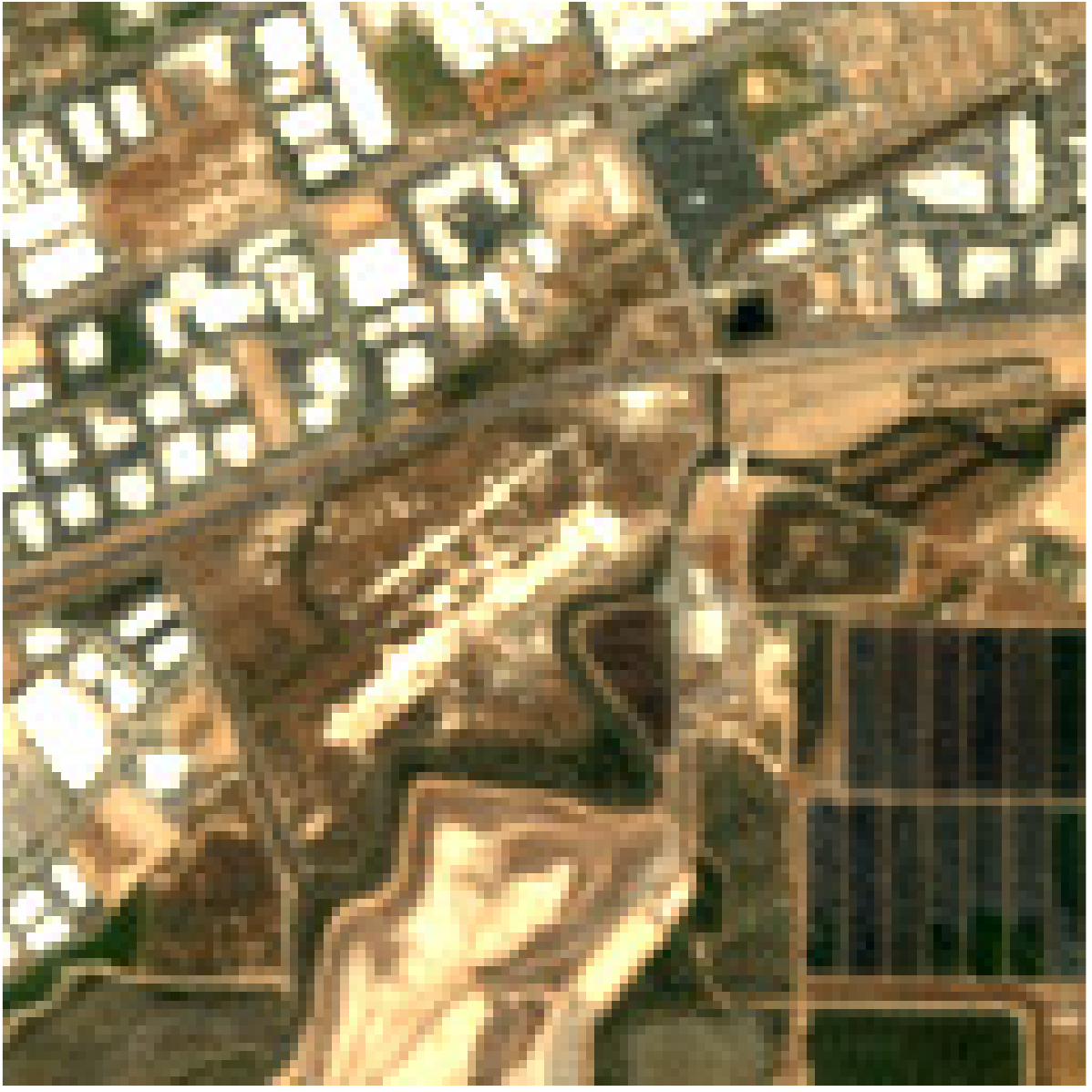}\label{fig:rec}}
				%		{\begin{tikzpicture}
				%			\begin{scope}[spy using outlines={rectangle,magnification=4,connect spies,size=2cm}]
				%			\node[inner sep=0,outer sep=0,anchor=south west] (image) at (0,0) 
				%			{\includegraphics[width=\textwidth]{images/moffett_gmm.pdf}};
				%			\spy on (0.5,3) in node (zoom) at (1.9,1.5);
				%			\end{scope}
				%			\end{tikzpicture}}
			\end{minipage}
			%	\vspace{-5pt}
			\begin{center}
				\begin{minipage}{.33\linewidth}
					\centering
					\subfloat[]{\includegraphics[width=0.97\textwidth]{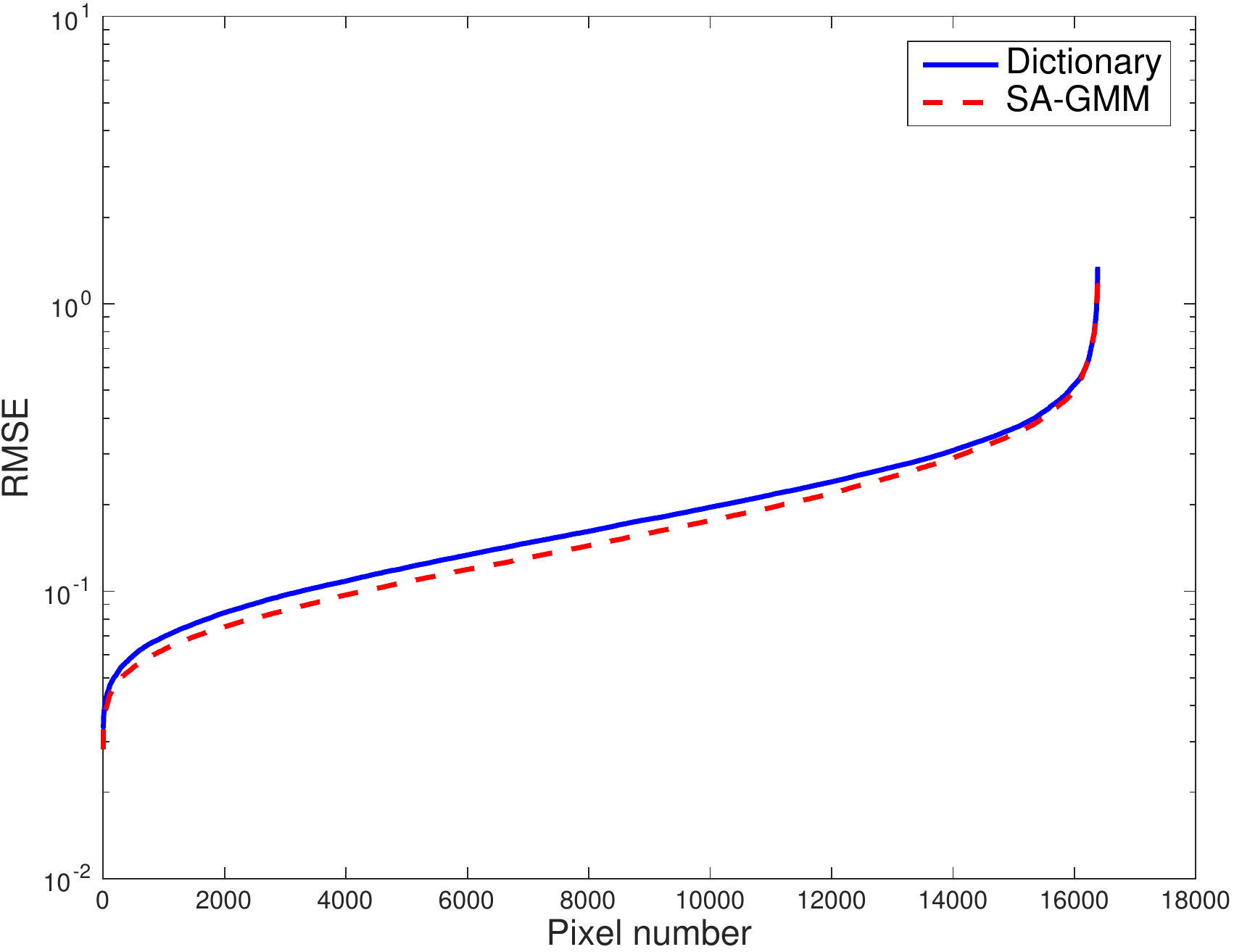}\label{fig:plotnorms2}}
				\end{minipage}%
				\begin{minipage}{.33\linewidth}
					\centering
					\subfloat[]{\includegraphics[width=0.99\textwidth]{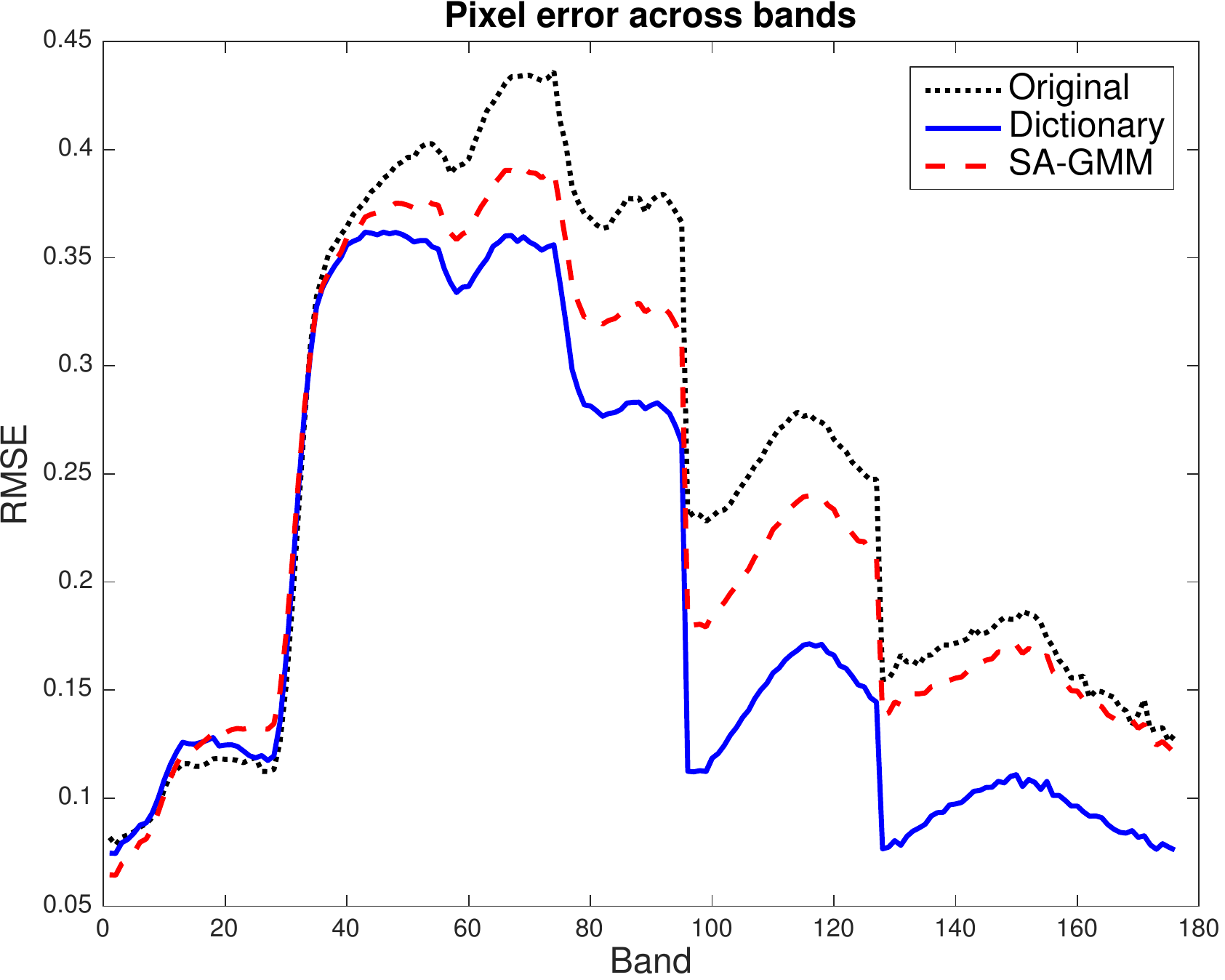}\label{fig:err1}}
				\end{minipage}%
				\begin{minipage}{.33\linewidth}
					\centering
					\subfloat[]{\includegraphics[width=0.99\textwidth]{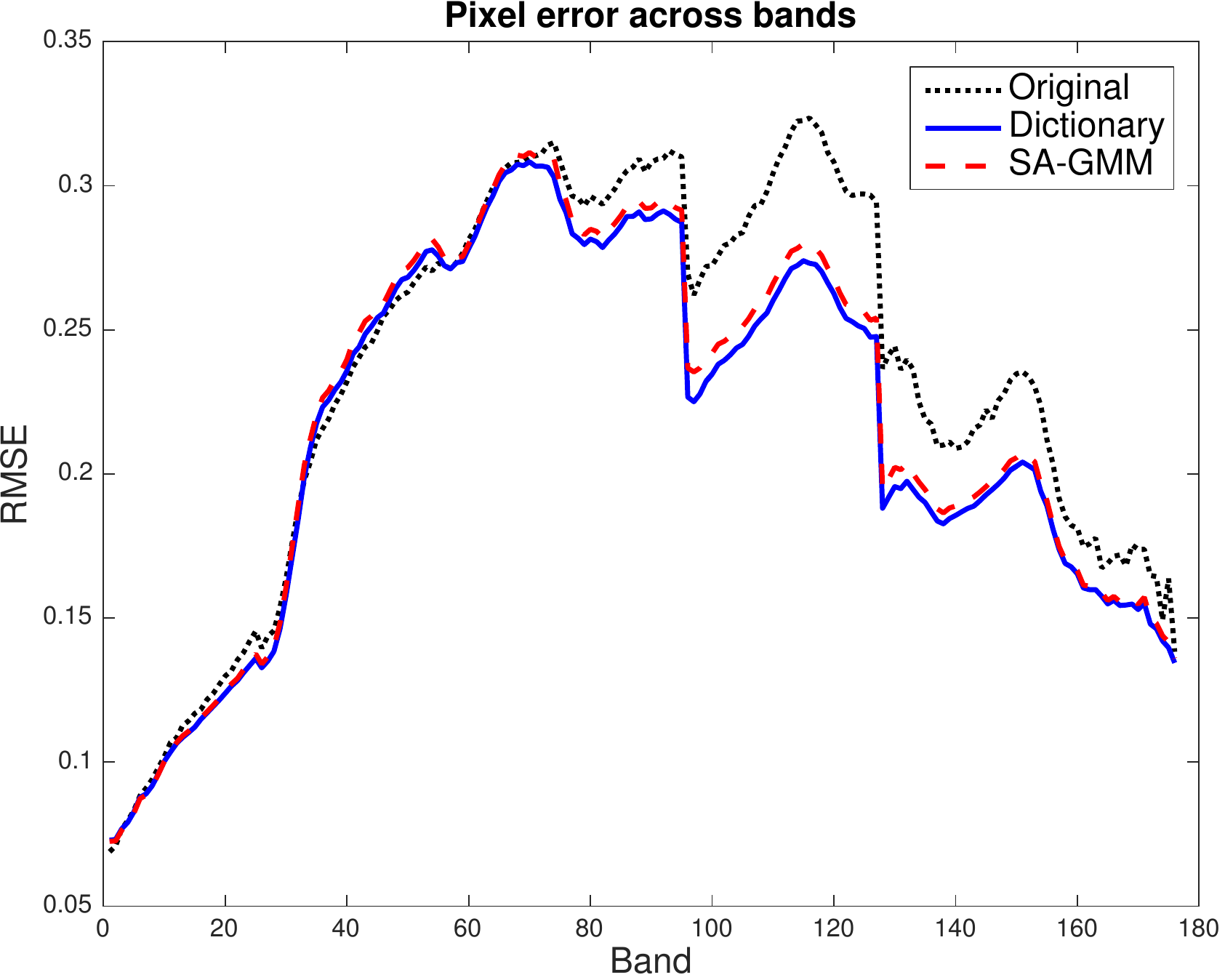}\label{fig:err2}}
				\end{minipage}
			\end{center}
			\vspace{-0.4cm}
			\caption{(a) Original HS bands in false color (20, 11, 4); (b) low-resolution; (c) dictionary-based \cite{qwei}; (d) GMM-based (proposed); (e) sorted pixel errors; (f)--(g) restored pixel value across bands.}
			\label{fig:sharp2}
		\end{figure}
		
		% To start a new column (but not a new page) and help balance the last-page
		% column length use \vfill\pagebreak.
		% -------------------------------------------------------------------------
		%\vfill
		%\pagebreak

		\subsection{Deblurring with Image Pairs}
		In image deblurring from blurred/noisy pairs, we compare our scene-adapted denoiser plugged into ADMM versus BM3D and the state-of-the-art method for non-blind deblurring IDD-BM3D \cite{danielyan}. Our goal is to show that having more data, even if some of it is degraded, leads to better restoration results. Using two classical benchmark images, we applied BM3D for different noise levels and IDD-BM3D for 3 of the blur kernels described in \cite{danielyan} ($\#1,\, \#3,$ and $\#6$). Then for every combination of kernel and noise level (refered to as image pair), we applied the algorithm described in Subsection \ref{ssec:debpair}, which is a simplified version of Algorithm 2. The results in Table~\ref{tab:sharp2} show that, except for the nearly noiseless case, using a pair of images and a fusion approach achieves higher PSNR than denoising only, as would be expected. On the other hand, the fusion approach only outperforms the non-blind deblurring result for low to medium noise levels on the noisy image. This is because higher noise levels lead to poorer models. Also, for strong blur (kernel $\#6$), our fusion approach  only matches the IDD-BM3D method in the nearly noiseless case. Once again, in the fusion experiments, we used GMMs with 20 components, trained from $8 \times 8$ patches extracted from the noisy input image, and parameters $\tau$ and $\rho$ were hand-tuned for best results, over a grid of possible values.

		\begin{table}[htb] % Straight Averaging
			\begin{center}
				\caption{Deblurring image pairs; PSNR. \label{tab:sharp2}}
				\vspace{3pt}
				\resizebox{\columnwidth}{!}{
					\begin{tabular}{c|c||c||c|c|c||c|c|c}
						%				SNR ($\Ymat_m$) & \multicolumn{3}{c||}{50dB} & \multicolumn{3}{c||}{30dB}  & \multicolumn{3}{c||}{50dB} & \multicolumn{3}{c}{30dB}\\ \hline
						%				SNR ($\Ymat_h$) & \multicolumn{3}{c||}{50dB} & \multicolumn{3}{c||}{20dB} & \multicolumn{3}{c||}{50dB} & \multicolumn{3}{c}{20dB} \\ \hline
						\multirow{2}{*}{Dataset} & \multirow{2}{*}{$\sigma_n$} & \multicolumn{1}{c||}{\multirow{2}{*}{Denoising}} & \multicolumn{3}{c||}{Fusion}  & \multicolumn{3}{c}{Deblurring}\\ \cline{4-9}
						& & & $\#1$ & $\#3$ & $\#6$ & $\#1$ & $\#3$ & $\#6$ \\ \hline \hline
						\multirow{5}{*}{Cman} & 5 & 38.29 & 37.93 & 39.12 & 34.73 & \multirow{5}{*}{31.08} & \multirow{5}{*}{31.21} & \multirow{5}{*}{34.71}\\ \cline{2-6}
						& 15 & 31.91 & 32.41 & 33.72 & 33.62 & & &  \\ \cline{2-6}
						& 25 & 29.45 & 30.72 & 31.70 & 32.85 & & & \\ \cline{2-6}
						& 50 & 26.12 & 29.59 & 30.19 & 31.33 & & &  \\ \cline{2-6}
						& 75 & 24.33 & 28.65 & 28.81 & 30.80 & & & \\ \hline
						\multirow{5}{*}{House} & 5 & 39.83 & 39.59 & 40.92 & 37.93 & \multirow{5}{*}{35.62} & \multirow{5}{*}{37.00} & \multirow{5}{*}{37.11} \\ \cline{2-6}
						& 15 & 34.94 & 35.87 & 37.55 & 36.08 & & & \\ \cline{2-6}
						& 25 & 32.86 & 35.07 & 36.77 & 35.48 & & &  \\ \cline{2-6}
						& 50 & 29.69 & 33.81 & 35.38 & 33.72 & & & \\ \cline{2-6}
						& 75 & 27.51 & 32.80 & 34.33 & 32.26 & & & \\ \hline
						\hline
					\end{tabular}
				}
			\end{center}
		\end{table}

		\section{Conclusion}
		\label{sec:conclusion}

		We proposed a plug-and-play (PnP) ADMM algorithm, with a GMM-based denoiser, and showed that it is guaranteed to converge to a global minimum of the underlying cost function. The denoiser is scene-adapted and we showed that it is the proximity operator of a closed, proper, and convex function, allowing the standard convergence guarantees of ADMM to be invoked. We then proposed two applications of the algorithm to data fusion problems: hyperspectral (HS) sharpening and image deblurring from noisy/blurred image pairs.
		
		Experimental results on HS sharpening showed that the proposed method outperforms, on most of the test settings, another state-of-the-art algorithm based on sparse representations on learned dictionaries \cite{qwei}. Furthermore, the proposed scene-adaptation also consistently improves the results over the GMM-based denoiser in \cite{Teodoro2017},  supporting the rationale behind the use of scene-adapted priors.

		As future work, we will consider extending this approach to blind scenarios, where some of the components of the model are unknown, namely the blur operator $\Bmat$. The presence of observations that do not depend on $\Bmat$ (\textit{e.g.}, the noisy sharp image) is expected to facilitate the estimation of this operator.
		
			\section*{Appendix A: Proof of Theorem~\ref{lem:1}}
		
		\begin{proof} We first prove claim (i). Notice that, since $\Wmat$ is symmetric, $\Wmat \by = \nabla_{\by} (\frac{1}{2}\by^T \Wmat \by)$, which is convex because $\Wmat$ is p.s.d. Thus, using the fact that $\lambda_{\mbox{\scriptsize max}} \leq 1 \Rightarrow \|\Wmat\bx - \Wmat \bx'\|_2 \leq \|\bx-\bx'\|_2$, for any $\bx,\bx'\in\mathbb{R}^n$, and invoking Lemma \ref{lem:moreau} with $\varphi(\by) = \frac{1}{2}\by^T \Wmat \by$ (thus $\partial \varphi(\by) = \{\Wmat \by\}$), shows that $\Wmat\by = \mbox{prox}_{\phi}(\by)$ for some convex function $\phi$.
			
			We now prove claim (ii). Start by observing that $\phi$ is indeed convex, because it is the sum of two convex functions: the indicator of a subspace (which is a convex set) and a convex quadratic function (since $\lambda_{\mbox{\scriptsize max}} \leq 1$, the entries in $\Lambda^{-1}-\Imat$ are all non-negative). Function $\phi$ is closed because it is the sum of two closed functions: the indicator of a closed convex set (finite-dimensional subspaces are closed sets) and a continuous function. To show that $\phi$ is proper, simply notice that $\bx^T \bar{\Qmat} ( \bar{\Lambda}^{-1} - \Imat ) \bar{\Qmat}^T \bx \neq +\infty$, for any $\bx\in\mathbb{R}^n$, and $S(\Wmat)\neq \emptyset$.
			
			Using the definition of $\mbox{prox}_{\phi}$ (see \eqref{eq:prox}), we have 
			\begin{eqnarray}
				\lefteqn{\mbox{prox}_{\phi}(\by)  = } \nonumber\\
				& = & \arg\min_{\bx} \|\bx - \by\|_2^2 + 2\, \iota_{S(\Wmat)} (\bx) +\bx^T \bar{\Qmat} ( \bar{\Lambda}^{-1} - \Imat ) \bar{\Qmat}^T \bx \nonumber\\
				& = &\arg\min_{\bx \in S(\Wmat) }  \|\bx - \by\|_2^2 + \bx^T \bar{\Qmat} ( \bar{\Lambda}^{-1} - \Imat ) \bar{\Qmat}^T \bx .\label{eq:ppp}
			\end{eqnarray}
			Any  $\bx \in S(\Wmat)$ can be written as $\bx = \bar\Qmat\bz$, for some $\bz\in\mathbb{R}^{r}$, thus \eqref{eq:ppp} can be written in unrestricted form as 	
			\begin{eqnarray}
				\lefteqn{\mbox{prox}_{\phi}(\by)  = } \nonumber\\ 
				& = & \bar\Qmat \;\arg\min_{\bz \in \mathbb{R}^{r} } \, \| \bar{\Qmat}\bz - \by\|_2^2 + \bz^T \bar{\Qmat}^T\bar{\Qmat}  ( \bar{\Lambda}^{-1} - \Imat )  \bar{\Qmat}^T\bar{\Qmat} \bz \nonumber\\
				& = & \bar\Qmat \; \arg\min_{\bz \in \mathbb{R}^{r} } \,  \bz^T \bar{\Lambda}^{-1} \bz\  -2\, \bz^T \bar{\Qmat}^T \by\\
				& = & \bar\Qmat \; \bar{\Lambda} \; \bar{\Qmat}^T\, \by
			\end{eqnarray}
			Finally, the fact that $\bar\Qmat \bar{\Lambda}  \bar{\Qmat}^T \by = \Wmat \by$ concludes the proof.
		\end{proof}
	
		\section*{Appendix B : Proof of Lemma~\ref{lem:22}}
		
%		Since the eigenvalues of $\Imat \otimes \Wmat$ are the same as those of $\Wmat$, and symmetry of $\Wmat$ implies symmetry  of $\Imat \otimes \Wmat$, the rest of the proof focuses on $\Wmat$. 
		\begin{proof}
		Each $\bF_i$ (see \eqref{eq:linMMSE}) is a convex combination of symmetric matrices, thus also symmetric; consequently, $\Wmat$ is also symmetric (see \eqref{eq:defW}). Consider the eigendecomposition of ${\bf C}_j = \bU_j^T \bSigma_j \bU_j$,  where $\bSigma_j = \mbox{diag}(\varsigma_1^j,...,\varsigma_{n_p}^j)$ contains its eigenvalues,  in non-increasing order. Then,
		\begin{equation}
		{\bf C}_j \Bigl(  {\bf C}_j + \sigma^2 \; \Imat \Bigr)^{-1} =  \bU_j^T \, \bSigma_j \left( \bSigma_j + \sigma^2 \; \Imat \right)^{-1} \bU_j,
		\end{equation}
		where $\bSigma_j \left( \bSigma_j + \sigma^2\; \Imat \right)^{-1}$ is a diagonal matrix, and thus the eigenvalues  of ${\bf C}_j (  {\bf C}_j + \sigma^2\; \Imat )^{-1}$ are in 
		\[
		\bigl[ \varsigma_{n_p}^j / (\varsigma_{n_p}^j + \sigma^2),\;  \varsigma_1^j/ (\varsigma_1^j + \sigma^2) \bigr] \in [0,1),  
		\]
		since $\varsigma_{n_p}^j \geq 0$ ($\Cmat_j$ is positive semi-definite) and $\sigma^2 > 0$.  From \eqref{eq:linMMSE}, each $\bF_i$ is a convex combination of matrices, each of which with eigenvalues in $[0,1)$. Weyl's inequality \cite{Bhatia} implies that the eigenvalues of a convex combination of symmetric matrices is bounded below (above) by the same convex combination of the smallest (largest) eigenvalues of those matrices. The eigenvalues of $\bF_i$ are thus all in $[0,1)$, \textit{i.e.}, $\bF_i$ is positive semi-definite and $\|\bF_i\|_2 < 1$.
		
		Now, as in \cite{Sulam},  partition the set of patches into a collection of subsets of non-overlapping patches: $\{\Omega_j \subset \{1,..,N\}, \; j=1,...,n_p \}$ (the number of subsets of non-overlapping patches equals the patch size, due to unit stride and periodic boundaries). Using this partition, $\Wmat$ can be written as
		\begin{align}
		\Wmat =  \frac{1}{n_p} \sum_{j=1}^{n_p} \underbrace{\sum_{k \in \Omega_j} \bP_k^T \bF_k \bP_k}_{\Amat_j} = \frac{1}{n_p} \sum_{j=1}^{n_p} \Amat_j. \label{eq:noover}
		\end{align}
		Since the patches in $\Omega_j$ are disjoint, there is a permutation of the image pixels that allows writing $\Amat_j$ as a block-diagonal matrix, where the blocks are the $\bF_k$ matrices, with $k \in \Omega_j$. Because the set of eigenvalues of a block-diagonal matrix is the union of the sets of eigenvalues of its blocks, the eigenvalues of each $\Amat_j$ are bounded similarly as those of the $\bF_k$, thus in $[0,1)$. Finally, using Weyl's inequality again, the eigenvalues of $\Wmat$ are bounded above (below) by the average of the largest (smallest) eigenvalues of the $\Amat_j$, thus  in $[0,1)$. \end{proof}

		\ifCLASSOPTIONcaptionsoff
		\newpage
		\fi

		% References should be produced using the bibtex program from suitable
		% BiBTeX files (here: strings, refs, manuals). The IEEEbib.bst bibliography
		% style file from IEEE produces unsorted bibliography list.
		% -------------------------------------------------------------------------
		%\bibliographystyle{IEEEbib}
		%\bibliography{strings,refs}
		% References should be produced using the bibtex program from suitable
		% BiBTeX files (here: strings, refs, manuals). The IEEEbib.bst bibliography
		% style file from IEEE produces unsorted bibliography list.
		% -------------------------------------------------------------------------
		\bibliographystyle{IEEEtran}
%		\bibliography{refs}
		
%		\bibliographystyle{IEEEbib}
		\bibliography{refs}

\end{document}